\documentclass[12pt]{colt2024}% Include author names

% The following packages will be automatically loaded:
% amsmath, amssymb, natbib, graphicx, url, algorithm2e

%\usepackage{amsthm}
\usepackage[margin=1in]{geometry}

\usepackage{xcolor}
\colorlet{darkgreen}{green!50!black}
\usepackage{tikz}
\usetikzlibrary{3d}
\usepackage{framed}
\usetikzlibrary{matrix,arrows,snakes,shapes,trees,positioning}

\tikzset{
    %Define standard arrow tip
    >=latex,
    %Define style for boxes
    punkt/.style={
           rectangle,
           rounded corners,
           draw=black, very thick,
           text width=6.5em,
           minimum height=2em,
           text centered},
    % Define arrow style
    pil/.style={
           ->,
           double,
           thick,
           shorten <=2pt,
           shorten >=2pt,},
    punkti/.style={
           rectangle,
           rounded corners,
           draw=black, very thick,
           text width=26.5em,
           minimum height=2em,
           text centered},
    punktii/.style={
           rectangle,
           rounded corners,
           draw=black, very thick,
           text width=18.5em,
           minimum height=2em,
           text centered}
}

\usepackage{amsmath,amsfonts,amssymb,mathtools}

\usepackage{cleveref}
\newtheorem{open}{Open Question}
\newtheorem{claim}{Claim}
\newtheorem*{definition*}{Definition}
\newtheorem*{theorem*}{Theorem}
\newtheorem*{open*}{Open Question}

\renewcommand{\H}{\mathcal{H}}
\newcommand{\C}{\mathcal{C}}
\newcommand{\X}{\mathcal{X}}
\newcommand{\F}{\mathcal{F}}

\newcommand{\DS}{\mathtt{DS}}

\newcommand{\case}[4]{ \begin{cases}
    #1\quad &#2\\
    #3\quad &#4\end{cases}}
\newcommand{\Y}{\mathcal{Y}}
\newcommand{\Z}{\mathcal{Z}}
\renewcommand{\H}{\mathcal{H}}
\newcommand{\G}{\mathcal{G}}
\newcommand{\Gk}{\mathtt{G}_k}
\newcommand{\D}{\mathcal{D}}
\newcommand{\A}{\mathcal{A}}
\newcommand{\U}{\mathcal{U}}
\newcommand{\V}{\mathcal{V}}
\newcommand{\dis}{\prec}
\newcommand{\eps}{\varepsilon}
\newcommand{\E}{\mathop{\mathbb{E}}}

\newcommand{\supp}{\text{supp}}

\title[List Sample Compression and Uniform Convergence]{List Sample Compression and Uniform Convergence}
\usepackage{times}
% Use \Name{Author Name} to specify the name.
% If the surname contains spaces, enclose the surname
% in braces, e.g. \Name{John {Smith Jones}} similarly
% if the name has a "von" part, e.g \Name{Jane {de Winter}}.
% If the first letter in the forenames is a diacritic
% enclose the diacritic in braces, e.g. \Name{{\'E}louise Smith}

% Two authors with the same address
% \coltauthor{\Name{Author Name1} \Email{abc@sample.com}\and
%  \Name{Author Name2} \Email{xyz@sample.com}\\
%  \addr Address}

% Three or more authors with the same address:
% \coltauthor{\Name{Author Name1} \Email{an1@sample.com}\\
%  \Name{Author Name2} \Email{an2@sample.com}\\
%  \Name{Author Name3} \Email{an3@sample.com}\\
%  \addr Address}

% Authors with different addresses:
\coltauthor{%
 \Name{Steve Hanneke}\Email{steve.hanneke@gmail.com}\\
 \addr Department of Computer Science, Purdue University
 \AND
 \Name{Shay Moran}\Email{smoran@technion.ac.il}\\
 \addr Departments of Mathematics, Computer Science, Data and Decision Sciences, Technion, and Google Research
 \AND
 \Name{Tom Waknine}\Email{tom.waknine@campus.technion.ac.il}\\
 \addr Department of Mathematics, Technion
}

% \author{Zachary Chase\footnote{Department of Mathematics, Technion. Supported by the European Union (ERC, GENERALIZATION, 101039692).} \and Bogdan Chornomaz\footnote{Department of Mathematics, Technion. Supported by the European Union (ERC, GENERALIZATION, 101039692).} \and Shay Moran\footnote{Departments of Mathematics, Computer Science, and Data and Decision Sciences, Technion and Google Research.
% Robert J.\ Shillman Fellow; supported by ISF grant 1225/20, by BSF grant 2018385, by an Azrieli Faculty Fellowship, by Israel PBC-VATAT, by the Technion Center for Machine Learning and Intelligent Systems (MLIS), and by the European Union (ERC, GENERALIZATION, 101039692). Views and opinions expressed are however those of the author(s) only and do not necessarily reflect those of the European Union or the European Research Council Executive Agency. Neither the European Union nor the granting authority can be held responsible for them.
% } \and Amir Yehudayoff}

\begin{document}
\maketitle

\begin{abstract}
List learning is a variant of supervised classification where the learner outputs multiple plausible labels for each instance rather than just one. 
We investigate classical principles related to generalization within the context of list learning. Our primary goal is to determine whether classical principles in the PAC setting retain their applicability in the domain of list PAC learning. We focus on uniform convergence (which is the basis of Empirical Risk Minimization) and on sample compression (which is a powerful manifestation of Occam's Razor). In classical PAC learning, both uniform convergence and sample compression satisfy a form of `completeness': whenever a class is learnable, it can also be learned by a learning rule that adheres to these principles. We ask whether the same completeness holds true in the list learning setting.

We show that uniform convergence remains equivalent to learnability in the list PAC learning setting. In contrast, our findings reveal surprising results regarding sample compression: we prove that when the label space is $\Y=\{0,1,2\}$, then there are 2-list-learnable classes that cannot be compressed. This refutes the list version of the sample compression conjecture by \cite{littlestone:86}. We prove an even stronger impossibility result, showing that there are $2$-list-learnable classes that cannot be compressed even when the reconstructed function can work with lists of arbitrarily large size.
We prove a similar result for (1-list) PAC learnable classes when the label space is unbounded. This generalizes a recent result by \cite{pabbaraju:23}. 

In our impossibility results on sample compression, we employ direct-sum arguments which might be of independent interest. In fact, these arguments raise natural open questions that we leave for future research. Our findings regarding uniform convergence rely on a coding theoretic perspective.

% We show that empirical risk minimization and uniform convergence retain their applicability in the list PAC learning setting. In contrast, our findings reveal surprising results regarding sample compression: we prove that there are 2-list-learnable classes that cannot be compressed, even when the reconstructed function can work with lists of arbitrary size. This refutes the list variant of Littlestone and Warmuth's conjecture~\cite{littlestone:86}.
% We prove a similar result for PAC learnable classes when the label space is unbounded.
% This generalizes a recent result by Pabbaraju~\cite{pabbaraju:23}. 
\end{abstract}

\section{Introduction}
List learning is a natural generalization of supervised classification, in which, instead of predicting the correct label, the learner outputs a small list of labels, one of which should be the correct one. This approach can be viewed as giving the learner more than one guess at the correct label. 

There are many settings in which one may prefer the list learning approach to the classical one. For example, recommendation systems often suggest a short list of products to users, with the hope that the customer will be interested in one of them (see \Cref{fig:list recommendation}).
\begin{figure}
    \centering
    \includegraphics[scale=0.5]{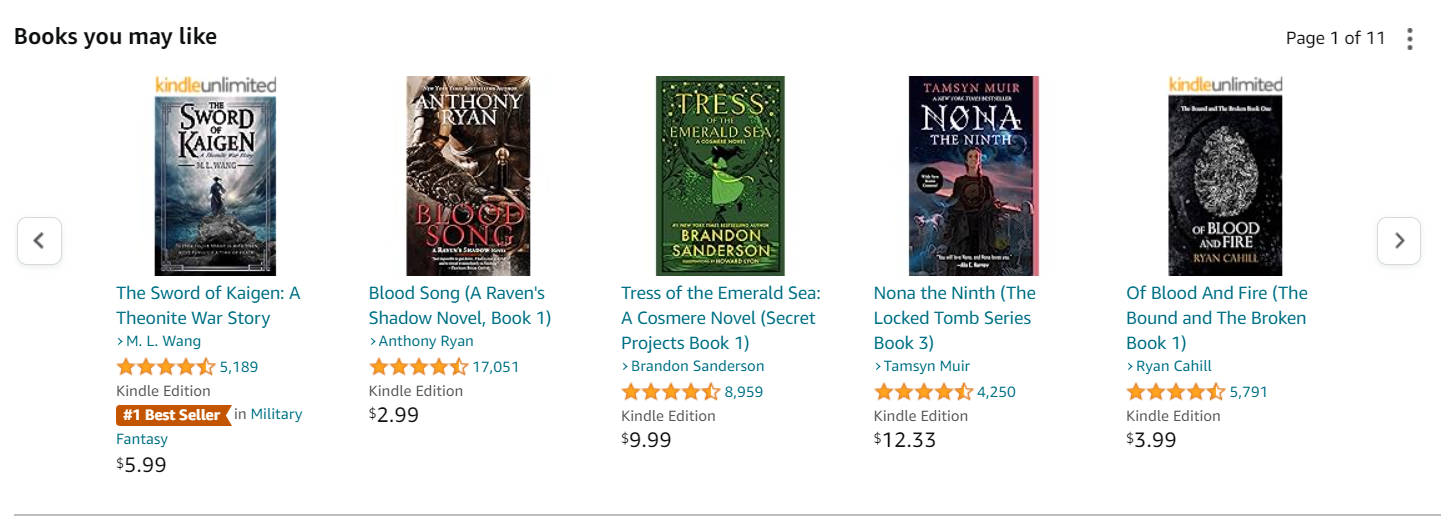}
    \caption{Amazon recommendation system gives their users a short list of books based on their past reading, aiming that one of those books will capture their interest. }
    \label{fig:list recommendation}
\end{figure}
 Another example is the top-$k$ loss function in which the model gets $k$ guesses for each sample. This loss function is often used in ML competitions and can be seen as a variant of list learning. Additionally, list learning addresses label ambiguity; for example, in computer vision recognition problems, it is often impossible to determine if a certain image is of a pond or a river. As a result, training a model for such problems by penalizing it for every mistake can be too restrictive. However, using a top-$k$ approach seems like a reasonable alternative. This approach has been studied in recent works such as  
\citet*{Lapin2015LossFF} and \citet*{YAN2018479}, which demonstrate its usefulness in certain problems.

List learning has also found applications in theoretical machine learning. For example in \citet*{BrukhimNataly2022ACoM} it was an essential part of establishing the equivalence between finite Daniely-Shwartz (DS) dimension and multiclass learnability. Consequently, list learning has been studied in several recent works in learning theory. For example, \citet*{Charikar2022ACO} characterized list PAC learnability by using a list variant of the DS dimension, and \citet*{Moran:23OnlineList} characterized list online learnability using a list variant of the Littlestone dimension. Another recent application of list learning is in the realm of multiclass boosting; \citet*{Brukhim:23} employed it to devise the first boosting algorithm whose sample complexity is independent of the label space's size.
%Tom: Maybe add some more important works in list learning?

\vspace{1mm}

A natural question that has not yet been systematically addressed is the identification of fundamental principles in list PAC learning. In the binary case, PAC learning is guided by fundamental algorithmic principles, notably Empirical Risk Minimization, and Occam's Razor principles such as compression bounds. In this work, we ask which of these foundational principles remains applicable in the domain of list learning.
% \begin{center}
% \begin{framed}
%     \underline{\bf Guiding Questions}\\
%     \vspace{2mm}
%     What are the basic principles of generalization in list PAC Learning?\\
%     \vspace{2mm}
%     Can every list learnable class be learned by a compression scheme?
%     Does every l
%     % Do principles such as Empirical Risk Minimization and Sample Compression, which are complete\footnotemark\hspace{0mm} in classical PAC learning, retain their completeness in the list setting?
% \end{framed}
% \footnotetext{Here, 'complete' signifies that for any PAC learnable concept class, there exists a learning algorithm conforming to these principles that learns it.}
% \end{center}

% In this work, we study this question in the context of empirical risk minimization and sample compression schemes, 
% which is a strong manifestation of the Occam's Razor principle.
% In fact in the classical PAC setting learnability is equivalent to the existence of finite sample compression schemes~\cite{moranyehudayoof:16,david2016statistical}.

\subsection{Our Contribution}
In this section we summarize our main results. It is based on natural adaptations of basic learning theoretic definitions to the list setting. These definitions are fully stated in \Cref{sec:basicdefs}.

\subsubsection{Sample Compression}
\paragraph{Occam's Razor.}
Occam's razor is a philosophical principle applied broadly across disciplines, including machine learning. It suggests that among competing hypotheses, the simplest one should be selected. In machine learning, this principle is often quantified in terms of the number of bits required to encode an hypothesis, thereby serving as a guideline for preferring simpler models. This concept forms the basis of a general approach in machine learning, where simplicity is directly linked to the efficiency and effectiveness of learning algorithms.

A more refined manifestation of Occam's razor in machine learning is evident in sample compression schemes. These schemes go beyond merely considering the bit-size of an hypothesis. They involve an additional use of a small, representative subset of input examples to encode hypotheses. This approach is a more nuanced application of the principle, allowing for both the simplification of data and the preservation of its essential characteristics.
In fact, while the classical interpretation of Occam's razor, based on bit-encoding, is sound --- implying generalization --- it lacks completeness. This means that there exist learnable classes whose learnability cannot be demonstrated solely through bit-encoding (see \Cref{fig:svm})\footnote{Figures \ref{fig:svm} and \ref{fig:comp} from \citet*{Alon2021ATO}, used with permission.\label{foot:figures}}. In contrast, sample compression schemes offer a more comprehensive manifestation of Occam's razor: every PAC learnable class can be effectively learned by a sample compression algorithm~\citep*{david2016statistical}.

%\footnotetext{}

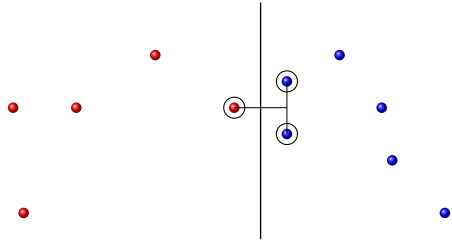
\begin{figure}
\centering
\textbf{\small Support Vector Machine}\par\bigskip   
\begin{tikzpicture}[scale=0.7] 
%\shade[shading=ball, ball color=red]  (8,0.5) circle (.1);
\shade[shading=ball, ball color=red]  (4.8,4) circle (.1);
\shade[shading=ball, ball color=red]  (5,2) circle (.1);
%\shade[shading=ball, ball color=red]  (5,1) circle (.1);
\shade[shading=ball, ball color=red]  (6,4) circle (.1);
\shade[shading=ball, ball color=red]  (9,4) circle (.1);%
\shade[shading=ball, ball color=red]  (7.5,5) circle (.1);
\shade[shading=ball, ball color=blue]  (11.8,4) circle (.1);
\shade[shading=ball, ball color=blue]  (13,2) circle (.1);
%\shade[shading=ball, ball color=blue]  (12,1) circle (.1);
\shade[shading=ball, ball color=blue]  (10,3.5) circle (.1);%
\shade[shading=ball, ball color=blue]  (10,4.5) circle (.1);%
\shade[shading=ball, ball color=blue]  (12,3) circle (.1);
\shade[shading=ball, ball color=blue]  (11,5) circle (.1);
%\shade[shading=ball, ball color=blue]  (11,7) circle (.1);

\v<2->{\draw  (9,4) circle (.2);}
%\v<2->{\draw  (7.5,5) circle (.2);}
%\v<2->{\draw  (8,0) circle (.2);}

\v<2->{\draw  (10,3.5) circle (.2);}
\v<2->{\draw  (10,4.5) circle (.2);}
%\v<2->{\draw  (11,7) circle (.2);}
%\v<2->{\draw  (12,1) circle (.2);}

\v<3->{\path[draw, line width=0.5pt] (9.5,1.5) -- (9.5,6);}
\v<3->{\path[draw, line width=0.1pt] (9,4) -- (10,4);}
\v<3->{\path[draw, line width=0.1pt] (10,3.5) -- (10,4.5);}
\end{tikzpicture}
\caption{\small Support Vector Machine (SVM) as sample compression:
The SVM algorithm identifies a maximum-margin separating hyperplane, which is defined by a subsample of $d+1$ support points. This exemplifies a concept class which cannot be learned with hypotheses specified by finite bit-encoding, yet for which there is an algorithm which learns effectively using hypotheses specified using a bounded number of input-examples.
}
\label{fig:svm}
\end{figure}

\paragraph{List Sample Compression Schemes.}
Sample compression schemes were initially developed by \cite{littlestone:86} for the purpose of proving generalization bounds; however, they can also be interpreted as a standalone mathematical model for \emph{data simplification}. Sample compression resembles a scientist who collects extensive experimental data but then selects only a crucial, representative subset from it. From this subset, a concise hypothesis is formulated that effectively explains the entire dataset. This analogy highlights the essence of sample compression: distilling a complex dataset into a simpler form, while maintaining the capacity to accurately explain the patterns and phenomena of the entire original dataset.

More formally, a compression scheme consists of a pair of functions: a \emph{compressor} and a \emph{reconstructor}. The compressor gets an input sample of labeled examples $S$ and uses it to construct a small subsequence of labeled examples $S'$ which she sends to the reconstructor. The reconstructor uses $S'$ to generate an concept $h$. The goal is that $h$ will be consistent with the original sample~$S$, even on examples that did not appear in $S'$ (see~\Cref{fig:comp})\footref{foot:figures}.

We extend the notion of sample compression to the list setting naturally by changing the concept $h$ outputted by the reconstructor to be a $k$-list map. Now the goal is that $h$ will be consistent with $S$ in the sense that for any example $x$, its label will be one of the $k$ elements of the list $h(x)$. In more detail, a concept class $\C$ is $k$-list compressible if there exists a $k$-list sample compression scheme with a finite size
such that whenever the input sample $S$ is realizable\footnote{I.e.\ there exists $c\in \C$ such that $c(x_i)=y_i$ for all $(x_i,y_i)\in S$} by $\C$ then the reconstructed $k$-list concept $h$ satisfies $y_i\in h(x_i)$ for all $(x_i,y_i)\in S$. We refer to Section~\ref{sec:defcomp} for more details.
 
% \textcolor{red}{Shay: TODO change the definition so that it is more similar to the COLTmain paper. Same remark for the next paragraph.}

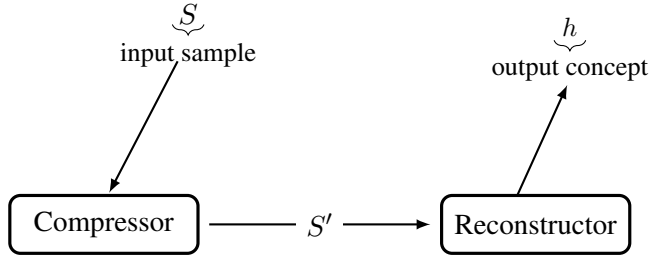
\begin{figure}
\centering
    \textbf{A pictorial definition of a sample compression scheme}\par\bigskip   
\begin{tikzpicture}[scale=0.7]

\node[inner sep=0pt,punkt] (bob) at (7.1,0) {Reconstructor};

\node[inner sep=0pt,punkt] (alice) at (-1.1,0){Compressor};

\draw[->,thick] (0.9,0) -- (5.1,0)
    node[midway,fill=white] {$S'$};

%\draw [decorate,decoration={brace,amplitude=4pt,mirror},yshift=0pt]
%(2.5,-0.2) -- (3.4,-0.2) node [black,midway,yshift=-0.5cm]
%{\small{ a subsample and an additional binary string}};

\node[inner sep=0pt] (sample) at (0.5,4){$S$};
\draw[->,thick] (0.3,3.1) -- (-1,0.6);

\draw [decorate,decoration={brace,amplitude=4pt,mirror},yshift=0pt]
(0.2,3.8) -- (0.8,3.8) node [black,midway,yshift=-0.4cm]
{\small input sample};

\node[inner sep=0pt] (reconstruction) at (7.75,3.75){\small $h$};

\draw [decorate,decoration={brace,amplitude=4pt,mirror},yshift=0pt]
(7.45,3.5) -- (8.05,3.5) node [black,midway,yshift=-0.4cm]
{\small output concept};

\draw[->,thick] (6.75,0.55) -- (7.7,2.65);
\end{tikzpicture}
\caption{An illustration of a sample compression scheme: $S'$ is a subsample of $S$; the output concept $h$ should be consistent with the entire input sample $S$. In the context of $k$-list sample compression, the concept $h$ assigns a list of $k$ possible labels to each data point $x$. The objective is refined such that for every input example $(x_i,y_i)$, the actual label $y_i$ is included within the list provided by $h(x_i)$.}
\label{fig:comp}
\end{figure}

In their original paper \cite{littlestone:86} showed that sample compression schemes are PAC learners, and conjectured that every PAC learnable class admits a sample compression scheme. \cite{moranyehudayoof:16} confirmed this conjecture in the binary setting, and \cite{david2016statistical} further showed that any PAC learnable class over a finite label space admits a sample compression scheme. It is therefore natural to ask whether the analogue  of \cite{littlestone:86} conjecture holds in the list setting:
\begin{center}
    \emph{Does every $k$-list learnable class has a finite $k$-list sample compression scheme?}
\end{center}
Our first result provides a negative answer in the simplest list PAC learning setting:  $2$-list learning a $3$-label space:
\begin{framed}
\vspace{-5mm}
\begin{theorem}[List-Learnability vs.\ List-Compressibility]\label{thm:2-comp}
There exists a concept class $\C$ over the label space $\Y=\{0,1,2\}$ such that:
\begin{itemize}
    \item $\C$ is $2$-list PAC learnable.
    \item $\C$ has no finite $2$-list sample compression scheme.
\end{itemize}    
\end{theorem}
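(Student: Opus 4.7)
The plan is to construct $\C$ via a direct sum of finite ``obstruction gadgets'' $\C_d$ that are uniformly 2-list learnable but have 2-list compression size growing with $d$. Concretely, for each $d \geq 1$ I would construct a finite class $\C_d$ over a finite domain $\X_d$ with label space $\{0,1,2\}$ satisfying: (a) $\C_d$ is 2-list PAC learnable, with a uniform bound on its list-DS dimension (equivalently, a uniform sample-complexity bound, by the Charikar--Pabbaraju characterization), independent of $d$; and (b) every 2-list sample compression scheme for $\C_d$ has compression size at least $d$. Having produced the $\C_d$, I take disjoint domains $\X = \bigsqcup_{d\geq 1} \X_d$ and define $\C$ by extending each $c\in \C_d$ to all of $\X$ via the default label $0$ outside $\X_d$.

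The central observation driving the lower bound (b) is that when $|\Y|=3$, a 2-list hypothesis is equivalent to choosing a single \emph{forbidden} label at each point: a map $\bar h:\X_d\to\{0,1,2\}$, with consistency at $(x,y)$ meaning $\bar h(x)\neq y$. To obstruct any compression scheme of size at most $d-1$, I would design $\C_d$ so that for every labeled subsample $S$ with $|S|\leq d-1$, the induced version space $V_S=\{c\in\C_d : c|_S \text{ agrees with } S\}$ contains three concepts $c_0,c_1,c_2$ together with a single ``confusion point'' $y\in\X_d$ where $\{c_0(y),c_1(y),c_2(y)\}=\{0,1,2\}$. Then whatever forbidden-label function $\bar h_S$ the reconstructor produces from $S$, the value $\bar h_S(y)$ must coincide with $c_j(y)$ for some $j\in\{0,1,2\}$, violating consistency with $c_j$. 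Meanwhile the DS dimension is kept bounded by engineering the global structure of $\C_d$ to admit an explicit combinatorial upper bound on the size of any shattered set.

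With (a) and (b) in hand, the direct sum step is clean. Because the $\X_d$ are pairwise disjoint, a single sample $(x,y)$ immediately reveals the active index $d$ from the coordinate $x$; a 2-list learner for $\C$ can therefore identify the relevant component (or, absent any in-component samples, rely on the default label), delegate to the learner for that $\C_d$, and output the trivial list $\{0,1\}$ on every point outside $\X_d$ (this contains the default label $0$). Standard arguments show $\C$ is 2-list PAC learnable with sample complexity matching the uniform bound from (a). On the other hand, any hypothetical 2-list compression scheme for $\C$ of finite size $k$ restricts to a 2-list compression scheme of size at most $k$ for each individual $\C_d$; choosing $d=k+1$ contradicts (b), so $\C$ admits no finite 2-list compression scheme.

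The principal obstacle is establishing (b) together with (a) for the $\C_d$. The requirements pull in opposite directions: a bounded list-DS dimension limits how rich $\C_d$ may be, whereas the ``label-diversity-after-restriction'' property demands enough concepts that no small subsample can rule out all three labels anywhere. I expect to resolve this tension with a carefully engineered finite gadget -- e.g., concepts indexed by combinatorial designs or low-complexity algebraic objects over $\mathbb{F}_3$ -- combined with a pigeonhole / counting argument over the ${\leq}(d-1)$-sized subsamples that produces the required three-concept witness triples, and a separate combinatorial argument bounding the DS dimension of the gadget uniformly in $d$.
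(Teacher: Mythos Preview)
Your reduction to finite gadgets is sound, but the proposal has a genuine gap: the gadgets $\C_d$ are never actually constructed, and constructing them is the entire content of the theorem. You identify this yourself as ``the principal obstacle,'' yet the suggested remedies (combinatorial designs, algebraic objects over $\mathbb{F}_3$) are speculative and come with no argument that they simultaneously achieve bounded $2$-list DS dimension and unbounded $2$-list compression size. Everything else in your plan is routine once such gadgets exist; without them there is no proof.

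There is also a more concrete error. The criterion you propose for (b)---that every realizable subsample $S'$ of size at most $d-1$ leaves a ``confusion point'' where the version space attains all three labels---does \emph{not} imply compression size $\geq d$. Your argument shows only that for each fixed $S'$ the reconstruction $\rho(S')$ fails to cover \emph{some} concept in $V_{S'}$; it does not prevent the compressor, who sees the full sample, from selecting a different subsample for that concept. A direct counterexample: take $\C=\{0,1,2\}^{\{a,b\}}$. Every $S'$ of size $\leq 1$ has a confusion point, yet a $2$-list compression of size $1$ exists---e.g.\ let $\rho(\emptyset)$ forbid $(2,2)$, $\rho(\{(a,2)\})$ forbid $(0,1)$, $\rho(\{(b,1)\})$ forbid $(0,2)$, $\rho(\{(b,2)\})$ forbid $(2,0)$, and one checks that each of the nine concepts is covered by one of its admissible one-point subsamples. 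So even a successful gadget construction satisfying your stated criterion would not close the lower bound.

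The paper proceeds entirely differently, and in particular does not build finite gadgets. It starts from the partial binary concept class $\F$ of Alon, Hanneke, Holzman, and Moran, which has bounded VC dimension but whose every disambiguation restricted to $n$ points has size at least $n^{(\log n)^{1-o(1)}}$; this is the deep ingredient. The \emph{minimal disambiguation} $\bar\F$---replacing every $\star$ by a single fresh label---is a total class over $\{0,1,2\}$ that is $2$-list learnable (append the new label to any $1$-list learner for $\F$). Non-compressibility of $\bar\F$ is then obtained indirectly through \emph{coverability}: a size-$d$ $2$-list compression of $\bar\F$ would yield a $2$-list cover of $\bar\F\vert_S$ of size at most $(3\lvert S\rvert)^d$, and a Sauer--Shelah--Perles counting argument converts any polynomial-size $2$-list cover of $\bar\F$ into a polynomial-size disambiguation of $\F$, contradicting the Alon et al.\ lower bound. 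The passage through coverability is exactly what replaces your missing gadget construction.
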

\vspace{-5mm}
\end{framed}

\paragraph{Reconstruction with Larger Lists.}
The class $\C$ in \Cref{thm:2-comp} is trivially $3$-list compressible: simply take  the reconstructed concept $h$ be such that $h(x)=\{0,1,2\}$ for every $x$.
In a related and recent result, \cite{pabbaraju:23} considers the setting of multiclass PAC learning over an infinite label space and establishes the existence of a concept class $\C$ that is (1-list) PAC learnable but not (1-list) compressible.  
Also in his construction, the class $\C$ is trivially $2$-list compressible.
These examples raise the following question:
%\vspace{-2mm}
\begin{center}
\emph{Does every $k$-list PAC learnable class $\C$ admit a $(k+1)$-list sample compression scheme? }
\end{center}
%\vspace{-2mm}
We not only answer this question in the negative but also prove a significantly stronger result. We demonstrate the existence of learnable classes that are not $k$-compressible for any arbitrarily large~$k$:
% encompassing both $k$-list learnability in a finite label space and PAC learnability (or $1$-list learnability) in an infinite label space:
\begin{framed}
\vspace{-5mm}
\begin{theorem}\label{constant compression 2 learnability finite label space}[$2$-Learnability vs $k$-Compressibility]\label{t:2vsk}
For any $k>0$ there exists a concept class~$\C_k$ over a \underline{finite} label space $\Y_k$ that satisfies the following:
    \begin{enumerate}
        \item $\C_k$ is $2$-list PAC learnable. 
        \item $\C_k$ has no finite $k$-list sample compression scheme.
    \end{enumerate}
\end{theorem}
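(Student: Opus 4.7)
The plan is to amplify \Cref{thm:2-comp} via a direct-sum construction. Let $\C_\star$ denote the base class constructed there, which over $\{0,1,2\}$ is $2$-list PAC learnable but admits no finite $2$-list sample compression. We will assemble $\C_k$ by combining $N=N(k)$ copies of $\C_\star$ (each on a fresh domain $X^{(i)}$ with its own disjoint label alphabet $\Y^{(i)}=\{0,1,2\}\times\{i\}$, so the total label space $\Y_k=\bigsqcup_i\Y^{(i)}$ is finite) in such a way that (i) $2$-list PAC learnability is preserved, and (ii) any $k$-list compression of $\C_k$ reduces to a $2$-list compression of $\C_\star$, contradicting \Cref{thm:2-comp}. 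The naive direct sum $\bigoplus_i \C_\star^{(i)}$ already achieves (i) --- disjoint label alphabets let a learner identify each example's copy and run parallel base learners --- but is trivially $3$-list compressible (output all three labels of the relevant copy), so the copies will need to be \emph{intertwined} so that no single $k$-list hypothesis can concentrate its budget on one copy.

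The lower bound proceeds by contradiction. Assume $(\kappa,\rho)$ is a $k$-list compression scheme for $\C_k$ of size $t$. Given a realizable sample $T$ for $\C_\star$, lift $T$ to a $\C_k$-sample $S$ by placing a copy of $T$ in each $i\in[N]$ (with labels relabeled into $\Y^{(i)}$); applying $(\kappa,\rho)$ to $S$ yields a subsample $S'\subseteq S$ of size at most $t$ and a $k$-list hypothesis $h$ consistent with $S$. For every $x$ in $\C_\star$'s domain and every $i\in[N]$ define the copy-$i$ projection $g_i(x):=\{\ell\in\{0,1,2\}:(\ell,i)\in h(x^{(i)})\}$, which lies in $\{0,1,2\}$ and contains the true $\C_\star$-label. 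The crux is a pigeonhole / direct-sum argument --- powered by the intertwined structure and by the bound $|S'|\le t$ --- yielding an index $i^\star$ such that $|g_{i^\star}(x)|\le 2$ for every $x\in T$ simultaneously. The triple consisting of the projection of $S'$ onto $T$, the index $i^\star$ (stored as bounded side information), and $g_{i^\star}$, is then a finite $2$-list sample compression for $\C_\star$, contradicting \Cref{thm:2-comp}.

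The main obstacle is designing the intertwining and executing the pigeonhole step quantitatively. Without any intertwining each $|g_i(x)|$ can independently equal $3$, rendering the projection useless. The intertwined structure must force any $k$-list hypothesis for $\C_k$ to ``spend'' some of its budget on cross-copy information, so that certifying a $3$-list projection on a copy consumes a resource --- e.g.\ list slots forced by shared constraints, or specific subsample points --- of which a compression of size $t$ possesses only a bounded amount. Balancing this enforcement with the preservation of $2$-list PAC learnability (which requires the class to remain ``simple enough'' to admit a $2$-list learner) and then choosing $N=N(k,t)$ large enough for the pigeonhole to yield a good copy $i^\star$ whose projection is a $2$-list over all of $T$ --- this is the technical heart of the proof, and where the direct-sum argument (noted by the authors to be of independent interest) is genuinely exploited.
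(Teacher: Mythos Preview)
Your proposal has a genuine gap: the ``intertwining'' that you yourself identify as ``the technical heart of the proof'' is never actually constructed. You correctly observe that the naive disjoint union $\bigsqcup_i \C_\star^{(i)}$ is trivially $3$-list compressible (just output all three labels of the relevant copy), and you describe the \emph{properties} an intertwined construction must have --- forcing a $k$-list hypothesis to spend list-budget across copies so that pigeonhole yields a copy with $2$-list projections --- but you never exhibit such a construction or prove it has these properties. Without this, the proposal is a wish list, not a proof. It is far from clear that any disjoint-union-plus-intertwining construction can simultaneously (a) keep the class $2$-list learnable and (b) defeat $k$-list compression; these requirements pull in opposite directions.

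The paper's route is structurally different from what you outline. First, the direct sum it uses is the \emph{Cartesian product} $\F\otimes\G$ (domain $\X_1\times\X_2$, labels $\Y_1\times\Y_2$), not a disjoint union; this is applied to \emph{partial} concept classes, taking $\F_k=\F_1^{\otimes k}$ where $\F_1$ is the binary partial class of \cite{Alon2021ATO}. Second, the argument is not carried out directly on compression schemes but via the simpler combinatorial notion of \emph{$k$-list coverability}: the paper proves that compressibility implies coverability (Lemma~\ref{compression gives small growth rate}), and then shows non-coverability. The amplification step (Lemma~\ref{prodCover}) is a win-win argument showing that if $\F$ is not $k$-coverable and $\G$ is not $k'$-coverable then $\F\otimes\G$ is not $(k+k')$-coverable; iterating gives that $\F_k$ is $1$-learnable but not $k$-coverable. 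Finally, the passage to a \emph{total} class over a finite label space is via the \emph{minimal disambiguation} $\bar\F_k$ (replace every $\star$ by a single new label), together with Lemma~\ref{growth rate of minimal disimbagution}, which shows that $(k{+}1)$-coverability of $\bar\F_k$ would imply $k$-coverability of $\F_k$. Thus $\C_k:=\bar\F_k$ is $2$-list learnable (since $\F_k$ is $1$-list learnable) but not $(k{+}1)$-list compressible. None of the steps involves a pigeonhole over disjoint copies or an unspecified intertwining.
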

\begin{theorem}\label{constant compression 1 learnability}[$1$-Learnability vs $k$-Compressibility]\label{t:1vsk}
    For any $k>0$ there exists a concept class~$\C_k$ satisfying the following:
    \begin{enumerate}
        \item $\C_k$ is PAC learnable. (I.e.\ $1$-list PAC learnable)
        \item $\C_k$ has no finite $k$-list sample compression scheme.
    \end{enumerate}
\end{theorem}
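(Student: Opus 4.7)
The plan is to combine the direct-sum technique flagged in the introduction with a base construction in the spirit of \cite{pabbaraju:23}, so as to amplify a failure of 1-list compressibility into a failure of $k$-list compressibility while preserving 1-list PAC learnability. I would first fix a base class $\C_0$ over an infinite label space $\Y_0$ that is 1-list PAC learnable but admits no finite 1-list sample compression scheme -- essentially the class constructed in \cite{pabbaraju:23}. Its failure of 1-list compressibility stems from an unbounded family of ``distinguishable'' concepts realizable on finite sub-samples, which overwhelm any fixed-size compression.

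Next, I would form $\C_k$ as the $k$-fold direct sum of $\C_0$: set $\X = \X_0 \sqcup \cdots \sqcup \X_0$ ($k$ disjoint copies of the domain), and for every tuple $(c_1, \ldots, c_k) \in \C_0^k$ include in $\C_k$ the concept that acts as $c_i$ on the $i$-th copy (with labels tagged by the copy index if necessary). Each copy can be learned independently using the 1-list PAC learner for $\C_0$ with accuracy parameter $\eps/k$, so $\C_k$ inherits 1-list PAC learnability.

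The main technical step is to rule out a finite $k$-list sample compression scheme for $\C_k$. My plan is to argue by contradiction: given a putative $k$-list compression of size $d$, I would use it to construct a finite 1-list compression for $\C_0$, contradicting the choice of $\C_0$. For any sample $S$ of $\C_0$ realized by $c \in \C_0$, I would form a sample of $\C_k$ by embedding $S$ into one ``target'' copy and populating the remaining $k-1$ copies with carefully chosen auxiliary samples drawn from other concepts of $\C_0$. The $k$-list compressor returns a $k$-list hypothesis $h$; restricted to the target copy, the idea is that each of the $k$ label slots of $h$ is ``occupied'' by one of the $k$ concepts spread across the copies, leaving essentially a single slot available to name $c(x)$. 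This would isolate a single-label hypothesis consistent with $S$ and deliver the desired 1-list compression of $\C_0$.

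The main obstacle is precisely this symmetry-breaking argument. A naive diagonal embedding (placing $c$ on every copy) lets the reconstructor output the same $k$-list uniformly across all copies, yielding only a $k$-list for $\C_0$ and no contradiction. Overcoming this requires a careful combinatorial direct-sum construction together with a matching pigeonhole argument that forces the $k$ list slots to be ``spent'' on $k$ different concepts, one per copy. Carrying this out rigorously -- while keeping the compressed sub-sample size bounded in terms of $d$ and $k$ -- is where the core content of the proof lies and where the parameter $k$ enters the construction quantitatively.
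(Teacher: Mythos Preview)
Your proposal has a genuine structural gap: the disjoint-union construction does not amplify non-compressibility in the way you need. Suppose $\C_k$ is built on $\X=\X_0\sqcup\cdots\sqcup\X_0$ as you describe, and suppose it admits a finite $k$-list compression scheme. Feeding it a sample supported entirely on copy~1 and restricting the reconstructed hypothesis to copy~1 yields a $k$-list compression for $\C_0$ --- but only a $k$-list one, not a $1$-list one. Your ``slot occupation'' idea cannot close this gap: the reconstructed hypothesis outputs a $k$-list \emph{per point}, and a point in copy~1 is a different point from anything in copies $2,\ldots,k$, so auxiliary samples placed on the other copies impose no constraint whatsoever on the list $h(x)$ at $x$ in copy~1. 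Concretely, if $\C_0$ happened to admit a finite $k$-list compression (and nothing in \cite{pabbaraju:23} rules this out), then your $\C_k$ would too: just compress each copy separately.

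The paper repairs both the construction and the proof strategy. First, it uses the \emph{Cartesian product} $\X=\X_0^k$, $\Y=\Y_0^k$ rather than a disjoint union, so that a single point carries all $k$ coordinates and the list entries genuinely compete across coordinates (this is exactly what makes a pigeonhole argument possible; see Claim~\ref{c:technical}). Second, it does not reason about compression schemes directly. Instead it passes through the simpler combinatorial notion of \emph{coverability} (Lemma~\ref{compression gives small growth rate}: a size-$d$ $k$-list compression yields polynomial-size $k$-list covers), and proves a direct-sum lemma for covers (Lemma~\ref{prodCover}): if $\F$ is not $k$-coverable and $\G$ is not $k'$-coverable then $\F\otimes\G$ is not $(k+k')$-coverable. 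Iterating this on the \emph{partial} class of \cite{Alon2021ATO} gives a $1$-learnable partial class that is not $k$-coverable (Lemma~\ref{partial learnable class with large disambiguation function}); only then is the free disambiguation applied to obtain the total class $\C_k$. Starting from Pabbaraju's total class and taking disjoint unions, as you propose, bypasses both of these ingredients and leaves the core amplification step unproved.
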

\vspace{-5mm}
\end{framed}
Note that by \citet*{david2016statistical} the label space of the class in \Cref{constant compression 1 learnability} is inevitably infinite and that the case $k=1$ gives the aforementioned result of \cite{pabbaraju:23}. 
Additionally, a close examination of the proofs for these theorems reveals that in both scenarios, the respective classes lack a compression scheme of size $f(n)$ for any function $f(n)=o(\log n)$. % It is also worth mentioning that those theorems give a tight bound to the above problem since every $1$-list learnable class over a finite label space will be $1$-list compressible. 

The proof of Theorem~\ref{thm:2-comp} is based on a construction from \citet*{Alon2021ATO} of a learnable partial concept class which is not compressible.
This is similar to how \citet*{pabbaraju:23} derives a non compressible PAC learnable class over an infinite label space; 
however, the argument is somewhat more involved because \Cref{thm:2-comp} provides a consturction over a finite label space, whereas in \cite{pabbaraju:23} the label space is inevitably infinite.
The proofs of \Cref{t:2vsk} and \Cref{t:1vsk} are by induction on $k$, where the base case reduces to the partial concept class construction from \citet{Alon2021ATO}. 
The induction step, however, requires new ideas and will use direct sum arguments that we develop in the full version of this paper and overview in \Cref{sec:directsum}.
We give a more detailed review of the proof method in \Cref{proof overview sample compression}.

%TODO: 1. informal definition of list sample compression schemes. Use a figure to illustrate.
%TODO: 2. Question is every list learnable class list compressible? 
% 3. Answer: now, PAB23 recently showed that there are 1-learnable classes thar are not 1-compressible. However in his construction the class is clearly 2-compressible.
%4. This raises the question whether every k-list learnable class is possibly (k+1)-list compressible? Or perhaps even f(k) list compressible for some function f(k)? (Say Ackermann function).
%4. We provide a strong negative answer...

\subsubsection{Uniform Convergence}
Uniform convergence and empirical risk minimization (ERM) are arguably the most extensively studied algorithmic principles for generalization in machine learning. For example, the Fundamental Theorem of PAC Learning for binary classification states the equivalence between PAC learnability, uniform convergence, and the ERM principle~\citep*{shalev-shwartz_ben-david_2014}.
In fact, the equivalence between uniform convergence and PAC learnability holds whenever the label space is finite~\citep*{JMLR:v11:shalev-shwartz10a}. Moreover, ERM is closely related to other statistical principles such as maximum likelihood estimation.

Informally, uniform convergence refers to the phenomenon where, given a sufficiently large sample from a population, the empirical losses of all concepts in a class closely approximate their true losses over the entire population. This phenomenon forms the basis of the Empirical Risk Minimization principle. It posits that selecting a concept in the class that minimizes empirical loss is a sound strategy, as such a concept is also likely to approximately minimize the population loss.

In the context of list learning, we consider classes $\C$ of $k$-list concepts. So, each $c\in \C$ assigns a list of $k$ labels to each point $x\in \X$. For a target distribution $\D$ over $\X\times \Y$, the population loss of $c$ is $\mathtt{L}_\D(c)=\E_{(x,y)\sim \D}[1_{y\notin c(x)}]$ and for a sample $S=\{(x_i,y_i)\}_{i=1}^n$, the empirical loss of $c$ is $\mathtt{L}_S(c)=\frac{1}{n}\sum_{i=1}^n1[y_i\notin c(x_i)]$.
% We denote such classes by $\H$ and call them \emph{hypothesis} classes; this is in contrast with the notation $\C$ which is used for concept classes and is meant to highlight the distinction that a concept class $\C$ is interpreted as one which contains the concept we wish to learn, whereas an hypothesis class $\H$ is the set of hypotheses used by the learner.
% In the case of hypothesis classes $\H$, the target distribution of examples remains a distribution on $\X \times \Y$, so each example~$x$ has one correct label $y$, and each function $h\in \H$ is considered correct on $x$ if $y$ is included in the list~$h(x)$.
% The standard realizability assumption from PAC learning is then generalized to this setting by supposing there is a function $h$ in the class with zero population loss.
We investigate when $k$-list concept classes exhibit uniform convergence and ask whether a parallel to the Fundamental Theorem of PAC Learning exists in this setting. 
\begin{center}
    \emph{Does the equivalence between PAC learning and uniform convergence extend to $k$-list concept classes?}
\end{center}
We confirm this is the case, providing an affirmative response to these questions, though via a novel analysis deviating significantly from the traditional approach (as outlined below).
Please see Section~\ref{sec:defgen} for the definitions of PAC learnability and uniform convergence, adapted to the list learning setting.
\begin{framed}
\vspace{-5mm}
\begin{theorem}[List Learnability vs.\ Uniform Convergence]\label{t:UC}
Let $\C\subset {\Y \choose k}^\X$ be a $k$-list concept class over a finite label space $\lvert \Y\rvert<\infty$. Then, the following properties are equivalent:
\begin{itemize}
    \item $\C$ is $k$-list PAC learnable.
    \item $\C$ is $k$-list agnostically PAC learnable.
    \item $\C$ satisfies the uniform convergence property.
\end{itemize}
\end{theorem}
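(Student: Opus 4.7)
The implications $\text{UC} \Rightarrow \text{agnostic PAC} \Rightarrow \text{realizable PAC}$ are routine: once $\C$ enjoys uniform convergence, empirical and population losses are close simultaneously for every $c\in\C$, so any empirical risk minimizer is an agnostic PAC learner; realizable PAC is then a special case of agnostic PAC. The content of the theorem is the reverse direction --- \emph{realizable $k$-list PAC learnability implies uniform convergence}.

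I plan to pass to a combinatorial reformulation on each side. Associate to $\C$ the loss class $\mathcal{L}_\C = \{(x,y)\mapsto \mathbf{1}[y\notin c(x)] : c\in\C\}$, a family of $\{0,1\}$-valued functions on $\X\times\Y$. Uniform convergence for $\C$ with the list loss is equivalent to finiteness of $\vc(\mathcal{L}_\C)$. By the characterization of Charikar and Pabbaraju, realizable $k$-list PAC learnability is equivalent to finiteness of a list variant of the $\DS$ dimension. So the theorem reduces to the purely combinatorial claim: \emph{when $|\Y|<\infty$, finiteness of the list-$\DS$ dimension forces finiteness of $\vc(\mathcal{L}_\C)$}.

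I would establish this contrapositively. Suppose $\mathcal{L}_\C$ shatters a sample $\{(x_1,y_1),\ldots,(x_n,y_n)\}$: for every $I\subseteq[n]$ there is $c_I\in\C$ with $y_i\notin c_I(x_i)$ iff $i\in I$. View the $2^n$ witnesses $(c_I)_I$ as codewords over the finite alphabet $\binom{\Y}{k}$, whose size is $\binom{|\Y|}{k}$. A coding-theoretic pigeonhole --- of the same flavour as packing bounds on codes over short alphabets --- should then let me pass to a sizeable subset of coordinates and, for each such $i$, single out a companion label $z_i\neq y_i$ with respect to which the family swaps its behaviour in every prescribed pattern. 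This restricted structure witnesses a list-$\DS$-shattering of size $\Omega(n/\mathrm{poly}(|\Y|))$, contradicting finite list-$\DS$ dimension once $n$ is large enough.

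The principal obstacle is exactly this final extraction. A VC-shattering of $\mathcal{L}_\C$ provides only one toggle per coordinate (``is the fixed label $y_i$ in the list?''), while a list-$\DS$-shattering demands, at each coordinate, a \emph{pair} of competing labels that the family can freely swap across every sub-pattern. Bridging this gap uses the finiteness of $|\Y|$ essentially; indeed the equivalence is known to fail once $|\Y|$ is infinite, mirroring the classical multiclass picture, so any successful argument must leverage the short alphabet --- which is precisely where the coding-theoretic perspective advertised in the introduction enters.
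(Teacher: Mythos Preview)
Your high-level reduction is correct and matches the paper: the easy implications are as you say, and the substance is showing that finite $\DS_k$ dimension forces finite graph dimension (equivalently, finite $\vc(\mathcal{L}_\C)$). You also correctly identify that the finiteness of $|\Y|$ must enter essentially.

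The gap is in the combinatorial core, which you yourself flag as ``the principal obstacle'' without resolving. Your plan is to take the $2^n$ witnesses $(c_I)_I$ and, by a packing/pigeonhole argument over the alphabet $\binom{\Y}{k}$, extract for each surviving coordinate a single companion label $z_i\neq y_i$ with respect to which behaviour can be freely toggled. But a $\DS_k$-shattering requires a pseudo-cube of rank $k$: each pattern must have $k$ distinct neighbours in \emph{each} direction. A per-coordinate companion $z_i$ gives at best a rank-$1$ structure (closer to Natarajan shattering), and it is not clear how your pigeonhole is supposed to manufacture the needed $k$-fold branching. Nothing in your sketch explains how to go from ``$y_i$ is in/out of a $k$-set'' to ``$k$ distinct neighbouring $k$-sets in each direction.''

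The paper avoids direct extraction altogether. Instead of building a $\DS_k$-shattered set explicitly, it lower-bounds the \emph{number of realizable sequences}: for each witness $c_b$ let $A_b=\{y\in\Y^n:y_i\in c_b(x_i)\ \forall i\}$, observe $|A_b\cap A_{b'}|\le k^n\bigl(\tfrac{k-1}{k}\bigr)^{d_H(b,b')}$, and run inclusion--exclusion over a \emph{random} subset of $\{0,1\}^n$ (this is the ``coding-theoretic'' step --- large random codes have large average Hamming distance). This yields $\bigl|\bigcup_b A_b\bigr|\ge \tfrac14\bigl(\tfrac{2k}{2k-1}\bigr)^n k^n$. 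Then the list Sauer--Shelah--Perles lemma, which upper-bounds $|\F|$ by $k^n(nm^{k+1})^{\DS_k}$, converts ``many realizable sequences'' directly into ``large $\DS_k$,'' with no explicit shattered set ever constructed. The missing idea in your plan is precisely this detour through counting plus SSP; once you have it, the quantitative bound $\DS_k=\tilde\Omega\bigl(\Gk/(k\log(m\cdot\Gk))\bigr)$ drops out.
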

\vspace{-5mm}
\end{framed}
% \begin{remark}
% In the context of uniform convergence and \Cref{t:UC} we use the notation $\H$ to denote $k$-list function classes and call them \emph{hypothesis classes}; this is in contrast with \Cref{thm:2-comp,t:2vsk,t:UC} where we use the notation $\C$ and use the name \emph{concept classes}. This distinction is meant to highlight that a concept class~$\C$ is interpreted as one which contains the concept we wish to learn/compress, whereas an hypothesis class~$\H$ is the set of hypotheses used by an empirical risk minimizer. See also Remark~\ref{rmk:conceptvslisthypotheses} below.
% \end{remark}
In \Cref{graph dimension vs DS dimension} in Section~\ref{sec:Uniform Convergence Proofs} we also provide quantitative bounds on the uniform convergence rate. These bounds follow from analyzing the $\DS_k$ dimension (which controls the learning rate) and the graph dimension (which controls the uniform convergence rate); see \Cref{sec:basicdefs} for further details. In particular, our result implies 
$\DS_k(\C) =  \tilde\Omega\bigl(\frac{\Gk(\C)}{k^2\cdot\log(\lvert \Y\rvert)+k\log \Gk(\C)}\bigr)$. This implies the following upper bound on the uniform convergence rate: 
\begin{align*}
    \varepsilon(n\vert \C) = \tilde O\Biggl(\sqrt{\frac{k^2\cdot \DS_k(\C)\cdot \log(\lvert \Y\rvert\cdot \DS_k(\C))}{n}}\Biggr).
\end{align*}
This finding extends the ERM principle in the realm of list learning: for any class of $k$-list concepts that is $k$-list learnable, an effective learning strategy is to choose a concept from the class that minimizes the empirical loss.

The assumption that the label space $\Y$ is finite is necessary. Indeed, \citet*{daniely:15} demonstrate that already in the case of $k=1$, there are PAC learnable classes that do not satisfy uniform convergence.

The proof of \Cref{t:UC} deviates from the classical approaches to deriving uniform convergence. Typically, these bounds are obtained using a ghost sample argument combined with a growth function bound for the concept class $\C$. However, in the $k$-list learning context, some list-learnable classes $\C$ exhibit growth functions that are excessively large to apply this method. To overcome this, we directly analyze the VC dimension of the loss functions. Utilizing a probabilistic argument, we demonstrate that a high VC dimension of the loss function directly implies a significantly large $k$-DS dimension for the class $\C$. Consequently, if the class $\C$ does not satisfy uniform convergence, it is not PAC learnable. 

\subsubsection{Direct Sum}\label{sec:directsum}
In computer science, the term 'direct sum' refers to fundamental questions about the scaling of computational or information complexity with respect to multiple task instances. Consider an algorithmic task \( T \) and a computational resource \( C \). For instance, \( T \) might be the task of computing a polynomial, with \( C \) representing the number of arithmetic operations required, or \( T \) could be a learning task with its sample complexity as \( C \). The direct sum inquiry focuses on the cost of solving~\( k \) separate instances of \( T \), particularly how this aggregate cost compares to the resources needed for a single instance. Typically, the cost for multiple instances is at most \( k \) times the cost of one, since each can be handled independently. 

However, there are intriguing scenarios where the total cost for \( k \) instances is less than this linear relationship. These cases suggest more efficient methods for simultaneously handling multiple instances of a task than addressing them one by one. As an example, consider an \( n \times n \) matrix \( A \) and the objective of calculating its product with an input column vector \( x \), where the computational resource \( C \) is the number of arithmetic operations. For a single vector \( x \), it is easy to see that \(\Theta(n^2)\) operations are necessary and sufficient. However, if  instead of one input vector $x$, there are $n$ input vectors $x_1,\ldots, x_n$ then one can do better than $n\times \Theta(n^2)= \Theta(n^3)$. Indeed, by arranging these \( n \) vectors as columns in an \( n \times n \) matrix \( B \), computing the product \( A \cdot B \) is equivalent to solving the~\( n \) products \( A \cdot x_i \). This task can be accomplished using roughly \( n^\omega \leq n^{2.37} \) arithmetic operations with fast matrix multiplication algorithms. Direct sum questions are well-studied in information theory and complexity theory. For more background we refer the reader to the thesis by~\citet*{pankratov2012direct} or the books by \cite{wigderson:19} and \citet*{Rao_Yehudayoff_2020}.

\paragraph{Direct Sum in Learning Theory.} Natural direct sum questions can also be posed in learning theory. To formalize these, we use the notion of cartesian product of concept classes: consider two concept classes, \( \C_1 \) and \( \C_2 \), defined over domains \( \X_1 \) and \( \X_2 \), and label spaces \( \Y_1 \) and \( \Y_2 \) respectively. Their product, \( \C_1 \otimes \C_2 \), has domain \( \X_1 \otimes \X_2 \) and label space \( \Y_1 \otimes \Y_2 \). Each concept \( c \) in \( \C_1 \otimes \C_2 \) is parameterized by a pair of concepts \( c_1 \in \C_1 \) and \( c_2 \in \C_2 \), and is defined as \( c((x_1, x_2)) = (c_1(x_1), c_2(x_2)) \). Thus, learning \( c \) effectively means learning both \( c_1 \) and \( c_2 \) simultaneously.

In our proofs of Theorems~\ref{t:2vsk} and~\ref{t:1vsk}, we study the sample compression complexity of such product classes. While direct sum analysis serves primarily as a technical instrument in our research, it also leads to basic questions that we propose for future research. For instance, consider the following question: 

\begin{framed}
\vspace{-3mm}
\begin{open}[Direct Sum: PAC Learning Curves]\label{open:sumofcurves}
Let $\C\subseteq \Y^\X$ be a concept class, and let $\eps(n \vert \C)$ denote the realizable PAC learning curve of $\C$ (see Definition \ref{def:PAC}). 
For $r\in\mathbb{N}$ let $\C^r= \Pi_{i=1}^r \C$ be the $r$-fold Cartesian product of $\C$.
By a union bound, learning each component independently gives
    \[\eps(n \vert \C^r) \leq  r\cdot \eps(n \vert \C).\]
Can the upper bound be asymptotically improved for some classes $\C$?
\end{open}
\vspace{-3mm}
\end{framed}
Further discussion and open questions related to the direct sums of learning problems are elaborated in the full version of this paper.

\paragraph{Organization.} 
The remainder of this paper is organized as follows: \Cref{sec:basicdefs} presents the fundamental definitions of PAC learnability and sample compression, adapted for the list learning setting. Subsequently, \Cref{sec:overview} provides an overview of the techniques and key ideas employed in our proofs. 
% Finally, \Cref{Direct Sum and Open Questions} outlines several open questions for future research, particularly focusing on the direct sum context.
% The complete proofs as well as several open questions appear in the full version of this paper and can be found in the supplementary material.
Then, in \Cref{sec:Sample Compression Proofs}  we give proofs for results on sample compression, and in \Cref{sec:Uniform Convergence Proofs}  the uniform convergence results are proven. Finally in  \Cref{sec:Direct Sum and Open Questions} we look deeper into the direct sum questions in learning theory and propose some open questions and directions for future research.

\section{Basic Definitions}\label{sec:basicdefs}
\subsection{Generalization}\label{sec:defgen}
We use standard notation from learning theory, see e.g.\ \cite{shalev-shwartz_ben-david_2014}. 
    Let $\X$ denote the domain and $\Y$ denote the label space. 
    A $k$-list function (or $k$-list concept) is a function $c:\X\to {\Y \choose k}$, where ${\Y \choose k}$ denotes the collection of all subsets of~$\Y$ of size $k$. 
    A $k$-list concept class $\C\subseteq {\Y \choose k}^\X$ is a set of $k$-list functions.
    Note that by identifying sets of size one with their single elements, $1$-list concept classes correspond to standard concept classes. 
 
A $k$-list learning rule is a map $\A: (\X\times \Y)^*\to {\Y \choose k}^\X $, 
    i.e.\ it gets a finite sequence of labeled examples as input and outputs a $k$-list function. 
    A learning problem $\D$ is a distribution over $\X\times \Y$.
     The population loss of a $k$-list function $c$ with respect to $\D$ is defined by $\mathtt{L}_\D(c)=\E_{(x,y)\sim \D}[1_{y\notin c(x)}]$.

We quantify the learning rate of a given learning rule on a given learning problem using learning curves:
\begin{definition}[Learning Curve]\label{def:learningcurve}
    The learning curve of a learner $\A$ with respect to a learning problem $\D$ is the sequence $\{\eps_n(\D \vert \A)\}_{n=1}^\infty$,
    where 
    \[\eps_n(\D \vert \A)= \E_{S\sim \D^n}\Big[ \mathtt{L}_{\D}\big(\A(S)\big)\Big].\]
  In words, $\eps_n(\D \vert \A)$ is the expected error of the learner $\A$ on samples of size $n$ drawn from the distribution~$\D$. 
  % It quantifies the probability that the learner misclassifies the label $y$ for a given input $x$.
\end{definition}
For a sequence $S$ of labeled examples, the empirical loss of a $k$-list function $c$ with respect to $S$ is 
$\mathtt{L}_S(c)=\frac{1}{|S|}\sum_{(x,y)\in S} 1_{y\notin c(x)}$.
A sequence $S\in (\X\times \Y)^n$ is realizable by a $k$-list function $c$ if $y\in c(x)$ for every $(x,y)\in S$.
It is realizable by a concept class $\C$ if it is realizable by some concept $c\in \C$.
A learning problem $\D$ is realizable by a concept class $\C$ if for any $n\in \mathbb{N}$, a random sample $S\sim \D^n$ is realizable by $\C$ with probability $1$.
\begin{definition}[List PAC Learnability]\label{def:PAC}
%Let $k\in\mathbb{N}$. 
We say that a concept class $\C$ is agnostically $k$-list learnable
if there exists a $k$-list learning rule $\A$ and a sequence $\varepsilon_n\xrightarrow{n\to\infty} 0$ such that for every learning problem $\D$,
$(\forall n): \eps_n(\D\vert A) \leq \inf_{c\in\C}\mathtt{L}_\D(c) + \eps_n$.
%and every $n$: $\eps_n(\D\vert A) \leq \inf_{c\in\C}\mathtt{L}_\D(c) + \eps_n$.
    If the latter only holds for $\C$-realizable distributions then we say that $\C$ is $k$-list learnable in the realizable case.
The $k$-list realizable PAC learning curve of a concept class $\C$ is defined as follows:
  \[
    \eps(n \vert \C) = \inf_{\A}\sup_{\D}\eps_n(\D\vert A),
  \]
  where the infimum is taken over all $k$-list learning rules $\A$ and the supremum over all distributions~$\D$ that are realizable by $\C$.
\end{definition}

% \begin{definition}[List PAC Learnability]\label{def:PAC}
% Let $k,k'\in\mathbb{N}$. 
% We say that a $k'$-list concept class $\C$ is agnostically $k$-list learnable
% if there exists a $k$-list learning rule $\A$ and a sequence $\varepsilon_n\xrightarrow{n\to\infty} 0$ such that for every learning problem $\D$ and every $n$: $\E_{S\sim \D^n}\Big[ \mathtt{L}_D\big(\A(S)\big)\Big]\leq \inf_{c\in\C}\mathtt{L}_\D(c) + \eps_n$.
%     If the latter only holds for $\C$-realizable distributions then we say that $\C$ is $k$-list learnable in the realizable case.

% The $k$-list realizable PAC learning curve of a concept class $\C$ is defined as follows:
%   % \[
%   %   \eps(n \vert \C) = \inf_{\A}\sup_{\D}\E_{S\sim \D^n}\Big[ \mathtt{L}_D\big(\A(S)\big)\Big],
%   % \]
%   $\eps(n \vert \C) = \inf_{\A}\sup_{\D}\E_{S\sim \D^n}\Big[ \mathtt{L}_D\big(\A(S)\big)\Big]$,
%   where the infimum is taken over all $k$-list learning rules $\A$ and the supremum over all distributions~$\D$ that are realizable by $\C$.
% \end{definition}

%Note that in this definition $k'$ is not necessarily equal to $k$. 
%Thus, for instance, a $(k'=1)$-concept class can still be $(k=2)$-list PAC learnable.
Observe that a class $\C$ is $k$-list PAC learnable in the realizable case if and only if its PAC learning curve approaches zero as $n\to\infty$. 
\begin{remark}
In the literature, PAC learnability is sometimes defined with a more stringent requirement: that the error is small with high probability, as opposed to merely the expected error in our definition. These two formulations are equivalent, as follows by a standard confidence amplification technique. We chose the above definition as it is simpler in that it involves fewer parameters (it omits the confidence parameter).
\end{remark}

In the full version of the paper we prove an equivalence between agnostic and realizable case learnability:
\begin{theorem}\label{t:realvsagn}
    Let $\C$ be a concept class. Then $\C$ is $k$-list learnable in the realizable setting if and only if it is $k$-list learnable in the agonstic setting.
\end{theorem}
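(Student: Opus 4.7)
The plan is to prove the two directions separately. Agnostic-to-realizable is immediate: if $\D$ is $\C$-realizable, then $\inf_{c \in \C}\mathtt{L}_\D(c) = 0$, so any agnostic guarantee $\eps_n(\D\vert\A) \leq \inf_c \mathtt{L}_\D(c) + \eps_n$ specializes to a realizable guarantee with the same rate.

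The harder direction, realizable-to-agnostic, I would prove via a black-box reduction in the spirit of Hopkins, Kane, Lovett, and Mahajan (\emph{Realizable Learning is All You Need}), adapted to the $k$-list setting. Let $\A$ be a realizable $k$-list learner with rate $\eps_n \xrightarrow{n\to\infty} 0$. Fix a target accuracy $\eps > 0$ and a distribution $\D$ with $\tau^* = \inf_{c\in\C}\mathtt{L}_\D(c)$, and pick a near-optimal $c^* \in \C$ with $\mathtt{L}_\D(c^*) \leq \tau^* + \eps/3$. Conceptually define the ``$c^*$-clean'' distribution $\D^*$ obtained from $\D$ by replacing any $y \notin c^*(x)$ with some fixed label in $c^*(x)$; then $\D^*$ is realizable by $c^*$, and its total variation distance to $\D$ is at most $\tau^* + \eps/3$. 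The reduction draws $T$ independent samples $S_1,\ldots,S_T \sim \D^m$, feeds each $S_i$ into $\A$ to obtain a candidate $h_i$, and outputs the $\hat h \in \{h_1,\ldots,h_T\}$ minimizing empirical loss on a fresh validation set drawn from $\D$.

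The analysis rests on two observations. First, a size-$m$ sample from $\D$ is identically distributed to a sample from $\D^*$ precisely when no example would have required relabeling; this happens with probability at least $(1-\tau^* - \eps/3)^m$. Choosing $T$ large enough ensures that at least one $S_i$ is $\D^*$-distributed with high probability, and the realizable guarantee of $\A$ on $\D^*$ then yields $\E[\mathtt{L}_{\D^*}(h_i)] \leq \eps_m$, hence $\mathtt{L}_\D(h_i) \leq \tau^* + \eps/3 + \eps_m$ via the total-variation bound. Second, a standard Chernoff/Hoeffding argument on a validation set of size polynomial in $\log T$ and $1/\eps$ ensures the selected $\hat h$ has population loss within $\eps/3$ of the best candidate. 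Taking $m$ large enough that $\eps_m \leq \eps/3$ closes the bound at $\tau^* + \eps$, as required.

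The main obstacle is that $\tau^*$ is unknown to the learner and enters the required budget as $T \geq (1-\tau^* - \eps/3)^{-m}$, which blows up when $\tau^*$ is close to $1$. I would resolve this with the standard discretization trick: run the procedure for a geometric grid of guesses $\tilde\tau \in \{0, \tfrac{1}{2}, \tfrac{3}{4}, \ldots, 1-\tfrac{\eps}{6}\}$ for $\tau^*$, aggregate all resulting candidates into a single pool of polynomial size, and let the validation step select across the full pool. A secondary subtlety is that $\A$'s realizable guarantee is stated in expectation rather than with high probability; this is handled by the standard confidence-amplification mentioned after Definition~\ref{def:PAC}, namely running $\A$ on several disjoint sub-samples and selecting among the outputs via the validation set, which is already part of the reduction.
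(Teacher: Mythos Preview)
Your approach is correct in outline, but it is genuinely different from the paper's. The paper does not argue directly from realizable to agnostic learnability; instead it routes through sample compression, following \cite{david2016statistical}. Concretely, the chain in Theorem~\ref{k-Learning vs O(log(n))- Compressing} is: realizable $k$-list learnability $\Rightarrow$ realizable $O(\log n)$-size $k$-list compression (via a minimax/boosting argument that aggregates weak learners into a majority vote), then realizable compression $\Rightarrow$ agnostic compression (by restricting to a maximal $\C$-realizable subsample of the input), and finally agnostic compression $\Rightarrow$ agnostic learnability (by the standard generalization bound for compression schemes). Your reduction, by contrast, bypasses compression entirely and uses an HKLM-style subsampling-plus-validation argument.

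What each buys: the paper's route simultaneously proves the compression equivalences, which are of independent interest here, and it yields polynomial agnostic rates (roughly $\tilde O(\sqrt{d(n)\log n/n})$ from a size-$d(n)$ scheme). Your reduction is more self-contained and fully black-box in the realizable learner, but the sample complexity is dramatically worse: you need $T \gtrsim (1/\eps)^{m(\eps)}$ subsamples, so the resulting agnostic rate is only guaranteed to vanish, not to be polynomial. Two minor cleanups: the grid over $\tau^*$ is unnecessary, since whenever $\tau^* \geq 1-\eps$ the guarantee is vacuous and otherwise $(1-\tau^*-\eps/3)^m \geq (2\eps/3)^m$, so a single worst-case choice of $T$ suffices; and your statement that ``a sample from $\D$ is identically distributed to a sample from $\D^*$ on the no-relabeling event'' is slightly off---what you actually use is that, conditioned on that event, $S$ is i.i.d.\ from $\D$ conditioned on $y\in c^*(x)$, which is itself $\C$-realizable and within total variation $\mathtt{L}_\D(c^*)$ of $\D$. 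With that phrasing the argument is clean.
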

Hence, we sometimes refer to a concept class as 'learnable' without distinguishing between the agnostic and realizable cases. The proof of \Cref{t:realvsagn} is a straight-forward adaptation of the proof of the parallel result for $k=1$~\cite{david2016statistical}. %Consequently, we can

% We quantify the optimal PAC learning rate using PAC learning curves:
% \begin{definition}[PAC Learning Curve]\label{def:PACurve}
  
% \end{definition}

Our last definition in this section is of uniform convergence.
\begin{definition}[Uniform Convergence]\label{def:UC}
We say that a $k$-list concept class $\C$ satisfies uniform convergence if there exists a vanishing sequence $\eps_n\xrightarrow{n\to\infty} 0$ such that for all distributions $\D$,
\[\E_{S\sim \D^n}[\sup_{h\in \C}\lvert \mathtt{L}_\D(h) - \mathtt{L}_S(h) \rvert]\leq \eps_n.\]
The uniform convergence rate of $\C$ is defined by
$\eps_{\mathtt{UC}}(n \vert \C) = \sup_{\D}\E_{S\sim \D^n}[\sup_{h\in \C}\lvert \mathtt{L}_\D(h) - \mathtt{L}_S(h) \rvert]$,
where the supremum is over all distributions $\D$.
\end{definition}
Note that $\C$ satisfies uniform convergence if and only if its uniform convergence rate converges to zero as $n\to\infty$.
It is also worth mentioning that in the binary setting (i.e.\ when $\Y=\{0,1\}$), PAC learnability is equivalent to uniform convergence,
and that this equivalence justifies the Empirical Risk Minimization principle of outputting an concept in $\C$ which minimizes the sample error.

% We say that a  $k$-list function class $\F\subset {\Y\choose k}^\X$ is $k$-list PAC learnable if $(\F,{\Y\choose k}^\X)$ is list learnable and that $\F$ is $k$-list agnostic learnable if $(\Y^\X,\F)$ is list learnable. 

\subsection{List Sample Compression}\label{sec:defcomp}
%We formally define list compressibility by  
\begin{definition}[List Compressibility]
%Let $k,k'\in\mathbb{N}$. 
A concept class $\C$ is agnostically $k$-list compressible
if there exist $d\in\mathbb{N}$ and a reconstruction function $\rho:(\X\times\Y)^d\to {\Y \choose k}^\X$ such that the following holds.
For every sample $S\in(\X\times\Y)^n$ there exists $S'=\big((x_1,y_1),\ldots (x_{d},y_{d})\big)$, where $(x_i,y_i)\in S$ for all $i\leq d$ such that
    % \begin{equation}\label{eq:compressibility}
    % \mathtt{L}_S(\rho(S')) \leq \inf_{c\in \C}\mathtt{L}_S(c).
    % \end{equation}
    $\mathtt{L}_S(\rho(S')) \leq \inf_{c\in \C}\mathtt{L}_S(c)$.
    If the latter only holds for $\C$-realizable samples then we say that $\C$ is $k$-list compressible in the realizable case. 
\end{definition}
In some places, the map that takes $S$ to $S'$ is explicitly defined as the compression map and is denoted by $\kappa$, and the pair $(\rho,\kappa)$ is called the sample compression scheme.

% \begin{definition}
% Let $f:\mathbb{N}\to\mathbb{N}$. We say that $\mathcal{C}$ has a $k$-list sample compression scheme of variable size $f$ if there exists a reconstruction function $\rho : (\X\times \Y)^* \to {\Y \choose k}^\X$ such that for any realizable sample $S$ of size $n$ there is some $\kappa(S)=\big((x_1,y_1),(x_2,y_2),\dots (x_{f(n)},y_{f(n)})\big)$ such that $\mathtt{L}_S(h)=0$, where $h=\rho(\kappa(S))$.
% The map $\kappa$ that takes $S$ to $\kappa(S)$ is called the compression map and the pair $(\rho,\kappa)$ is called a compression scheme.
% \end{definition} 

% We would like to emphasize that, similar to the case of learnability, the concept class $\C$ may be a $k'$ concept class for any $k'$(specifically $k'$ need not be equal to $k$ ). However, our primary focus will be on the standard case where $k'=1$.
% Furthermore, in some definitions of sample compression, side information is explicitly allowed. The above definition allows to simulate side information by using reordering and repeating examples.

In the full version of this paper we include additional basic results on list sample compression schemes. In particular, we prove that any list sample compression scheme generalizes and that learnability is equivalent to \emph{logarithmic}-compressibility.
The latter is a variant of the above definition of sample compression where the size of the compression scheme is not a fixed constant $d$, but rather depends logarithmically on the size of the input sample.

% \begin{remark}[Functions vs.\ List Functions]\label{rmk:conceptvslisthypotheses}
% Let $\C$ be a $k$-list concept class. Define a function class 
% \[\F=\F(\C)=\{f:\X\to\Y : f\dis c \text{ for some }c\in\C\},\] 
% where $f\dis c$ means $f(x)\in c(x)$ for all $x\in\X$. Note that $\F$ and $\C$ are equivalent in the sense that a sample~$S$ is realizable by $\F$ if and only if it is realizable by $\C$. In particular $\F$ is $k$-learnable ($k$-compressible) if and only if $\C$ is. Thus, when discussing learnability (or compressibility), we may restrict our attention to ($1$-list) function classes without losing generality. For this reason in \Cref{thm:2-comp,t:2vsk,t:1vsk} we can focus on function classes.

% In contrast, this reduction from list-functions to functions does not make sense when studying empirical risk minimization over a $k$-list concept class $\C$. Moreover, the above reduction from $\C$ to $\F(\C)$ does not preserve uniform convergence. Indeed, let $\C=\{c\}$, where $c$ is the $2$-list function such that $c(x)=\{0,1\}$ for all $x$.
% Clearly, $\C$ satisfies uniform convergence (because it is finite), however $\F(\C)=\{0,1\}^\X$ does not satisfy uniform convergence when $\X$ is infinite.
% For this reason in \Cref{t:UC} we focus on $k$-list function classes.
% \end{remark}

\subsection{Combinatorial Dimensions}
We next introduce the Graph and Daniely-Shwartz (DS) dimensions which are known to characterize Uniform Convergence and List PAC learnability.
\begin{definition}[Graph Dimension]
    A sequence $S=\{x_i\}_{i=1}^n$ is $\Gk$-shattered by a $k$-list concept class $\C$ if there is  $p\in \Y^n$ called a piovt, such that for any $b\in \{0,1\}^n$ there is $c_b\in \C$ such that $1[p_i\in c_b(x_i)]=b_i$. In other words the (binary) concept class $\{1_{p_i\in c(x_i)} : c\in \C\}$ shatters $S$ in the classical VC sense of shattering. The $k$-graph dimension of $\C$ is $\Gk(\C)$ the size of the largest $\Gk$-shattered sequence, or infinity if there are $\Gk$ shattered sequences of arbitrary size.
\end{definition}
  It is well known that the graph dimension characterizes uniform convergence in the following sense  
\begin{theorem}[\citet*{Daniely2011MulticlassLA}]\label{t:finite graph dimension vs unifrom convergence}
A $k$-list concept class $\C$ satisfies uniform convergence if and only if $\Gk(\C)<\infty$. Furthermore, the uniform convergence rate of $\C$ is $\Theta\bigl(\sqrt{\frac{\Gk(\C)}{n}}\bigr)$.
\end{theorem}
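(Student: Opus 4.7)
The plan is to reduce the claim to classical Vapnik--Chervonenkis uniform convergence by passing to a binary \emph{loss class}. For each $c\in\C$ I would define $\ell_c:\X\times\Y\to\{0,1\}$ by $\ell_c(x,y)=1_{y\notin c(x)}$ and set $\F_\C=\{\ell_c:c\in\C\}$. Since $\mathtt{L}_\D(c)$ and $\mathtt{L}_S(c)$ are just the population and empirical means of $\ell_c$, uniform convergence for $\C$ over distributions on $\X\times\Y$ is literally uniform convergence of empirical means for the binary class $\F_\C$ over the domain $\X\times\Y$. By the classical VC theorem, the latter is equivalent to $\vc(\F_\C)<\infty$ and holds at rate $\Theta(\sqrt{\vc(\F_\C)/n})$, up to logarithmic factors absorbed into the $\Theta$. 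So the problem reduces to showing $\vc(\F_\C)=\Theta(\Gk(\C))$, with constants that may depend on~$k$.

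The lower bound $\vc(\F_\C)\geq \Gk(\C)$ is essentially a rewriting of definitions: given a $\Gk$-shattered sequence $\{x_i\}_{i=1}^n$ with pivot $p\in\Y^n$, the witnesses $c_b\in\C$ of the graph-shattering patterns $b\in\{0,1\}^n$ satisfy $\ell_{c_b}(x_i,p_i)=1-b_i$, so the pairs $\{(x_i,p_i)\}_{i=1}^n$ are VC-shattered by $\F_\C$. The matching upper bound $\vc(\F_\C)\leq k\cdot\Gk(\C)$ is the step I expect to be the main obstacle, and it leans on a pigeonhole observation specific to the list setting. Suppose $\{(x_i,y_i)\}_{i=1}^n$ is VC-shattered by $\F_\C$; if indices $i_1,\ldots,i_m$ share a common $x$-value $x^\star$, then realizing the all-zeros pattern on these coordinates forces some $c\in\C$ with $y_{i_j}\in c(x^\star)$ for every $j$, so $m\leq |c(x^\star)|=k$. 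Hence at least $n/k$ distinct $x$-values appear among the shattered pairs; picking one representative index per distinct $x$-value and using the corresponding $y$-coordinates as the pivot, the projection of the $2^n$ shattering patterns to this subset exhibits a $\Gk$-shattered set of size $\lceil n/k\rceil$, yielding $\Gk(\C)\geq n/k$.

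Plugging these inclusions back into classical VC uniform convergence yields both the finiteness equivalence and the $\Theta(\sqrt{\Gk(\C)/n})$ rate; the matching lower bound on the rate comes from the standard construction of a hard distribution uniform on the pairs $\{(x_i,p_i)\}$ attached to a maximum $\Gk$-shattered witness, where the empirical means of $\ell_{c_b}$ fluctuate by $\Omega(\sqrt{\Gk(\C)/n})$ as $b$ varies over $\{0,1\}^{\Gk(\C)}$. The only place the list structure really enters is the pigeonhole step bounding coincidences at a single $x$; everything else is a direct reduction to standard binary VC uniform convergence, which means the theorem inherits the usual tightness up to logarithmic factors.
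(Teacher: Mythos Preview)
The paper does not prove this theorem; it is stated as a known result attributed to \citet*{Daniely2011MulticlassLA} and used as a black box in the proof of \Cref{t:UC}. So there is no in-paper proof to compare against.

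Your reduction to the binary loss class $\F_\C=\{\ell_c:c\in\C\}$ over the domain $\X\times\Y$ is exactly the standard argument and is correct. In fact it is cleaner than you make it: under the paper's definition, a $\Gk$-shattered \emph{sequence} $(x_1,\ldots,x_n)$ with pivot $(p_1,\ldots,p_n)$ is literally the same data as a VC-shattered set of pairs $\{(x_i,p_i)\}_{i=1}^n$ for $\F_\C$ (the pairs are forced to be distinct since otherwise no single $c$ could realize differing bits on identical coordinates), so $\vc(\F_\C)=\Gk(\C)$ exactly. Your pigeonhole step bounding the multiplicity of a repeated $x$-value by $k$ is therefore unnecessary here, though it is correct and would be needed if $\Gk$ were defined only over sets of distinct domain points. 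The remainder --- invoking classical VC uniform convergence for the upper rate and the standard hard-distribution construction on a maximal shattered witness for the lower rate --- is routine and matches how the cited reference proceeds.
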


  % \begin{theorem}[\citet*{Daniely2011MulticlassLA}]
  % The following are equivalent for a $k$-list hypothesis class $\C$:
  % \begin{itemize}
  %     \item $\C$ satisfies uniform convergence.
  %     \item $\Gk(\C)<\infty$
  % \end{itemize}
  %     Furthermore, the uniform convergence rate of $\C$ is $\Theta\Bigl(\sqrt{\frac{\Gk(\C)}{n}}\Bigr)$.
  % \end{theorem}

\paragraph{Daniely-Shwartz Dimension.}

To define the DS dimension we first need to introduce the concept of a pseudo-cube, which is a generalization of the boolean hypercube $\{0,1\}^n$ when the label space is $\Y=[m]$.
We say that $y,y'\in [m]^n$ are neighbors in direction $i$ (or simply $i$-neighbors) if $y_j=y_j'$ if and only if $j\neq i$.
We say that $B\subset \Y^n$ is a pseudo-cube of rank $d$ if each $y\in B$ has at least $d$ distinct neighbors in each direction $i\in [n]$.
\begin{definition}[DS Dimension]
    A sequence $S=\{x_i\}_{i=1}^n$ is $\DS_k$-shattered by $\C$ if the set $\{(y_1,y_2\dots y_n): \exists c\in \C,\; \forall i \;\; y_i\in c(x_i)\}$ contains a pseudo-cube of rank $k$.
    The $k$-DS dimension of $\C$ is $\DS_k(\C)$ the size of the largest $\DS_k$ shattered sequences, or infinity if there are $\DS_k$-shattered sequences of arbitrary size.
\end{definition}
 As shown by \cite{Charikar2022ACO} the $\DS_k$ is the combinatorial dimension that characterizes $k$-list learnability in the following sense
\begin{theorem}[\cite{Charikar2022ACO}]\label{t:finite DS-k vs learnability }
  A concept class $\C$ is $k$-list PAC learnable if and only if $\DS_k(\C)<\infty$.
\end{theorem}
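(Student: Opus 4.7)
The plan is to prove both directions of the equivalence independently.

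\textbf{Necessity} ($\DS_k(\C) = \infty \Rightarrow \C$ is not $k$-list learnable). I would use a probabilistic hard-distribution argument. Fix $n$ large, a $\DS_k$-shattered sequence $x_1, \ldots, x_n$, and a witnessing pseudo-cube $B \subseteq \Y^n$ of rank $k$. Draw $c$ uniformly from $B$ and let the target distribution be uniform on $\{(x_j, c(x_j)) : j \in [n]\}$. By the pseudo-cube property, for any $y \in B$ and any direction $i$ there are at least $k+1$ elements of $B$ that agree with $y$ on coordinates $j \neq i$ (namely $y$ itself and its $\geq k$ distinct $i$-neighbors). Hence, conditioned on $(c(x_j))_{j \neq i}$, the label $c(x_i)$ takes at least $k+1$ distinct values with equal probability. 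A coupon-collector-type calculation shows that, for sample size $m = o(n)$, a random test index $i$ is missing from the training sample with constant probability; in this event any $k$-list prediction at $x_i$ misses the true label with probability at least $1/(k+1)$. Averaging, the expected loss is bounded away from zero uniformly in $m$, so $\C$ is not $k$-list learnable.

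\textbf{Sufficiency} ($\DS_k(\C) < \infty \Rightarrow \C$ is $k$-list learnable). I would adapt the one-inclusion graph algorithm of Haussler--Littlestone--Warmuth to the pseudo-cube setting. Given $n+1$ points $x_1,\ldots,x_{n+1}$, form a hypergraph $\G$ on the vertex set $\C\vert_{x_1,\ldots,x_{n+1}} \subseteq \Y^{n+1}$, where for each direction $i$ the hyperedges collect labelings that form a local rank-$k$ pseudo-cube structure along coordinate $i$. Given $n$ training samples and one hold-out point, the learner uses an orientation of the incident hyperedge at the observed labeling to select a $k$-list prediction for the hold-out coordinate. A standard leave-one-out symmetrization then bounds the expected list loss by (max out-degree)$/(n+1)$.

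The \emph{main obstacle} is constructing an orientation of $\G$ with maximum out-degree $O(\DS_k(\C))$. In the $k=1$ binary case this reduces to the classical Haussler density lemma: any subgraph of the one-inclusion graph has edge density at most $\vc(\C)$. In the list setting one needs the analogue that every non-empty induced sub-hypergraph on a set $U$ of labelings contains a vertex whose number of incident pseudo-cube hyperedges is $O(\DS_k(\C))$. I would try to establish this by contrapositive: if every vertex had many incident rank-$k$ hyperedges, one could iteratively peel off coordinates and greedily extract an unboundedly large pseudo-cube of rank $k$, contradicting $\DS_k(\C) < \infty$. A careful Hall-type fractional matching would then convert the density bound into an orientation with the required out-degree. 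Plugging the bound into the one-inclusion framework produces a realizable-case $k$-list learner with sample complexity polynomial in $\DS_k(\C)$, and the agnostic case follows via Theorem~\ref{t:realvsagn}.
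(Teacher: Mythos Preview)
The paper does not contain a proof of this theorem. It is stated with attribution to \cite{Charikar2022ACO} and used throughout as a black box (e.g., in the proof of Theorem~\ref{t:UC} via Theorem~\ref{graph dimension vs DS dimension}). There is therefore no ``paper's own proof'' to compare your proposal against.

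That said, your outline is broadly in line with how the result is actually established in the cited work. The necessity direction is essentially correct: your observation that every fiber of the pseudo-cube (fixing all coordinates but one) has size at least $k+1$ and that the uniform distribution on $B$ restricts to a uniform distribution on each fiber is exactly what drives the $\Omega(1/(k+1))$ lower bound on unseen coordinates. For sufficiency, the one-inclusion-graph strategy you describe is indeed the route taken by \cite{Charikar2022ACO}; the density/orientation lemma you flag as the main obstacle is precisely the technical heart of their argument, and it is not something one can dispatch in a paragraph---your ``iteratively peel off coordinates and greedily extract a large pseudo-cube'' sketch is morally right but hides real work (one needs a shifting argument adapted to pseudo-cubes, not just a greedy extraction). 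If you were writing this up formally you would want to either cite their density lemma directly or reproduce its proof in full; the Hall-type matching step afterwards is standard.
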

We remark that in \cite{Charikar2022ACO} the above was shown only in the case where $\C$ is a function class, but the proofs work in the exact same way in the case that $\C$ is a $k$-list function class.
% \begin{theorem}[\cite{Charikar2022ACO}]
%   The following are equivalent for a concept class~$\C$:
%   \begin{itemize}
%       \item $\C$ is $k$-list PAC learnable.
%       \item $\DS_k(\C)<\infty$
%   \end{itemize}
%       % Furthermore, when the label space $\Y$ is finite, the PAC learning curve of $\C$ is $O\Bigl({\frac{\\DS_k(\C)\log\lvert Y\rvert}{n}}\Bigr)$.
%   \end{theorem}

\vspace{-3mm}

\section{Technical Overview}\label{sec:overview}
 In this section, we overview the main ideas which are used in the proofs.
	We also try to guide the reader on which of our proofs reduce to known arguments and which require new ideas.
 \subsection{Sample Compression Schemes}\label{proof overview sample compression}
%  \textcolor{red}{TOM: The notion of coverability is mostly used implicitly in the paper. We only define it here and in the open questions part and in the main result we work directly with the covering size. Maybe we want to change this? also maybe we want a different name than coverable/covering size?}
%  \textcolor{red}{Shay: not sure, it sounds ok to me but I need to read more carefully\ldots}
% \textcolor{red}{TOM: Originally I used the names canonical disambiguation and trivial disambiguation since the trivial disambiguation is the most natural way to define it and the canonical disambiguation preserves all the properties of the original class. This, however, seems to suggest that one of those is the more correct and the other is less important, which was true in our original work but is no longer the case, maybe we should change the names? (not sure to what though) }
% \textcolor{blue}{Shay: perhaps lets use the notation ``free disambiguation'' for the disambiguation that assigns a distinct label to each function and the notation ``minimal disambiguation'' for the disambiguation that assigns the same label to all functions?}
Theorems \ref{thm:2-comp}, \ref{t:2vsk}, and \ref{t:1vsk} provide impossibility results for sample compression schemes. These types of results are relatively uncommon in the literature, underscoring the technical challenges involved in comprehensive reasoning about all sample compression schemes. We circumvent these challenges by defining a simpler combinatorial notion of \emph{coverability}, which is implied by compressibility. In essence, if a small-sized sample compression exists, then small covers must also exist, and conversely, the lack of small covers indicates the absence of such a compression scheme.

\subsubsection{Coverability}
A $k$-list concept class $\H$ is a $k$-list cover of a ($1$-list) function class $\C$ if for any $c\in \C$ there is $h\in \H$ such that $c(x)\in h(x)$ for all $x\in \X$. We say that $\C$ is $k$-list coverable if there is some polynomial $p$ such that for any finite $S\subset \X$, the finite class $\C\vert_S = \{c\vert_S : c\in\C\}$ has a $k$-list cover of size $p(\lvert S\rvert)$. (Recall that $c\vert_S$ denotes the restriction (or projection) of the function $c$ to the set $S$.)

This generalizes the notion of \emph{disambiguation of partial concept classes}:
A partial concept class $\C$ is a class of partial functions $c:\X\to\Y\cup\{\star\}$, where $\star\notin \Y$ and $c(x)=\star$ means that $c$ is undefined on $x$. A class $\H$ is said to disambiguate $\C$ if for every $c\in\C$ there exists $h\in \H$ such that $h(x)=c(x)$ whenever $c(x)\neq\star$. Notice that $1$-covers are equivalent to disambiguations.
Partial concept classes and disambiguations were studied by~\citep*{Long01,Attias:22,Alon2021ATO,HatamiHM23,Cheung:23}.

In \Cref{compression gives small growth rate} we prove that if $\C$ is $k$-list compressible then it is $k$-list coverable.
Thus, to show that $\C$ is not $k$-list compressible it suffices to show that it is not $k$-list coverable. Given this reduction, the proof can be divided into the following steps:
\begin{itemize}
    \item[(i)] By \cite{Alon2021ATO} there is a partial concept class $\C$ that is $1$-list learnable but not $1$-list coverable.
    \item[(ii)] Boosting the hardness of $\C$: by a direct sum argument we show that the $k$-fold power $\C^k$ is $1$-list learnable but not $k$-list coverable.   
    \item[(iii.a)]  \Cref{thm:2-comp,t:2vsk} follow\footnote{For \Cref{thm:2-comp} we just use the case of $k=1$.} by taking the \emph{minimal disambiguation} of $\C^k$ (defined below). 
    \item[(iii.b)] \Cref{t:1vsk} follows by taking the \emph{free disambiguation} of $\C^k$ (defined below).
\end{itemize}
% The first part is just a direct application of \cite{Alon2021ATO} with the fact that a $1$-list cover of a partial concept class is just disambiguation of it. The second part is our main use of direct sums and relies on Lemma \ref{prodCover}. Which is the quantitative version of the claim that if $\F$ is not $k$-list coverable and $\G$ is not $k'$-list coverable then $\F\otimes\G$ is not $(k+k')$-list coverable. 
The first two steps yield a partial concept class that is $1$-list learnable but not $k$-list compressible. The next two steps are parallel to each other, these steps employ two types of disambiguations which complete the partial concept class to a total concept class in two ways that witness \Cref{t:2vsk,t:1vsk}.

\subsubsection{Free Disambiguations}
\begin{definition}[Free Disambiguation]  \label{def:Free Disambiguation}  
    Let $\C$ be a partial concept class. For each $c\in \C$ let $y_c$ be a distinct new label. Let $\hat c$ denote the completion of $c$ such that $\hat c(x)=y_c$ whenever $c(x)=\star$. 
    % Define 
    % \[
    % \hat c(x)=
    % \begin{cases}
    % c(x)    &c(x)\neq\star\\
    % y_c     &c(x)=\star.
    % \end{cases}  
    % \]
    The class $\Hat\C = \{\hat{c} : c\in \C\}$ is called the \underline{free disambiguation} of $\C$.
\end{definition}
That is, each function in $\C$ is disambiguated by replacing all instances of $\star$ with a unique label for that function. 
%Note that if $\C$ is infinite then $\Hat\C$ has an infinite label space. 
The following lemma is the key to step (iii.b):
\begin{lemma}\label{lem:freedis}
    A partial concept class $\C$ is $k$-list learnable (compressible) if and only if its free disambiguation $\hat{\C}$ is $k$-list learnable (compressible)
\end{lemma}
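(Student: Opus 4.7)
The plan is to prove both equivalences---for learnability and for compressibility---by constructing explicit reductions in each direction. The crucial structural feature of the free disambiguation is the bijection $c \leftrightarrow y_c$: a single sample point labeled $y_c$ uniquely identifies $\hat c$.

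For the easy direction (from $\hat\C$ to $\C$), I observe that every $\C$-realizable sample or distribution is automatically $\hat\C$-realizable, since a partial $c \in \C$ is witnessed by $\hat c \in \hat\C$, which agrees with $c$ on all points where $c \neq \star$. So we simply run the given learner or compression scheme for $\hat\C$ directly on the $\C$-realizable input. If the output list contains any labels of the form $y_c$, we discard them and pad arbitrarily back to size $k$ with labels in $\Y$; this projection does not affect the loss because all labels appearing with positive probability lie in $\Y$.

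For the harder direction (from $\C$ to $\hat\C$), the new learner $\hat\A$ case-splits on whether the input sample $S$ contains any label of the form $y_c$. If yes, $\hat\A$ outputs the singleton list $\{\hat c\}$ (padded to size $k$). If no, every label lies in $\Y$, so $S$ is a $\C$-realizable partial-concept sample and $\hat\A$ returns $\A(S)$. The compressibility analog is structurally identical: the new compressor $\hat\kappa$ retains one $y_c$-labeled point when available (padding to the original size $d$ with arbitrary repetitions) and otherwise calls $\kappa$; the reconstructor $\hat\rho$ inspects its input and either returns $\hat c$ or invokes $\rho$. Consistency with $S$ is immediate in both branches of the compression reduction, so no further argument is needed there.

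The analysis of $\hat\A$ is the main technical point. Fix a $\hat\C$-realizable distribution $\D$ with witness $\hat c^*$ and set $q = \Pr_\D[y = y_{c^*}]$. With probability $1 - (1-q)^n$ the sample contains a $y_{c^*}$-labeled point and $\hat\A$ outputs $\hat c^*$ with zero loss. On the complementary event $S$ is i.i.d.\ from $\D_{\mathrm{reg}} = \D \mid y \in \Y$, which is realized by the partial concept $c^*$, so $\E[\mathtt L_{\D_{\mathrm{reg}}}(\A(S))] \leq \eps_n$. Since $\mathtt L_\D(h) \leq q + \mathtt L_{\D_{\mathrm{reg}}}(h)$ whenever $h$ has labels in $\Y$, the total expected loss of $\hat\A$ is at most $(1-q)^n q + (1-q)^{n} \eps_n \leq \tfrac{1}{e n} + \eps_n \to 0$. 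The main obstacle is precisely this balancing argument, in which the rare-but-costly regime (small $q$, so $y_{c^*}$ is seldom observed, yet every $y_{c^*}$-point incurs unit loss when the output uses only labels in $\Y$) is controlled by the elementary bound $(1-q)^n q \leq 1/(en)$.
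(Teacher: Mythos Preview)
Your argument is correct. The paper does not actually prove this lemma in detail; it simply remarks (in Section~\ref{sec:proof1vsk}) that ``one can easily verify'' the equivalence, so there is no paper proof to compare your approach against. Your constructions---projecting away the $y_c$ labels in one direction, and case-splitting on whether a $y_c$ label appears in the other---together with the balancing bound $(1-q)^n q \le 1/(en)$, are exactly the natural way to fill in these details.

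One small point worth making explicit for the learnability direction: your analysis fixes a single witness $\hat c^{*}\in\hat\C$ realizing $\D$. When $q>0$ such a witness exists and is unique, because any sample containing a $y_c$-labeled point forces $c=c^{*}$, and pairing that point with any other support point $(x,y)$ then forces $\hat c^{*}(x)=y$; when $q=0$ the distribution $\D=\D_{\mathrm{reg}}$ is directly $\C$-realizable and no witness identification is needed. This justifies treating $c^{*}$ as well-defined in your case analysis, but it deserves a sentence. Similarly, the claim that conditioned on the event ``no $y_{c^{*}}$ label appears'' the sample is i.i.d.\ from $\D_{\mathrm{reg}}$ is correct (the conditioning event is a product event), and you might say so.
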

We note that the free disambiguation was used in \cite{pabbaraju:23} on the partial concept class by \cite{Alon2021ATO} to establish the existence of a learnable class that is not compressible, here we apply it more generally to the classes generated by the direct sum argument.

\subsubsection{Minimal Disambiguations}
Since the free disambiguation invetiably has an infinite label space, it cannot be used to derive Theorems \ref{thm:2-comp} and \ref{t:2vsk}. For that, we introduce a different type of disambiguation:
\begin{definition}[Minimal Disambiguation]\label{def:Minimal Disambiguation}
    Let $\C$ be a partial concept class and let $y_{\star}$ be a new label. For a partial concept $c$, let 
    $\bar c$ denote the completion of $c$ such that $\bar c(x)=y_\star$ whenever $c(x)=\star$.
    % \[
    % \bar c(x)=
    % \begin{cases}
    % c(x)    &c(x)\neq\star\\
    % y_\star     &c(x)=\star.
    % \end{cases}  
    % \]
    The class $\Bar\C = \{\bar{c} : c\in \C\}$ is called the \underline{minimal disambiguation} of $\C$.
\end{definition}
So, all instances of $\star$ are disambiguated by the same new label. 
Here, the label space of $\Bar\C$ has just one more label than that of $\C$.
In particular, the label space of $\Bar\C$ is finite whenever the label space of $\C$ is finite. The following lemma is the key to step (iii.a):
\begin{lemma}\label{lem:mindis}
    Let $\C$ be a partial concept class over a finite label space and let $\Bar{\C}$ be its minimal disambiguation. Then, (i) if $\C$ is $k$-list learnable then $\Bar{\C}$ is $(k+1)$-list learnable, and (ii) if $\C$ is $k$-list learnable and $\Bar{\C}$ is $(k+1)$-list coverable  then $\C$ is $k$-list coverable.
\end{lemma}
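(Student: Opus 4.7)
I lift the $k$-list learner $\A$ for $\C$ to a $(k+1)$-list learner $\bar\A$ for $\Bar\C$: on input $S\sim\D^n$, separate the sub-sample $S_\C:=\{(x,y)\in S : y\neq y_\star\}$, apply $\A$ to $S_\C$ to obtain a $k$-list hypothesis $h$, and return $\bar h(x):=h(x)\cup\{y_\star\}$. The conditional distribution $\D':=\D\mid y\neq y_\star$ is realized by the partial concept $c\in\C$ underlying the realizing $\bar c\in\Bar\C$, and one checks that $\mathtt{L}_\D(\bar h)=(1-p)\,\mathtt{L}_{\D'}(h)$ where $p:=\Pr_\D[y=y_\star]$. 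Since $|S_\C|$ is binomial and, conditional on its size, $S_\C$ is i.i.d.\ from $\D'$, a Chernoff bound combined with the learning curve of $\A$ yields $\E[\mathtt{L}_\D(\bar h)]=o_n(1)$ uniformly in $p$. Part (i) is thus essentially routine bookkeeping.

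\textbf{Plan for part (ii).} Fix a finite $S\subseteq\X$ and let $\H$ be the given polynomial-size $(k+1)$-list cover of $\Bar\C|_S$. For each $h\in\H$, partition $S=S_0\sqcup S_h$ with $S_h:=\{x\in S:y_\star\notin h(x)\}$, and set $\C_h:=\{c\in\C:\bar c \text{ is covered by } h\}$. On $S_0$ I use the fixed $k$-list $g(x):=h(x)\setminus\{y_\star\}$: for any $c\in\C_h$, either $c(x)=\star$ (no constraint) or $c(x)\in h(x)\setminus\{y_\star\}=g(x)$, so the cover condition holds for free. On $S_h$ every $c\in\C_h$ is defined (since $y_\star\notin h(x)$ forces $c(x)\neq\star$) and takes values in the $(k+1)$-subset $h(x)\subseteq\Y$, so the remaining task is to produce a polynomial-size $k$-list cover of the restricted sub-class $\C_h|_{S_h}$. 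Taking the union of the per-$h$ covers over all $h\in\H$, and concatenating with the fixed $g$ on $S_0$, then yields a $k$-list cover of $\C|_S$ of polynomial size.

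\textbf{Main obstacle.} The heart of the argument is the per-$h$ bound on the $k$-list cover of $\C_h|_{S_h}$. The $k$-list learnability of $\C$ together with \Cref{t:finite DS-k vs learnability } gives $\DS_k(\C_h|_{S_h})\leq\DS_k(\C)<\infty$; because the effective label space at each point has exactly $k+1$ elements, every rank-$k$ pseudo-cube is the full cube, and so $\DS_k$-shattering on $S_h$ coincides with full shattering. I would then invoke the quantitative $\Gk$-versus-$\DS_k$ relation from \Cref{t:UC} (after embedding $\C_h|_{S_h}$ into a suitable $k$-list class so that the theorem's hypothesis applies) to bound $\Gk(\C_h|_{S_h})$, and apply Sauer--Shelah to the associated binary graph class to obtain $|\C_h|_{S_h}|\leq\mathrm{poly}(|S_h|)$; assigning each $c\in\C_h|_{S_h}$ an arbitrary exclusion $e_c(x)\in h(x)\setminus\{c(x)\}$ then yields a polynomial $k$-list cover. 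The delicate point is bridging the $1$-list nature of $\C_h|_{S_h}$ with the $k$-list formulation under which \Cref{t:UC} is stated, which I expect to resolve by extracting the required multiclass Sauer--Shelah style bound directly from the bounded $\DS_k$ dimension together with the finite label space hypothesis.
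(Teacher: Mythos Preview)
Your part (i) and the decomposition in part (ii) match the paper exactly: split $S$ by whether $y_\star\in h(x)$, use $h(x)\setminus\{y_\star\}$ on $S_0$, and on $S_h$ enumerate the restrictions in $\C_h|_{S_h}$, covering each one individually. The divergence is in how you bound $|\C_h|_{S_h}|$, and your detour through \Cref{t:UC} does not go through. That theorem controls $\Gk$ by $\DS_k$ only for genuine $k$-list classes (each concept outputs a set of size $k$); for a $1$-list class there is no such bound when $k>1$, and no embedding repairs this. Concretely, if $\C_h|_{S_h}$ happens to use only two of the $k+1$ labels available in $h(x)$, say $\C_h|_{S_h}=\{0,1\}^{S_h}$, then $\DS_k(\C_h|_{S_h})=0$ (a rank-$k$ pseudo-cube needs $k+1$ distinct labels in each coordinate) while the graph dimension equals $|S_h|$. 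Your fallback of a direct multiclass SSP bound from $\DS_k$ fails on the same example: \Cref{Sauer–Shelah lemma} carries a prefactor $k^{|S_h|}$, and indeed $|\C_h|_{S_h}|=2^{|S_h|}$ here.

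The paper's argument (\Cref{growth rate of minimal disimbagution}) instead applies \Cref{Sauer–Shelah lemma} with parameter $k=1$ and $d=\DS_1(\C)$: since $\C_h|_{S_h}$ is a class of \emph{total} $1$-list functions contained in $\C|_{S_h}$, its $\DS_1$ dimension is at most $\DS_1(\C)$, and the SSP bound with $k=1$ has no exponential prefactor, yielding $|\C_h|_{S_h}|\leq (mn)^d$ immediately. So the step really uses $1$-list learnability of $\C$ (finite $\DS_1$), which is precisely the hypothesis stated in the quantitative \Cref{growth rate of minimal disimbagution} and which holds in all of the paper's applications. Your full-shattering observation is correct but unnecessary once one works directly with $\DS_1$.
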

% \begin{lemma}\label{lem:mindis}
%     Let $\C$ be a partial concept class over a finite label space and let $\Bar{\C}$ be its minimal disambiguation. Then,
%     \begin{itemize}
%         \item If $\C$ is $k$-list learnable then $\Bar{\C}$ is $(k+1)$-list learnable.
%         \item If $\C$ is $k$-list learnable and $\Bar{\C}$ is $k+1$-list coverable  then $\C$ is $k$-list coverable.
%     \end{itemize}
% \end{lemma}
The above lemma is significantly more nuanced then Lemma~\ref{lem:freedis}. 
In particular it replaces compressibility with coverability and provides implications in one direction rather than equivalences. 
Nevertheless it suffices to yield \Cref{thm:2-comp,t:2vsk}: we apply it to the basic partial concept class from \citep{Alon2021ATO} to deduce \Cref{thm:2-comp}, and to the classes generated by the direct sum argument to deduce \Cref{t:2vsk}.
It would be interesting to find a simpler and more direct argument like in Lemma~\ref{lem:freedis}.

\subsection{Uniform Convergence}
\Cref{t:UC} states an equivalence between learnability, agnostic learnability, and uniform convergence for $k$-list concept classes $\C$. It is clear that uniform convergence implies agnostic learnability, which implies learnability. Thus, it remains to show that learnability implies uniform convergence. Towards this end it suffices to show that if the graph dimension $\Gk(\C)$ (which controls uniform convergence) is unbounded then also the DS dimension $\DS_k(\C)$ (which control list learnability) is unbounded. 

Assume $S=\{x_i\}_{i=1}^n$ is $\Gk$-shattered by $\C$; thus there exists a pivot $p\in \Y^S$ and functions $\{c_b\}_{b\in\{0,1\}^S}$ such that $p_i\in c_b(x_i)$ if and only if $b_i=1$. For each $c_b$ let $A_b:=\{y\in \Y^S\;:\;\forall i\; y_i\in c_b(x_i)\}$ denote its set of realizable sequences. 
To lower bound $\DS_k(\C)$ we lower bound the size of the union $\lvert\bigcup_{b\in\{0,1\}^n}A_b\rvert$ and apply a version of the Sauer–Shelah-Perles lemma from \cite{Charikar2022ACO}. Our lower bound on $\lvert\bigcup_{b\in\{0,1\}^n}A_b\rvert$ is based on a coding theoretic approach and consists of the following steps:
% To prove \Cref{t:UC} we wish to show that if 
%  graph dimension is large so is the number of realizable patterns, then we can apply a version of the Sauer–Shelah lemma from \cite{Charikar2022ACO} to deduce that the $\DS_k$ is large. This approach naturally lead us to consider $k$-list functions as sets of realizable sequences. Formally, given $k$-list function class $\F$ and a  $\F$-shattered set $S=\{x_i\}_{i=1}^n$ we have some pivot $p\in \Y^S$ and functions $\{f_b\}_{b\in\{0,1\}^S}\subset \F\vert_S$ such that $p_i\in f_b(x_i)$ iff $b_i=1$. Then for each $f_b$ we denote its set of realizable functions by $A_b:=\{y\in \Y^S\;:\;\forall i\; y_i\in f_b(x_i)\}$. With notation we can summarise our approach in the following way 
 \begin{enumerate}
     \item We first upper bound the size of the intersection $\lvert A_b\cap A_{b'}\rvert$ in terms of the Hamming distance between $b$ and $b'$.
     \item We then utilize the above within an inclusion-exclusion bound on \(\{A_b\}_{b\in R }\), where $R\subseteq\{0,1\}^n$ is a random subset of the cube. This yields a lower bound on \(\lvert\bigcup_{b\in R}A_b\rvert\) and hence also on  \(\lvert\bigcup_{b\in\{0,1\}^n}A_b\rvert\).
     % \item  We apply the Sauer–Shelah lemma from \cite{Charikar2022ACO} to lower bound $\DS_k(\C)$
 \end{enumerate}
The first part is just the simple observation that if $y\in c_b(x)$, $y\notin c_{b'}(x)$ then $\lvert c_b(x)\cap c_{b'}(x)\rvert \leq k-1$. This yields the bound 
\begin{align*}
    \lvert A_b\cap A_{b'}\rvert\leq \prod_{i=1}^n \lvert c_b(x_i)\cap c_{b'}(x_i)\rvert\leq k^n(\frac{k-1}{k})^{d_H(b,b')},
\end{align*}
where $d_H(b,b')=\lvert\{i\in[n]\;:\; b_i\neq b_i'\}\rvert$ is the Hamming distance between $b$ and $b'$. 

This upper bound on the sizes of the pairwise intersections naturally suggests us to employ the inclusion-exclusion principle to lower bound the size of the entire union. 
Unfortunately, directly applying it to the entire set $\{A_b\}_{b\in \{0,1\}^n}$ does not work, because the average hamming distance is too small. To overcome this issue we restrict ourselves to a random subset of $\{0,1\}^n$. Such a set is both large and has a large average Hamming distance.

\section{Sample Compression Proofs}\label{sec:Sample Compression Proofs}
Here we prove our main results concerning sample compression.
We begin with proving equivalence between variable size compressibility and learnability in Section \ref{sec:Learnability is Equivalent to Variable Size Compressibility}.
Then in Section \ref{sec:Impossibility Results for List Sample Compression} we prove our main results on sample compression: \Cref{thm:2-comp,t:2vsk,t:1vsk}.

\subsection{Learnability is Equivalent to Variable Size Compressibility}\label{sec:Learnability is Equivalent to Variable Size Compressibility}

The next definition extends the concept of sample compression schemes to permit the compression size to vary based on the size of the input sample.
\begin{definition}[Variable Size Compression]
%Let $k,k'\in\mathbb{N}$. 
A $k'$-list concept class $\C$ is $k$-list (variable-size) compressible
if there exist a sublinear sequence $d(n)=o(n)$ and a reconstruction function $\rho:(\X\times\Y)^\star\to {\Y \choose k}^\X$ such that the following holds.
For every sample $S\in(\X\times\Y)^n$ there exists $S'=\big((x_1,y_1),\ldots (x_{d(n)},y_{d(n)})\big)$, where $(x_i,y_i)\in S$ for all $i\leq d(n)$ such that
    \begin{equation}\label{eq:compressibilityvar}
    \mathtt{L}_S(\rho(S')) \leq \inf_{c\in \C}\mathtt{L}_S(c).
    \end{equation}
    If \Cref{eq:compressibilityvar} only holds for $\C$-realizable samples then we say that $\C$ is $k$-list (variable-size) compressible in the realizable case.
    As an important special case, we say that $\C$ is compressible of logarithmic size if $d(n)=O(\log n)$. 

% if there exists a $k$-list learning rule $\rho$ and a sequence $d(n) = o(n)$ such that the following holds. 
% For every sample $S\in(\X\times\Y)^n$ there exists $S'=\big((x_1,y_1),(x_2,y_2),\dots (x_{d(n)},y_{d(n)})\big)$, where $(x_i,y_i)\in S$ for all $i\leq d(n)$ such that
%     \begin{equation}\label{eq:compressibility}
%     \mathtt{L}_S(\rho(S')) \leq \inf_{c\in \C}\mathtt{L}_S(c).
%     \end{equation}
%     If \Cref{eq:compressibility} only holds for $\C$-realizable samples then we say that $\C$ is $k$-list compressible in the realizable case. \textcolor{red}{Shay: I think we should define compressibility only in the finite case. That is, not to include sublinear compression in the definition. This is the focus of the paper and there is only one secondary result about sublinear compression schemes. Maybe we can give the main definition for the finite case and as a variant introduce the variant of variable size sample compression schemes.}
% Let $\C$ denote the image of $\rho$:
% \[\H = \bigl\{\rho(S) : S\in(\X\times\Y)^d\bigr\}.\]
% If $\H\subseteq\C$ then we say that $\C$ is properly compressible.  
\end{definition}

We state and prove two basic results regarding the connection between compressibility and learnability: (i) an equivalence between learnability and (variable-size) compressibility,
and (ii) a quantitative bound on the gnerealization error of sample compression schemes.

\begin{theorem}\label{k-Learning vs O(log(n))- Compressing}[Learnability and Logarithmic-Compressibility]
    % A  concept class $\F\subset \Y^\X$ is $k$-list learnable iff it is $k$-list compressible with logarithmic variable size of $f(n)=O(\log n)$.
    Let $k,k'\in\mathbb{N}$. Then, the following statements are equivalent for a $k'$-list concept class $\C$:
    \begin{enumerate}
        \item $\C$ is $k$-list learnable in the agnostic setting.
        \item $\C$ is $k$-list learnable in the realizable setting.
        \item $\C$ is $k$-list compressible in the agnostic setting with compression size $d(n)=O(\log n)$.
        \item $\C$ is $k$-list compressible in the realizable setting with compression size $d(n)=O(\log n)$.
    \end{enumerate}
\end{theorem}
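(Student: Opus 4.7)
My plan is to close the cycle of implications among the four statements. The equivalence $(1) \Leftrightarrow (2)$ is already \Cref{t:realvsagn}, and $(3) \Rightarrow (4)$ is immediate: on any $\C$-realizable sample $S$ we have $\inf_{c\in\C}\mathtt{L}_S(c)=0$, so the agnostic compression guarantee specializes to the realizable one. Hence only two substantive implications remain: $(4) \Rightarrow (2)$, showing that a logarithmic-size sample compression generalizes, and $(2) \Rightarrow (3)$, constructing an agnostic logarithmic-size compression from an agnostic learner. I would handle the latter by first proving the realizable version $(2) \Rightarrow (4)$ via list-boosting and then lifting to $(3)$ via a realizable-to-agnostic reduction.

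\textbf{Generalization from compression.} For $(4) \Rightarrow (2)$ I would adapt the Littlestone--Warmuth argument to the $k$-list, variable-size setting. Given a realizable compression of size $d(n) = O(\log n)$ with reconstructor $\rho$, and an i.i.d.\ realizable sample $S$ of size $n$, the reconstructed $k$-list hypothesis lies in the family $\{\rho(S'') : S'' \text{ a length-}{\le}d(n) \text{ subsequence of } S\}$, which has cardinality at most $\binom{n}{d(n)} \le n^{O(\log n)}$. Conditioning on which indices lie in the compression set, the remaining $n - d(n)$ examples are i.i.d.\ from $\D$, and on them $\mathbf{1}[y \notin \rho(S')(x)]$ is Bernoulli with mean $\mathtt{L}_\D(\rho(S'))$. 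A Chernoff bound together with a union bound over all $n^{O(\log n)}$ possible compression subsets yields $\mathtt{L}_\D(\rho(S')) = \tilde O(\log^2 n / n) = o(1)$ with high probability, establishing realizable $k$-list learnability, hence by \Cref{t:realvsagn} also agnostic learnability.

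\textbf{Compression from learnability.} For $(2) \Rightarrow (4)$ I would use a list-boosting construction in the spirit of \citet*{Brukhim:23}. Let $\A$ be a realizable $k$-list learner with fixed sample complexity $m_0$ achieving expected loss below $1/3$. On a realizable sample $S$ of size $n$, run $T = O(\log n)$ boosting rounds: at round $t$, draw an $m_0$-sized subsample from the current weighted distribution on $S$, feed it to $\A$ to produce $h_t$, and update weights multiplicatively by upweighting examples $(x,y)$ with $y \notin h_t(x)$. A list-wise weighted vote over $h_1,\dots,h_T$ (taking the top-$k$ labels by total weight at each point) yields a $k$-list hypothesis consistent with $S$. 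The compressor stores the $T m_0 = O(\log n)$ subsampled examples together with any auxiliary bits for the boosting schedule, the latter encoded via orderings or duplications as in \cite{moranyehudayoof:16}; the reconstructor replays $\A$ on each subsample and aggregates. For the lift $(4) \Rightarrow (3)$ I would follow the reduction of \cite{david2016statistical}: given an arbitrary $S$, identify (via an empirical risk minimizer) a near-maximal $\C$-realizable subsequence, run the realizable compressor on it, and append $O(\log n)$ calibration examples certifying that the excess empirical loss does not exceed $\inf_{c\in\C}\mathtt{L}_S(c) + o(1)$.

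\textbf{Expected main obstacle.} The technically heaviest step is the list-boosting aggregation. Unlike the binary majority vote, combining $T$ $k$-list hypotheses into a single $k$-list hypothesis that is simultaneously consistent with $S$, has list size exactly $k$, and is expressible as a deterministic function of the stored subsample requires care: the voting rule must break ties in a canonical way, and all randomness used by the booster must be absorbed into the ordering of the compression set so that the reconstructor depends only on the stored examples. These points should be resolved by combining the list-boosting machinery of \citet*{Brukhim:23} with the encoding trick of \cite{moranyehudayoof:16}; after that, the remaining pieces (Chernoff-based generalization and the realizable-to-agnostic lift) are routine.
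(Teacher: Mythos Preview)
Your plan follows essentially the same cycle as the paper's proof ($1\Rightarrow 2\Rightarrow 4\Rightarrow 3\Rightarrow 1$), with the same generalization-from-compression argument and the same top-$k$ aggregation idea, but two points deserve comment. First, for $(2)\Rightarrow(4)$ the paper does not run iterative boosting; instead it invokes the min-max theorem on the zero-sum game between hypotheses $\{\A(S'):S'\in\binom{S}{d}\}$ and examples in $S$, obtains a mixed strategy $\mu$ with value ${<}\,\frac{1}{2(k+1)}$, samples $T=O(\log n)$ hypotheses from $\mu$, and uses a clean pigeonhole: if $y_i$ were outside the top-$k$ list then $k{+}1$ labels would each receive vote weight $>\frac{kT}{k+1}$, exceeding the total mass $kT$. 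This sidesteps your worries about encoding the boosting schedule and tie-breaking, since only the $T$ subsamples need to be stored and no auxiliary bits are required. Second, your lift $(4)\Rightarrow(3)$ is slightly off: the definition demands $\mathtt{L}_S(\rho(S'))\le\inf_{c\in\C}\mathtt{L}_S(c)$ exactly, not with $o(1)$ slack, so ``calibration examples'' are both unnecessary and insufficient. The paper's reduction is simpler: take a \emph{maximal} $\C$-realizable subsequence $S'\subseteq S$; then every $c\in\C$ errs on at least $\lvert S\setminus S'\rvert$ points, while the realizable compression of $S'$ errs on at most $\lvert S\setminus S'\rvert$ points of $S$, giving the exact inequality with no additional examples.
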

% Given \Cref{k-Learning vs O(log(n))- Compressing} we relinquish the distinction between the realizable and agnostic cases, and instead refer to concept classes as $k$-list compressible or $k-$list learnable when they are learnable or compressible in either of the equivalent definitions.
\begin{proposition}[Compression Size vs.\ Generalization Error]\label{prop:scsgen}
Let $k,k'\in\mathbb{N}$, $\C$ be a $k'$-list concept class that is compressible with compression size $d(n)$, and set $\A:=\rho\circ\kappa$. Then for any $\D$ realizable distribution and $\varepsilon>0$ we have for all $n>0$
\begin{align*}
\Pr\Big(\mathtt{L}_{\D}\big(\A(S)\big)>\varepsilon\Big)\leq 2\exp\big(d(n)\ln(n)-\varepsilon^2n\big). 
\end{align*}
And for all $n>0$ the learning curve $\varepsilon_n(\A\vert \D)$ satisfies
\begin{align*}
    \varepsilon_n(\A\vert \D)\leq \sqrt{\frac{(d(n)+1)\ln(n)}{n}}+\frac{2}{n}.
\end{align*}
\end{proposition}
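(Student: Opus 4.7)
The plan is to execute the standard sample-compression generalization argument of \cite{littlestone:86}, adapted to the list setting, and then translate the resulting tail bound into an expectation bound in one line. The only check specific to list learning is that the indicator $1_{y\notin h(x)}$ that appears in $\mathtt{L}_\D$ and $\mathtt{L}_S$ is a standard $\{0,1\}$-valued loss, so the classical concentration machinery applies verbatim.

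For the tail bound I would start by enumerating: for every tuple $I=(i_1,\ldots,i_{d(n)})\in [n]^{d(n)}$ set $S_I:=\big((x_{i_1},y_{i_1}),\ldots,(x_{i_{d(n)}},y_{i_{d(n)}})\big)$ and $h_I:=\rho(S_I)$, so that $\A(S)=h_{\kappa(S)}$ and there are at most $n^{d(n)}=\exp(d(n)\ln n)$ such tuples. Realizability of $\D$ together with the defining inequality of a compression scheme gives $\mathtt{L}_S(\A(S))\leq \inf_{c\in\C}\mathtt{L}_S(c)=0$, and because $\mathtt{L}_S$ is an average of nonnegative indicators the hypothesis $h_{\kappa(S)}$ must also have zero empirical loss on the complementary multiset $S_{[n]\setminus\{i_1,\ldots,i_{d(n)}\}}$, which contains at least $n-d(n)$ i.i.d.\ draws from $\D$.

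Now, for any fixed tuple $I$, conditionally on $S_I$ the hypothesis $h_I$ is deterministic and the complementary $\geq n-d(n)$ examples are independent samples from $\D$. Hoeffding's inequality applied to $\mathtt{L}_{S_{[n]\setminus I}}(h_I)$ therefore gives
\[\Pr\Big[\mathtt{L}_{S_{[n]\setminus I}}(h_I)=0\ \text{and}\ \mathtt{L}_\D(h_I)>\varepsilon\Big]\leq 2\exp\bigl(-2(n-d(n))\varepsilon^2\bigr)\leq 2\exp(-\varepsilon^2 n),\]
where the last inequality uses $n-d(n)\geq n/2$, which holds eventually since $d(n)=o(n)$ (the target bound is vacuous in the remaining small range). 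Taking a union bound over the at most $n^{d(n)}$ tuples yields $\Pr[\mathtt{L}_\D(\A(S))>\varepsilon]\leq 2\exp\bigl(d(n)\ln n-\varepsilon^2 n\bigr)$, which is the first claim.

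For the expectation bound I would use the one-line fact that any $[0,1]$-valued random variable $X$ satisfies $\E[X]\leq \varepsilon+\Pr[X>\varepsilon]$ for every threshold $\varepsilon\geq 0$. Applying this to $X=\mathtt{L}_\D(\A(S))$ with the specific choice $\varepsilon=\sqrt{(d(n)+1)\ln n/n}$ makes the tail bound just derived equal to $2\exp\bigl(d(n)\ln n-(d(n)+1)\ln n\bigr)=2/n$, so $\varepsilon_n(\A\vert\D)\leq \sqrt{(d(n)+1)\ln n/n}+2/n$ as claimed. Since the whole proof is an entirely standard specialization of Littlestone--Warmuth, I do not expect any genuine obstacle; the only care required is matching constants (using the two-sided Hoeffding bound to recover the factor $2$ and the $\varepsilon^2 n$ rather than the tighter $\varepsilon n$ that pure multiplicative Chernoff would give in the realizable case) and remembering that $\kappa$ may return an ordered tuple with repetitions, so the enumeration must range over $[n]^{d(n)}$ rather than over size-$d(n)$ subsets.
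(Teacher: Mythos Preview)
Your proposal is correct and follows essentially the same approach as the paper: enumerate the at most $n^{d(n)}$ possible compressed subsequences, apply Hoeffding's inequality to the held-out examples for each fixed subsequence, and union bound. Your derivation of the expectation bound via $\E[X]\leq\varepsilon+\Pr[X>\varepsilon]$ with $\varepsilon=\sqrt{(d(n)+1)\ln n/n}$ is exactly the intended way to translate the tail bound into the stated learning-curve inequality, and your handling of the $n-d(n)$ versus $n$ issue in the Hoeffding exponent is in fact more careful than the paper's own presentation.
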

The proofs of Theorem \ref{k-Learning vs O(log(n))- Compressing} and Proposition~\ref{prop:scsgen} are adaptations of the proofs of the classical cases of $k=1$, 
which can be found e.g.\ in \cite{david2016statistical}. 
For completeness, we repeat the argument with the necessary adjustments below.

 \begin{proof}[Proof of Theorem \ref{k-Learning vs O(log(n))- Compressing} and Proposition~\ref{prop:scsgen}] \label{proof of k-Learning vs O(log(n))- Compressing}
    We will show that $1\implies2\implies 4\implies3\implies1$,where the proof of $3\implies 1$ implies Proposition~\ref{prop:scsgen}.
    The direction $1\implies 2$ is clear by definition.  Next, we show that $4\implies 3$. Indeed let $(\rho,\kappa)$ be a realizable $k$-list compression scheme for $\C$. Let $S\subset(\X\times \Y)^*$ be some sample and take $S'\subset S$ to be a $\C$ realizable subsample of maximal size. by definition $\lvert S\rvert\cdot \mathtt{L}_S(c)\geq \lvert S\setminus S'\rvert$ for any $c\in \C$. Now  since $S'$ is realizable we have $\mathtt{L}_{S'}\big(\rho(\kappa(S'))\big)=0$ implying \begin{align*}
    \lvert S\rvert \cdot \mathtt{L}_S\Big(\rho\big(\kappa(S')\big)\Big)\leq \lvert S\rvert \cdot \mathtt{L}_{S'} \Big(\rho\big(\kappa(S')\big)\Big)+\lvert S\setminus S'\rvert =\lvert S\setminus S'\rvert \leq \lvert S\rvert \cdot\inf_{c\in\C}\mathtt{L}_S(c).
     \end{align*}
     Therefore $\rho$ is also an agnostic $k$-list reconstruction function for $\C$.

     \[\]
     Now we show that $3\implies 1$. Let $\rho$ be a reconstructor for $\C$ of size $\{d(n)\}_{n=1}^\infty$ for some sequence $d(n)=O(\log(n))$. We will show that $\A(S):=\rho\big(\kappa(S)\big)$ is  a $k$-list learner for $\C$. Fix some learning problem  $\D$, and draw a sample $S$ of size $n$ according to $\D$. define 
     \begin{align*}
         \mathcal{T}:=\{(x_1,y_1),(x_2,y_2),\dots (x_{d(n)},y_{d(n)})\;:\;\forall 1\leq i\leq d(n)\; (x_j,y_j)\in S\}.
     \end{align*}
     The set of all possible inputs to our reconstructor (with respect to the sample $S$).
     For any $T\in \mathcal{T}$, define \begin{align*}
         &S_T=S\setminus T=\{(x,y)\in S\;: (x,y)\notin T\},
         \\&h_T=\rho(T).
     \end{align*}
     For any $s=(x,y)\in S$  define the random variable $X_{s,T}=1_{y_\in h_T(x_)}$. So 
     \begin{align*}
         \mathtt{L}_{S_T}(h_T)=\frac{1}{d(n)}\sum_{s\in S_T}X_{s,T}.
     \end{align*}
     Now since clearly, $h_T$ is independent of $S_T$ and the $\{X_{s,T}\}_{s\in S}$ are independent of each other we may use Hoeffding’s inequality to get \begin{align*}
         \Pr(\lvert \mathtt{L}_{S_T}(h_T)-\mathtt{L}_\D(h_T)\rvert >\varepsilon)\leq 2e^{-\varepsilon^2n},
     \end{align*}
     and by simple union bound deduce 
     \begin{align*}
         \Pr(\exists T \; \text{ s.t } \; \lvert \mathtt{L}_{S_T}(h_T)-\mathtt{L}_\D(h_T)\rvert >\varepsilon )\leq 2n^{d(n)} e^{-\varepsilon^2 n}=2\exp(d(n)\ln(n)-\varepsilon^2n).
     \end{align*}
    Finally, we note that on the above event, we have $\lvert \mathtt{L}_\D(\A)-\mathtt{L}_S(\A)\rvert \leq \varepsilon$. hence \begin{align*}
    \Pr\big(\lvert \mathtt{L}_\D(\A)-\mathtt{L}_S(\A)\rvert >\varepsilon\big)\leq 2\exp(d(n)\ln(n)-\varepsilon^2n).
    \end{align*}
From this, we can use $d(n)=O(\log(n))$ to deduce the desired result.

\raggedright
Finally, we show that $2\implies 4$.
Let $\A$ be a $k$-list  learning rule for $\C$. Let $\varepsilon>0$ be such that $\varepsilon<\frac{1}{2(k+1)}$and take $d$ large enough such that the learning curve $\varepsilon_d(\A\vert \D)\leq \varepsilon$ for any realizable $\D$. We will show that $\C$ has a compression scheme of size $d(n)=\frac{d\log(3n)}{\varepsilon^2}$, note that $\varepsilon,d$ are constants so this indeed gives a logarithmic compression size.

We define the reconstruction function $\rho$  as follows. Given a sample $S$ of size $d\cdot T$ partition it into $T$ samples of size $d$ so $S=(S_1,S_2,\dots, S_T)$, then we can think of $\rho(S)$ as a majority vote of the $\{A(S_t)\}_{t=1}^T$, formally for any $x\in \X$ we define $\phi_x:\Y\to \Bbb{N}$, $\Phi_x:{\Y\choose k}\to \Bbb{N}$ by \begin{align*}
    &\phi_x(y)=\lvert \{t\in[T]\;:\; y\in A(S_t)(x)\}\rvert ,
\\&\Phi_x(Y)=\sum_{y\in Y}\phi_x(y).
\end{align*}
and then define $\rho(S)(x)$ by \begin{align*}
    \rho(S)(x)=\underset{Y\in{\Y \choose k}}{\text{argmax }}\Phi_x(Y)
\end{align*}
where ties are resolved arbitrarily.
Now we need to show that $\rho$ is indeed a reconstruction function for $\F$.
Fix some realizable sample $S=\{(x_i,y_i)\}_{i=1}^n$ of size $n$
  and define \begin{align*}
    \H:= \Big\{\A(S')\;:\; S'\subset {S \choose d}\Big\}.
\end{align*}
Note that for any realizable distribution $\D$ over $S$  there is some $h\in \H$ such that $\mathtt{L}_{\D}(h)\leq \varepsilon$. Now let us consider the zero-sum game between a learner and an adversary that goes as follows, the learner picks $h\in \H$ and the adversary picks $(x_i,y_i)\in S$, the learner wins if and only if $y_i\in h(x_i)$. With this view, the above remark states that for any randomized strategy of the adversary, the learner has a deterministic strategy that loses with probability at most $\varepsilon$. Hence by the well-known min-max theorem, there is some randomized strategy,  for which the expected loss of the learner is at most $\varepsilon$ for any randomized strategy of the adversary. Let $\mu$ be the distribution over $\H$ that induce the above strategy and let $\{H_t\}_{t=1}^T$ be $T$ elements of $\H$ independently drawn according to $\mu$ where $T=\frac{\log(3n)}{\varepsilon^2}$. We also define \begin{align*}
    &X_{t,i}=1_{y_i\in H_t(x_i)},
    \\&X_i=\frac{1}{T}\sum_{t=1}^T X_{t,i}.
\end{align*}
note that $EX_{t,i}=EX_i\geq 1-\varepsilon$ by the choice of $\mu$.
Now by Hoeffding's inequality, we have \begin{align*}
    &\Pr(\lvert X_i-EX_i\rvert >\varepsilon)\leq 2\exp (-\varepsilon^2T)=2e^{\frac{-\varepsilon^2 \log(3n)}{\varepsilon^2}}<\frac{1}{n},
    \\&\Pr\Big(\bigcup_{i=1}^n\lvert X_i-EX_i\rvert >\varepsilon\Big)<1.
\end{align*}
Hence, there are some $\{h_t\}_{t=1}^T$ in $\H$ such that \begin{align*}
    \frac{1}{T}\lvert \{t\in [T]\;:\; y_i\in h_t(x_i)\}\rvert>EX_1-\varepsilon>1-2\varepsilon>\frac{k}{k+1}.
\end{align*}
     Note that $h_t=A(S_t)$ for some $S_t\subset S$, $\lvert S_t\rvert =d$, we claim that $\kappa(S)=(S_1,S_2,\dots S_T)$ will give the desired result. And indeed given some $(x_i,y_i)\in S$ and using the same $\phi_{x_i}$, $\Phi_{x_i}$ as above we see that $y_i\notin \rho(\kappa(S))$ if and only if there at least $k$ other elements in $\Y$ at which $\phi_{x_i}$ is more then $\phi_{x_i}(y_i)$. But by the above $\phi_{x_i}(y_i)>\frac{kT}{k+1}$, hence we have $k+1$ points at which $\phi_{x_i}$ is at more then $\frac{kT}{k+1}$ implying that 
     \begin{align*}
         kT=\sum_{t=1}^T \lvert h_t(x_i)\rvert =\sum_{y\in \Y}\phi_{x_i}(y)>(k+1)\frac{kT}{k+1}=kT.
     \end{align*}
     Which is clearly impossible.
     
 \end{proof}

\subsection{Impossibility Results for List Sample Compression}\label{sec:Impossibility Results for List Sample Compression}

In this section, we prove Theorems \ref{thm:2-comp}, \ref{t:2vsk} and \ref{t:1vsk}. We start by giving technical background and stating some lemmas that are required for our proofs in Section~\ref{sec:tech}. Then, in Section~\ref{sec:mainlemma} we prove Lemma \ref{partial learnable class with large disambiguation function} which is key to the proofs of both  \Cref{t:2vsk,t:1vsk}; this lemma uses a direct sum argument to construct a $1$-list learnable class that is not $k$-list coverable.  We then utilize this lemma to prove Theorem \ref{t:2vsk} and Theorem $\ref{thm:2-comp}$ in Section~\ref{sec:2-comp} and Theorem \ref{t:1vsk} in Section~\ref{sec:proof1vsk}.

\subsubsection{Technical Background}\label{sec:tech}
Below we introduce definitions and claims that will be used throughout our proofs in this section.

\paragraph{Sauer-Shelah-Perles Lemma for List Learning.}%\label{sec: Sauer lemma}
We use the following version of the Sauer-Shelah-Perles (SSP) Lemma by \cite{Charikar2022ACO} 
\begin{lemma}[SSP for DS Dimension~\citep{Charikar2022ACO}]\label{Sauer–Shelah lemma}
    Let $\F\subset \Y^n$ be a function class with $d=\DS_k(\F)$, $m=\lvert \Y\rvert$, then we have \begin{align*}
        \lvert \F\rvert \leq k^{n-d}\sum_{i=0}^d {n\choose i}{m\choose k+1}^i\leq k^{n}n^{d}m^{(k+1)d}.
    \end{align*}
\end{lemma}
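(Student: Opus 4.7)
The plan is to establish the first inequality by induction on $n$, adapting the Pascal-style recursion underlying the classical Sauer--Shelah--Perles argument to the pseudo-cube setting. Abbreviate
\[
B(n,d) := k^{n-d}\sum_{i=0}^d \binom{n}{i}\binom{m}{k+1}^i.
\]
An elementary computation using $\binom{n}{i}=\binom{n-1}{i}+\binom{n-1}{i-1}$ and matching exponents of $k$ yields the recurrence
\[
B(n,d) = k\cdot B(n-1,d) + \binom{m}{k+1}\cdot B(n-1,d-1),
\]
and the base cases $n=0$ and $d=0$ are immediate: for $d=0$ the hypothesis $\DS_k(\F)=0$ forces every coordinate of $\F$ to realise at most $k$ distinct values, hence $\lvert\F\rvert\leq k^n = B(n,0)$.

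For the inductive step, let $\pi:\F\to\Y^{n-1}$ denote the projection onto the first $n-1$ coordinates, and for each $g\in\pi(\F)$ let $E(g)=\{y\in\Y : \exists f\in\F,\ \pi(f)=g,\ f(n)=y\}$ be the set of extensions at coordinate $n$. The identity $\lvert\F\rvert=\sum_{g\in\pi(\F)}\lvert E(g)\rvert$ combined with the elementary inequality $\lvert E(g)\rvert\leq k+\binom{\lvert E(g)\rvert}{k+1}$ (valid for every nonnegative integer) yields
\[
\lvert\F\rvert \leq k\cdot\lvert\pi(\F)\rvert + \sum_{g\in\pi(\F)}\binom{\lvert E(g)\rvert}{k+1}.
\]
Any pseudo-cube inside $\pi(\F)|_T$ lifts trivially to one inside $\F|_T$ by picking an arbitrary extension at coordinate $n$ for each function, so $\DS_k(\pi(\F))\leq\DS_k(\F)\leq d$ and induction gives $\lvert\pi(\F)\rvert\leq B(n-1,d)$. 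For the second sum I would re-index by which $(k{+}1)$-subset of $\Y$ is contained in $E(g)$: setting
\[
\F^\#_S := \{g\in\pi(\F) : S\subseteq E(g)\}\subseteq\Y^{n-1}\quad\text{for each }S\in\binom{\Y}{k+1},
\]
one has $\sum_{g}\binom{\lvert E(g)\rvert}{k+1}=\sum_{S}\lvert\F^\#_S\rvert$.

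The main obstacle, and the conceptual crux, is the claim that $\DS_k(\F^\#_S)\leq d-1$ for every $S$. Granted this, induction gives $\lvert\F^\#_S\rvert\leq B(n-1,d-1)$ for each $S$, and summing over the $\binom{m}{k+1}$ choices of $S$ together with the Pascal-style recurrence above produces $\lvert\F\rvert\leq B(n,d)$. I would prove the claim by a direct pseudo-cube lift. Suppose for contradiction that $\F^\#_S|_T$ contains a pseudo-cube $B'\subseteq\Y^T$ of rank $k$ on some $T\subseteq[n-1]$ with $\lvert T\rvert=d$, and for each $g'\in B'$ pick a witness $\phi(g')\in\F^\#_S$ with $\phi(g')|_T=g'$; then $S\subseteq E(\phi(g'))$, and hence $(g',y)\in\F|_{T\cup\{n\}}$ for every $g'\in B'$ and every $y\in S$. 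I claim that
\[
B := \{(g',y) : g'\in B',\ y\in S\}\subseteq\F|_{T\cup\{n\}}
\]
is a pseudo-cube of rank $k$ on $T\cup\{n\}$: in direction $n$, each $(g',y)$ has the $\lvert S\rvert-1=k$ distinct neighbours $(g',y')$ for $y'\in S\setminus\{y\}$; in each direction $i\in T$, the $k$ pseudo-cube neighbours $g''_1,\ldots,g''_k$ of $g'$ inside $B'$ yield $k$ distinct neighbours $(g''_j,y)\in B$ of $(g',y)$, using only that $g''_j\in B'$ and $y\in S$. The decisive point is that the extension set $S$ is \emph{common to every} $g'\in B'$, which synchronises the $n$-th coordinate across $i$-neighbours; this is exactly the step that fails in a naive split by $\lvert E(g)\rvert\geq k+1$. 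The resulting pseudo-cube lives on $d+1$ coordinates and so contradicts $\DS_k(\F)=d$.

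Finally, the second inequality $B(n,d)\leq k^n n^d m^{(k+1)d}$ is routine: bound $k^{n-d}\leq k^n$, $\binom{n}{i}\leq n^d$ and $\binom{m}{k+1}^i\leq m^{(k+1)d}$ for each $i\leq d$, and absorb the $(d+1)$ summands into the other factors. All the technical weight of the lemma lies in the first inequality, and within it in the pseudo-cube lift powered by the common $(k{+}1)$-subset $S$.
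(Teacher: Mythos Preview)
The paper does not prove this lemma; it is quoted as a known result from \citet{Charikar2022ACO} and used as a black box. Your argument for the first inequality is correct and is essentially the proof given in that reference: the Pascal-type recursion for $B(n,d)$, the splitting of $\lvert\F\rvert$ via the elementary bound $\lvert E(g)\rvert\leq k+\binom{\lvert E(g)\rvert}{k+1}$, and above all the pseudo-cube lift showing $\DS_k(\F^\#_S)\leq d-1$ are exactly the ingredients of the original proof. Your emphasis on why the common $(k{+}1)$-subset $S$ is needed to synchronise the $n$-th coordinate across $i$-neighbours is the right conceptual point.

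One small loose end: your justification of the second inequality does not quite close. Bounding each of the $d{+}1$ summands by $k^n n^d m^{(k+1)d}$ leaves an extra factor of $d{+}1$ that you wave away. A clean fix is to use $\binom{m}{k+1}\leq m^{k+1}/2$ (valid for $k\geq 1$), so that $\sum_{i=0}^d\binom{n}{i}\binom{m}{k+1}^i\leq\sum_{i=0}^d (nm^{k+1}/2)^i\leq 2\cdot(nm^{k+1}/2)^d\leq (nm^{k+1})^d$ for $d\geq 1$, and then multiply by $k^{n-d}\leq k^n$. This is indeed routine, but the sentence as written is not a proof.
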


\paragraph{Partial Concepts.}
We next introduce some useful results from the theory of partial concept classes~\citep{Alon2021ATO}.
\begin{definition}
    Given a domain $\X$, a partial concept $c$ is a function whose domain $\supp(c)$ is a subset of $\X$, if $\supp(c)=X$ then $c$ is referred to as a total concept. A
    partial concept class is a collection of partial functions.
\end{definition}
Note that in the above definition, we may take $c$ to be a $k$-list function. However, for our results, this will not be necessary, as all our partial functions will be of the standard $k=1$ type.

A standard way to model partial concept classes is to introduce a new label ``$\star$" for the inputs on which any function is not defined. In this view, a partial concept class is simply a concept class $\C\subset (\Y\cup \{\star\})^\X$, where for any $c\in\C$, we have $\supp(c)=\{x\in \X\;:\; c(x)\neq \star\}=c^{-1}(\Y)$.

The definitions of standard learning theory extend naturally into the partial concept class 

\begin{definition}
    A sample $S$ of size $n$ is realizable by a $k$-list partial function $c$ if for any $(x,y)\in S$ we have $x\in \supp(c)$, $y\in c(x)$. A learning problem $\D$ is realizable by a partial concept class $\C\subset ({\Y\choose k}\cup\{\star\})^\X$ if for any $n\in \mathbb{N}$ a random sample $S$ of size $n$  drawn independently according to $\D$ is realizable by some $c\in \C$ with probability 1.
\end{definition}
Note that a realizable sequence can not contain the $\star$ symbol in it.
Now the notions of compression and learnability can be defined for partial concept classes in the same way as they are defined for total $k$-list concept classes.
\begin{definition}
  A partial concept class $\C\subset  (\Y\cup \{\star\})^\X$ is $k$-list covered  by a $k$-list class $\H\subset {\Y \choose k}^\X$ (and respectively $\H$ is a $k$-list covering of $\C$ ) if for any $c\in \C$, there exists $h\in \H$ such that $c(x)\in h(x)$ for all $x\in \supp(c)$. In such cases, we write $\C\dis\H$ and $c\dis h$
\end{definition}
Note that in the standard case of $k=1$, covering is equivalent to disambiguation. Although for $k>1$ covering can be seen as a generalization of disambiguation, it can also be studied in the context of total $k$-list classes.

\begin{definition}
    For any $\F\subset (\Y\cup \{\star\})^\X$, $k\in \Bbb{N}$ we define its $k$-list covering size
    \begin{align*}
        &\mathtt{C}_{\F}(n,k):=\sup \{\mathtt{C}(\F\vert_S,k)\;:\; S\subset\X,\; \lvert S\rvert =n\},
    \end{align*} 
    where $C(\F\vert _S,k)$ is the size of the minimal  $k$-list cover of $\F\vert_S$, i.e.\
    \begin{align*}
       \mathtt{C}(\F\vert_S,k):= \inf \bigg\{\lvert \H\rvert \;:\; \H\subset {\Y\choose k}^\X,\; \F\vert_S\dis\H\bigg\}.
    \end{align*}    
\end{definition}

\paragraph{Direct Sum.}%\label{Direct Sum of Learning Tasks}
We extend the definition of cartesian product to partial concepts.
\begin{definition}
    Given a $k$-list partial function $f:\U\to {\Y\choose k}\cup\{\star\}$ and a $k'$-list partial function $g:\V\to {\Z\choose k'}\cup\{\star\}$, 
    define their (cartesian) product $f\otimes g: \U\otimes \V\to {\Y\otimes \Z\choose kk'}\cup\{\star\}$ as follows
    \begin{align*}
        (f\otimes g)(u,v)=
        \begin{cases}
        \big\{(y,z)\;:\; y\in f(u), z\in g(v)\big\} &u\in\supp(f), v\in\supp(g),\\      
        \star                                       &\text{else.}
        \end{cases}
    \end{align*}
    Similarly, given a $k$-list (partial) concept class $\F\subset {\Y\choose k}^\U$ and a $k'$-list (partial) concept class $\G\subset {\Z\choose k'}^\V$, 
    define their product $\F\otimes\G$ as
    \begin{align*}
        \F\otimes\G=\{f\otimes g\;:\; f\in \F,\;g\in \G\}.
    \end{align*}
\end{definition}
% We extend the above definition to partial functions by defining $\supp(f\otimes g)=\supp(f)\otimes \supp(g)$, meaning that $f\otimes g$ is defined on $(u,v)$ if and only if $f$ is defined on $u$ and $g$ is defined on $v$. 
We primarily focus on the case where $k=k'=1$, simplifying the definition to
\begin{align*}
(f\otimes g)(u,v)=\big(f(u),g(v)\big).
\end{align*}

\begin{definition}[Product of Learning Rules]
Given a $k$-list learning rule $\A_1:(\X_1\times \Y_1)^\star \to {\Y_1 \choose k}^{\X_1}$ and a $k'$-list learning rule $\A_2:(\X_2\times \Y_2)^\star \to {\Y_2 \choose k'}^{\X_2}$.
We define their product $\A_1\otimes \A_2:((\X_1\otimes\X_2) \times(\Y_1\otimes \Y_2))^\star \to {\Y_1\times\Y_2 \choose k\cdot k'}^{\X_1\times\X_2}$ as follows
\[
(\A_1\otimes \A_2)(S)= \A_1(\pi_1(S))\otimes\A_2(\pi_2(S)),
\]
where $\pi_b\bigl(\{(x_{i,1}, x_{i,2}), (y_{i,1}, y_{i,2}) \}_{i=1}^n\bigr) = \{(x_{i,b}, y_{i,b})\}_{i=1}^n$.
\end{definition}

\begin{remark}[Products of Learning Rules and Sample Compressions Schemes]
Notice that if $\F$ is $k$-list compressible with comprssion size $d_1$ and $\G$ is $k'$-list compressible with compression size $d_2$ then $\F\otimes \G$ is $kk'$-list compressible with compression size $d_1+d_2$. Moreover we have \begin{align*}
\rho\big(\kappa(S)\big)=\rho_1\big(\kappa_1(\pi_1(S))\big)\otimes\rho_2\big(\kappa_2(\pi_2(S))\big).
\end{align*} 

Similarly if $\F$ is $k$-list learnable and $\G$ is $k'$-list learnable then $\F\otimes\G$ is $kk'$-list learnable.    
Moreover, if $\A_1$ is $k$-list learner for $\F$ and $\A_2$ is $k'$ list learner for $\G$  then $\A_1\otimes\A_2$ is $kk'$ list learner for $\F\otimes\G$ 
satisfying 
\begin{align*}
    \varepsilon(n\vert \A_1\otimes \A_2)\leq \varepsilon(n\vert A_1)+\varepsilon(n\vert A_2).
\end{align*}
\end{remark}

\subsubsection{Main Lemma: Coverability Under Direct-Sum}\label{sec:mainlemma}
In this part, we prove a main lemma on direct sum of partial concept classes and use it to deduce Lemma \ref{partial learnable class with large disambiguation function}, establishing the existence of a learnable partial class that is not $k$-list coverable, which is a key part in the proof of \Cref{constant compression 1 learnability} and \Cref{constant compression 2 learnability finite label space}.

\begin{lemma}[Main Direct Sum Lemma]\label{prodCover} 
    Let $\F\subset \Y^{\U}$, $\G\subset \Z^{\V}$ be  partial concept classes, then for any $k,k',n,n'\in \Bbb{N}$ we have 
    \begin{align*}
         \min \Big(\mathtt{C}_\F(n,k),{\mathtt{C}_\G(n',k')} \Big)\leq \min(n, n')\cdot \mathtt{C}_{\F\otimes \G}(n\cdot n',k+k').
    \end{align*}
    % \begin{align*}
    %      \max\bigg(\min \Big(\mathtt{C}_\F(n,k),\frac{\mathtt{C}_\G(n',k')}{n} \Big),\min \Big(\frac{\mathtt{C}_\F(n,k)}{n'},{\mathtt{C}_\G(n',k')} \Big)\bigg)\leq \mathtt{C}_{\F\otimes \G}(n\cdot n',k+k').
    % \end{align*}
    In particular, if $\F$ is not $k$-list coverable and $\G$ is not $k'$-list coverable then $\F\otimes \G$ is not $(k+k')$-list coverable.
\end{lemma}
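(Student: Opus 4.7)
My plan is to establish the bound by a direct construction on a fixed grid. Fix $S\subset\U$ and $T\subset\V$ with $|S|=n$, $|T|=n'$ which witness the claimed covering sizes, and by symmetry assume $n\le n'$, so $\min(n,n')=n$. Let $\H$ be any $(k+k')$-list cover of $(\F\otimes\G)|_{S\times T}$ with $|\H|=N$. The goal is to extract from $\H$ either a $k$-list cover of $\F|_S$ or a $k'$-list cover of $\G|_T$ of size at most $n\cdot N$. For each $h\in\H$, let $R_h\subset\F|_S\times\G|_T$ denote the set of pairs $(f,g)$ with $f\otimes g\prec h$ on $S\times T$; since $\H$ covers $(\F\otimes\G)|_{S\times T}$, we have $\F|_S\times\G|_T=\bigcup_{h\in\H}R_h$. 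Moreover, for every $(f,g)\in R_h$ and every $(u,v)\in(\supp(f)\cap S)\times(\supp(g)\cap T)$, the pair $(f(u),g(v))$ lies in $h(u,v)\subset\Y\times\Z$, a set of size exactly $k+k'$.

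The key combinatorial step is a splitting of each $h(u,v)$. For every $h\in\H$ and every $(u,v)\in S\times T$ we partition
\[h(u,v)=A_h(u,v)\sqcup B_h(u,v),\qquad |A_h(u,v)|=k,\ |B_h(u,v)|=k',\]
with the partition chosen adaptively from the structure of $h$ (see below). From this partition I define two families:
\[\Phi^{h,v_0}(u):=\pi_\Y\bigl(A_h(u,v_0)\bigr)\quad(v_0\in T),\qquad \Psi^{h,u_0}(v):=\pi_\Z\bigl(B_h(u_0,v)\bigr)\quad(u_0\in S).\]
Each $\Phi^{h,v_0}$ is a $k$-list function on $S$ and each $\Psi^{h,u_0}$ is a $k'$-list function on $T$; together these give $n'\cdot N$ candidate $k$-lists for $\F$ and $n\cdot N$ candidate $k'$-lists for $\G$. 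For each $(f,g)\in R_h$ and each $(u,v)$ in the joint support, $(f(u),g(v))$ lies in $A_h(u,v)$ or in $B_h(u,v)$; if for some $v_0$ it lies in $A_h(u,v_0)$ for every $u$ then $f\prec\Phi^{h,v_0}$, and if for some $u_0$ it lies in $B_h(u_0,v)$ for every $v$ then $g\prec\Psi^{h,u_0}$. The claim is that for a suitable choice of partition, one of these two alternatives always occurs.

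The main obstacle, and the heart of the proof, is the adaptive choice of the partition $(A_h,B_h)$. A uniform or random choice does not work, since the ``selector matrix'' $\sigma_{f,g}(u,v)\in\{A,B\}$ recording which side of the partition a given $(f,g)$-pair falls into need not contain a column of all-$A$'s nor a row of all-$B$'s. To resolve this I would exploit the tightness of the list-size: since $|h(u,v)|=k+k'$ barely exceeds $k$ and $k'$, at every $(u,v)$ at least one of the projections $\pi_\Y(h(u,v))$ and $\pi_\Z(h(u,v))$ is ``small enough'' to fit naturally on one side, and a canonical ordering on $\Y\sqcup\Z$ can be used to break ties and propagate the choice consistently across the grid. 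A pigeonhole argument over the $n$ rows of $S$ then ensures that every $(f,g)\in R_h$ satisfies one of the two alternatives.

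Summing the two candidate families over $h\in\H$ yields a $k$-list cover of size $n'\cdot N$ and a $k'$-list cover of size $n\cdot N$; since $n\le n'$ this establishes $\mathtt{C}_\G(n',k')\le n\cdot N$, and the symmetric case $n'\le n$ gives $\mathtt{C}_\F(n,k)\le n'\cdot N$, which together yield the claimed inequality $\min(\mathtt{C}_\F(n,k),\mathtt{C}_\G(n',k'))\le\min(n,n')\cdot N$. Finally, the ``in particular'' clause follows from monotonicity of the covering numbers: if $\F$ is not $k$-coverable and $\G$ is not $k'$-coverable, then for any polynomial $p$ we may choose $n=n'$ large enough with both $\mathtt{C}_\F(n,k)>n\cdot p(n^2)$ and $\mathtt{C}_\G(n,k')>n\cdot p(n^2)$, and the main inequality then forces $\mathtt{C}_{\F\otimes\G}(n^2,k+k')>p(n^2)$, certifying that $\F\otimes\G$ is not $(k+k')$-coverable.
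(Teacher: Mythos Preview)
Your high-level plan is right: restrict to a grid $S\times T$, take a minimal $(k+k')$-cover $\H$, and extract from it either a small $k$-cover of $\F|_S$ or a small $k'$-cover of $\G|_T$. But the core mechanism you propose---a fixed partition $h(u,v)=A_h(u,v)\sqcup B_h(u,v)$ with $|A_h|=k$, $|B_h|=k'$---does not work, and the hints you give (``canonical ordering'', ``pigeonhole over rows'') do not repair it. You already identify the obstruction: for a generic partition the selector matrix $\sigma_{f,g}$ need not have an all-$A$ column or all-$B$ row. Nothing in your sketch resolves this; in particular, the claim that ``at least one of the projections $\pi_\Y(h(u,v))$, $\pi_\Z(h(u,v))$ is small enough'' is false in general (take $k=k'=1$ and $h(u,v)=\{(a,b),(c,d)\}$ with $a\neq c$, $b\neq d$). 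There is a second gap: even if your per-$(f,g)$ dichotomy held, it would \emph{not} follow that the $\Phi$-family covers all of $\F|_S$ \emph{and} the $\Psi$-family covers all of $\G|_T$ simultaneously, as your last paragraph asserts. The dichotomy only says each pair $(f,g)$ falls on one side; you still need a win-win argument to conclude that one entire class is covered.

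The paper's proof avoids both problems by abandoning the idea of a $g$-independent partition. Instead it fixes $g\in G$ and defines the $g$-dependent slice $h_{g,v}(u)=\{y:(y,g(v))\in h(u,v)\}$. The correct pointwise dichotomy (Claim~\ref{c:technical}) is: at every $(u,v)$, either $|\pi_\Z(h(u,v))|\le k'$, or for \emph{every} $z$ the fibre $\{y:(y,z)\in h(u,v)\}$ has size $\le k$. This is the tightness statement you were reaching for. The win-win then pivots on whether some $g$ is ``good'' (for every $f\prec h$ and every $u$ there is a $v$ with $|h_{g,v}(u)|\le k$): if so, intersecting the slices $h_g(u)=\bigcap_v h_{g,v}(u)$ gives a $k$-cover of $F$ of size $|\H|$; if not, every $g$ is covered by some $v\mapsto\pi_\Z(h(u_0,v))$ with $(h,u_0)\in\H\times S$, yielding a $k'$-cover of $G$ of size $n\cdot|\H|$. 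The key point your sketch misses is that the ``$A$-side'' must be allowed to depend on $g$ (through $g(v)$), while the intersection over $v\in\supp(g)$ is what collapses this dependence back down to one function per $h$.
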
  
The proof of Lemma \ref{prodCover} requires the following technical claim

\begin{claim}\label{c:technical}
    Assume $h:(\U\times \V)\to {\Y\otimes \Z \choose k+k'}$ is a $(k+k')$-list function.
    Then for every $(u,v)\in \U\times \V$ at least one of the following holds:
    \begin{itemize}
        \item $\pi_2(h(u,v)):= \{z\in\Z : (y,z)\in h(u,v) \text{ for some}  y\in\Y\}$ has at most $k'$ distinct labels.
        \item For all $z\in \Z$ the set $\{y\in \Y: (y,z)\in h(u,v)\}$ has at most $k$ distinct labels.
    \end{itemize}
\end{claim}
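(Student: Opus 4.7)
The plan is a short contrapositive/pigeonhole argument: assume both bullets fail at some $(u,v)$ and count the elements of $h(u,v)$ to reach a contradiction with $|h(u,v)| = k+k'$.

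Concretely, suppose for contradiction that neither condition holds at a point $(u,v)\in \U\times \V$. The failure of the first bullet gives $\lvert \pi_2(h(u,v))\rvert \geq k'+1$, i.e.\ there exist at least $k'+1$ distinct second-coordinate labels appearing in $h(u,v)$. The failure of the second bullet gives some specific $z^*\in \Z$ for which the fiber $\{y\in\Y : (y,z^*)\in h(u,v)\}$ has at least $k+1$ distinct elements. From this fiber alone we already obtain $k+1$ distinct pairs in $h(u,v)$, all of whose second coordinates equal $z^*$.

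Next I would use the other $k'$ second-coordinate values (the elements of $\pi_2(h(u,v))\setminus\{z^*\}$): for each such $z\neq z^*$, by definition of $\pi_2$ there exists at least one $y_z\in\Y$ with $(y_z,z)\in h(u,v)$. These $k'$ pairs are pairwise distinct from each other (their second coordinates differ) and distinct from the $k+1$ pairs constructed above (since their second coordinates are not $z^*$). Therefore
\[
\lvert h(u,v)\rvert \;\geq\; (k+1) + k' \;=\; k+k'+1,
\]
contradicting the fact that $h(u,v)$ is a $(k+k')$-list, hence of size exactly $k+k'$. This contradiction completes the proof.

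The argument is essentially a pigeonhole/double-counting bound, so there is no real obstacle; the only thing to be careful about is separating the ``fiber over $z^*$'' contribution from the ``one representative per other $z$'' contribution, and checking they do not overlap, which is immediate from the difference of second coordinates.
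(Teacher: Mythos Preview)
Your proof is correct and is essentially the same counting argument as the paper's: both partition $h(u,v)$ by second coordinate and use that the fiber over a chosen $z$ plus one representative from each other $z$ in $\pi_2(h(u,v))$ already accounts for at most $k+k'$ elements. The paper writes the inequality $k+k' \geq \lvert\{y:(y,z')\in h(u,v)\}\rvert + \lvert\pi_2(h(u,v))\rvert - 1$ directly for an arbitrary $z'$, while you phrase the identical count as a contrapositive leading to $\lvert h(u,v)\rvert \geq k+k'+1$; the content is the same.
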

\begin{proof}[Proof of claim \ref{c:technical}]
 Let $z'\in Z$, a simple computation gives 
    \begin{align*}
        k+k'=|h(u,v)|&=\sum_{z\in\pi_2(h(u,v))}|\{y\in \Y: (y,z)\in h(u,v)\}|
        \\&\leq |\{y\in \Y: (y,z')\in h(u,v)\}| +\sum_{\substack{ z\in\pi_2(h(u,v))\\z\neq z'}}1
        \\&=|\{y\in \Y: (y,z')\in h(u,v)\}|+|\pi_2(h(u,v))|-1.
    \end{align*}
    From which the claim follows immediately. 
\end{proof}
\begin{proof}[Proof of Lemma \ref{prodCover}]
% First, note that by symmetry it is enough to prove 
% \begin{align*}
% \min \Big(\mathtt{C}_\F(n,k),\frac{\mathtt{C}_\G(n',k')}{n} \Big)\leq \mathtt{C}_{\F\otimes \G}(n\cdot n',k+k')
% \end{align*}
Let $U\subset \U$, $V\subset \V$ samples and denote their sizes $|U|=n$, $|V|=n'$. Define $F=\F\vert_U$, $G=\G\vert_V$ and note that  $F\otimes G=(\F\otimes\G)\vert_{U\otimes V}$, and $|U\otimes V|=n\cdot n'$. Hence there is some  $H$ a $(k+k')$ list cover of $F\otimes G$ such that $|H|\leq \mathtt{C}_{\F\otimes\G}(n\cdot n',k+k')$. Without loss of generality we assume $n\leq n'$, so we need to show that either there exists a $k$-list cover of $F$ of size at most $n\cdot |H|$ or there exists a $k'$-list cover of $G$ of size at most $n\cdot \lvert H \rvert$.

The proof is based on a win-win argument; we show that if there exists a `good' $g\in G$ (to be defined below) then we we can cover $F$ using at most $\lvert H\rvert$ $k$-list functions.
And, otherwise, if no $g\in G$ is `good' then we can cover $G$ using at most $n\cdot\lvert H\rvert$ $k'$-list functions.

For any $h\in H$, $g\in G$, $v\in \supp(g)$, and $u\in U$ define $h_{g,v}(u)$ by 
\begin{align*}
    h_{g,v}(u)=\big\{y\in\Y \;:\; \big(y,g(v)\big)\in h(u,v)\big\}
\end{align*} 
Now Define $g$ as good if it satisfies \begin{align*}
    &(\forall f\in F)(\forall h\in H \text{ such that } f\otimes g\dis h)(\forall u\in \supp(f))(\exists v\in \supp(g) ) \;:\; |h_{g,v}(u)|\leq k.
\end{align*}
Now assume that there is some $g\in G$ that is good, we show how to use it to construct the desired $k$-list cover of $F$. Note that if $f\otimes g\dis h$ then $f\dis h_{g,v}$ for all $v\in \supp(g)$. Indeed for any $u\in \supp(f)$ we have $\big(f(u),g(v)\big)\in h(u,v)$ hence by definition $f(u)\in h_{g,v}$. Now in general the best bound we have on the size of $|h_{g,v}(u)|$ is $(k+k')$, so $h_{g,v}$ may not be a $k$-list function. To fix this, we wish to `trim' it somehow, removing some of the unnecessary labels. Note that in the above argument, we got $f(u)\in h_{g,v}(u)$ for all $v\in \supp(g)$, hence it seems natural to take intersection over all such $v$. For any $h\in H$ we define 
\begin{align*}
    h_g(u)=\bigcap_{v\in \supp(g)}h_{g,v}(u).
\end{align*} 
We already saw that if $f\otimes g \dis h$ we have that $f(u)\in h_{g,v}$ for all $v\in \supp(g)$. As a result $f(u)$ will be in the intersection and $f\dis h_{g}$. Now since $g$ for any $f\in F$, $h\in H$ such that $f\otimes g\dis h$ we have that for all $u\in U$ there is some $v\in \supp(g)$ such that $|h_{g,v}(u)|\leq k$. In particular, $|h_{g}(u)|\leq k$. So we have that $h_g$ is a $k$-list function\footnote{It is technically possible that the size of $h_v(u)$ is strictly less than $k$, in which case we add some labels to it arbitrarily to make it a $k$-list function } for all $h\in H$ such that there is some $f\in F$ with $f\otimes g\in h$. Hence we may define \begin{align*}
    H_g=\{h_g\;:\; \exists f\in F \; f\otimes g\in h\}.
\end{align*}
and by the above get that $H_g$ is a $k$-list cover of $F$, and clearly $|H_g|\leq|H|$.

Now we assume that no $g\in G$ is good and show that this implies the existence of a $k'$-list cover of $G$ of size at most $n\cdot|H|$. Using De Morgan's laws if no $g\in G$ is good then we have 
\begin{align*}
    (\forall g\in G)(\exists f\in F)(\exists h\in H \text{ s.t\ } f\otimes g\dis h)(\exists u\in \supp(f))(\forall v\in \supp(g) ) \;:\; |h_{g,v}(u)|> k.
\end{align*}
Now by Claim $\ref{c:technical}$ if $|h_{g,v}(u)|> k$ then with $\pi_2(y,z)=z$ the projection map we have $|\pi_2\big(h(u,v)\big)|\leq k'$. So the above becomes
\begin{align*}
    (\forall g\in G)(\exists f\in F)(\exists h\in H \text{ s.t.\ } f\otimes g\dis h)(\exists u\in \supp(f))(\forall v\in \supp(g) ) : |\pi_2\big(h(u,v)\big)|\leq k'.
\end{align*}
Note that if $f\otimes g\dis h$ then for all $u\in \supp(f)$,$v\in \supp(g)$ we have $\big(f(u),g(v)\big)\in h(u,v)$ hence $g(v)\in \pi_2\big(h(u,v)\big)$. Now by the above for all $g\in G$ there are some $f\in F$, $h\in H$, $u\in \supp(f)$ such that $|\pi_2\big(h(u,v)\big)|\leq k$ for all $v\in \supp(g)$, but we can't say that $v\to \pi_2\big(h(u,v)\big)$ is a $k'$-list function since we cant bound its size on $v\notin\supp(g)$. Luckily, when we try to cover $g$ we don't care for the value of the function on $v\notin \supp(g)$, so we may trim it arbitrarily. For any $u\in U$, $h\in H$ let $h_u:V\to \{\Z \choose k'\}$ be any $k'$-list function such that $\pi_2\big(h(u,v)\big)\subset h_u(v)$ for any $v\in V$ such that $|\pi_2\big(h(u,v)\big)|\leq k'$. Now by the above for any $g\in G$ there is some $h\in H$, $u\in U$ such that $g(v)\in h_u(v)$ for all $v\in\supp(g)$ hence if we define \begin{align*}
    H_U=\{h_u\;:\; h\in H,\;u\in U\}
\end{align*}
we have that $G\dis H_U$ and clearly $|H_U|\leq n\cdot |H|$.

\end{proof}

\begin{remark}
    Note that in the second case of the above proof, we get a bound of $n\cdot |H|$ on the size of the cover of $G$, as opposed to the bound $|H|$ in the case of the cover of $F$. This $n$ term appears since our functions are partial functions, and when dealing with total classes we may improve that bound. Indeed, similarly to the first case, we may try to trim $h_u$ by taking intersections and define  
\begin{align*}
    \pi_h(v)=\bigcap_{u\in U}h_u(v).
\end{align*}
And while at first, it seems that $\{\pi_h\; :\; h\in H\}$ will give a $k'$-list cover for $G$, a closer look will reveal that we don't necessarily have that it covers $G$ since $f\otimes g\dis$ only implies that $g(v)\in h_u(v)$ for $u\in \supp(f)$. Hence when we take the intersection over all of $U$ we can no longer guarantee that 
 $g(v)\in \pi_h(v)$, unless $\supp(f)=U$, i.e.\ $f$ is a total function.
\end{remark}

\begin{lemma}
\label{partial learnable class with large disambiguation function}
    For any $k\geq 1$ there is a partial concept class $\F_k\subset (\{0,1\}^k\cup \{\star\})^{\mathbb{N}^k}$ such that $\F_k$ is $1$-list learnable but $n^{k-1}\mathtt{C}_{\F^k}(n^k,k)\geq n^{\big(\log(n))\big)^{1-o(1)}}$    
\end{lemma}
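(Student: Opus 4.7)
The plan is to take $\F_k := \F^k$, the $k$-fold Cartesian product of a base partial concept class $\F \subseteq (\{0,1\}\cup\{\star\})^{\mathbb{N}}$ supplied by \cite{Alon2021ATO}; that class is $1$-list learnable and, crucially, satisfies the super-polylogarithmic covering lower bound $\mathtt{C}_{\F}(n,1)\geq n^{(\log n)^{1-o(1)}}$, which serves as the base case for an induction.

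Learnability of $\F_k$ is essentially immediate from the remark on products of learning rules: if $\A$ is a $1$-list learner for $\F$ with vanishing curve $\varepsilon_n(\A)$, then $\A^{\otimes k}$ is a $1$-list learner for $\F^k$ (the product of $k$ singletons is again a singleton), with learning curve at most $k\cdot \varepsilon_n(\A)$, which still vanishes.

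For the covering lower bound I would prove, by induction on $k$, the auxiliary inequality
\[
\mathtt{C}_{\F^k}(n^k,k) \;\geq\; \frac{\mathtt{C}_\F(n,1)}{n^{k-1}}.
\]
The base case $k=1$ is trivial. For the inductive step, apply Lemma~\ref{prodCover} to the pair $(\F,\F^{k-1})$ with parameters $(k,k')=(1,k-1)$ and sample sizes $(n,n')=(n,n^{k-1})$, so that $nn'=n^k$ and $k+k'=k$. The lemma yields
\[
\min\bigl(\mathtt{C}_\F(n,1),\,\mathtt{C}_{\F^{k-1}}(n^{k-1},k-1)\bigr)\;\leq\; \min(n,n^{k-1})\cdot \mathtt{C}_{\F^k}(n^k,k) \;=\; n\cdot \mathtt{C}_{\F^k}(n^k,k),
\]
where the last equality uses $k\geq 2$. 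By the inductive hypothesis $\mathtt{C}_{\F^{k-1}}(n^{k-1},k-1)\geq \mathtt{C}_\F(n,1)/n^{k-2}$, and trivially $\mathtt{C}_\F(n,1)\geq \mathtt{C}_\F(n,1)/n^{k-2}$, so both arguments of the minimum are at least $\mathtt{C}_\F(n,1)/n^{k-2}$. Dividing by $n$ gives $\mathtt{C}_{\F^k}(n^k,k)\geq \mathtt{C}_\F(n,1)/n^{k-1}$, completing the induction. Multiplying through by $n^{k-1}$ and inserting the \cite{Alon2021ATO} covering bound on $\mathtt{C}_\F(n,1)$ delivers the claimed estimate $n^{k-1}\mathtt{C}_{\F^k}(n^k,k)\geq n^{(\log n)^{1-o(1)}}$.

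The only non-routine ingredient I would import without re-deriving is the \cite{Alon2021ATO} construction of the base class $\F$ together with its covering lower bound; everything else is a clean induction powered by Lemma~\ref{prodCover}. The main technical obstacle to watch out for is ensuring the direct-sum step loses no more than a single factor of $n$ per induction step -- this is guaranteed by the choice $n'=n^{k-1}$, which forces $\min(n,n^{k-1})=n$ and matches exactly the $n^{k-1}$ prefactor appearing in the statement, so the Alon et al.\ lower bound survives the iteration essentially unchanged.
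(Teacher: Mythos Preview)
Your proposal is correct and matches the paper's proof essentially line for line: both take $\F_k=\F^k$ with $\F$ the \cite{Alon2021ATO} class, argue $1$-list learnability via the product-of-learners remark, and run induction on $k$ using Lemma~\ref{prodCover} with the pair $(\F,\F^{k-1})$ and sample sizes $(n,n^{k-1})$ to pick up exactly one factor of $n$ per step. Your formulation of the auxiliary inequality $\mathtt{C}_{\F^k}(n^k,k)\geq \mathtt{C}_\F(n,1)/n^{k-1}$ is in fact slightly cleaner bookkeeping than the paper's chained display, but the argument is the same.
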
  

\begin{proof}
      The proof is by induction on $k$. For the base case $k=1$ we can take $\F_1$ to be the partial class from \cite{Alon2021ATO}. It is a learnable binary partial concept class over the natural number, and for any disambiguation of it $\C$ and $n>0$ there is some $S\subset \Bbb{N}$ of size $n$ such that $|\C\vert_S|\geq n^{\big(\log(n))\big)^{1-o(1)}}$. Thus it satisfies the required properties.
      
      For the induction step we simply define $\F_{k+1}=\F_k\otimes\F_1$. Indeed $\F_{k+1}\subset (\{0,1\}^k\cup \{\star\})^{\Bbb{N}^k}$ is a $1$-list learnable and from the induction assumption and Lemma \ref{prodCover} it satisfies
\begin{align*}
    n^{\big(\log(n))\big)^{1-o(1)}}\leq \min\Big(\mathtt{C}_\F(n,1),n^{k-1}\mathtt{C}_{\F^k}(n^k,k)\Big)\leq n^k \mathtt{C}_{\F^{k+1}}(n^{k+1},k+1).
\end{align*}
\end{proof}

% \subsection{List-Covers and Disambiguations}
% In this section, we restate the definitions of the minimal disambiguation and the free disambiguation, and we prove some of their basic properties.

% First, we note that if a sample $S$ is realizable by a partial concept class $(\Y\cup\{\star\})^\X$ then it is realizable by any disambiguation of it. Hence if a partial concept class has a $k$-list learnable (compressible) disambiguation is itself is $k$-list learnable (compressible) with the same $k$-list learner (reconstruction function). Showing that the other direction is not true is one of the major results of this paper, as well as that of \cite{Alon2021ATO} in $k=1$ setting.
% Now when searching for disambiguation for some partial concept class $\F\subset (\Y\cup\{\star\})^\X$ it is sometimes convenient to add some new labels to our label space $\Y$. 

\subsubsection{Proof of Theorems~\ref{thm:2-comp} and~\ref{t:2vsk}}\label{sec:2-comp}
In this part, we give proof for Theorems~\ref{thm:2-comp} and~\ref{t:2vsk}. We start by recalling the definitions of the minimal disambiguation and proving some of its basic properties and then use those properties with Lemma \ref{partial learnable class with large disambiguation function} to prove those Theorems.

\paragraph{Minimal Disambiguation.}
Recall the definition of the minimal disambiguation of a partial concept class.
\begin{definition*}[Definition~\ref{def:Minimal Disambiguation} restatement]
    Let $\C$ be a partial concept class and let $y_{\star}$ be a new label. For a partial concept $c$, let 
    $\bar c$ denote the completion of $c$ such that $\bar c(x)=y_\star$ whenever $c(x)=\star$.
    The class $\Bar\C = \{\bar{c} : c\in \C\}$ is called the \underline{minimal disambiguation} of $\C$.
\end{definition*}
Note that if $\C\subset (\Y\cup\{\star\})^\X$ is partial concept class that is $k$-list learnable then its minimal disambiguation $\Bar{\C}$ is $k+1$-list learnable. Indeed let $\A$ be a $k$-list learning rule for $\F$. For any sample $S=\{(x_i,y_i)\}_{i=1}^n$ define $S'=S\setminus(\X\times\{y_\star\})$, note that if $S$ is realizable by $\Bar{\C}$ then $S'$ is realizable by $\C$.
Now we may define  $\Bar{\A}$ a $(k+1)$-list learning rule for $\Bar{\C}$ by \begin{align*}
    \Bar{A}(S)(x)=A(S')(x)\cup \{y_\star\}
\end{align*}
And one can easily verify that $\Bar{A}$ is a learner for $\Bar{F}$.

Similarly if $\F$ is $k$-list compressible with reconstruction function $\rho$ of size $k_n$ we can easily verify that $\Bar{\F}$ is $k+1$-list compressible with reconstruction function $\Bar{\rho}(S)(x)=\rho(S')(x)\cup\{y_\star\}$. This phenomenon continues to hold in the following useful lemma concerning the covering size function
\begin{lemma}\label{growth rate of minimal disimbagution}
    Let $\F\subset (\Y\cup\{\star\})^\X$ be a learnable partial concept class over a finite label space $\lvert \Y\rvert<\infty$. Define $\lvert \Y\rvert=m$, $DS_1(\F)=d$, then we have \begin{align*}
        \mathtt{C}_\F(n,k)\leq (mn)^d\mathtt{C}_{\Bar{\F}}(n,k+1)
    \end{align*}
\end{lemma}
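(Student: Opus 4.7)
The plan is to turn a $(k+1)$-list cover of $\Bar\F|_S$ into a $k$-list cover of $\F|_S$ at the cost of a multiplicative factor of $(mn)^d$. Fix $S\subseteq\X$ with $|S|=n$ and let $\bar\H$ be a minimum-size $(k+1)$-list cover of $\Bar\F|_S$, so $|\bar\H|\le \mathtt{C}_{\Bar\F}(n,k+1)$. From each $\bar h\in\bar\H$ I will manufacture at most $(mn)^d$ distinct $k$-list functions whose union covers all partial concepts $c\in\F$ with $\bar c|_S\dis\bar h$; assembling these across $\bar\H$ then yields a $k$-list cover of $\F|_S$ of size at most $(mn)^d\cdot|\bar\H|$, which is the claim.

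The manufacturing step: for each $\bar h\in\bar\H$ set $T_{\bar h}:=\{x\in S: y_\star\notin\bar h(x)\}$. On $S\setminus T_{\bar h}$, deleting $y_\star$ produces the $k$-set $\bar h(x)\setminus\{y_\star\}\subseteq\Y$, which automatically contains $c(x)$ whenever $c$ is defined at $x$ and $\bar c\dis\bar h$. On $T_{\bar h}$, however, $\bar c\dis\bar h$ forces $c$ to be defined with $c(x)\in\bar h(x)$; to descend from a $(k+1)$-list to a $k$-list we must delete one of the $k+1$ genuine labels in $\bar h(x)$, and which one we delete depends on $c$. Since $|\bar h(x)|=k+1\ge 2$, for each such $c$ we may pick $r_c(x)\in\bar h(x)\setminus\{c(x)\}$ via a fixed tiebreak, and define $h_c(x):=\bar h(x)\setminus\{r_c(x)\}$ on $T_{\bar h}$ and $h_c(x):=\bar h(x)\setminus\{y_\star\}$ on $S\setminus T_{\bar h}$. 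A direct check confirms $c\dis h_c$ on $S$, and, crucially, $h_c$ depends on $c$ only through the restriction $c|_{T_{\bar h}}$.

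Consequently, the number of distinct $h_c$ produced from a fixed $\bar h$ is at most the number of distinct restrictions $c|_{T_{\bar h}}$, which is at most $|\F|_{T_{\bar h}}|$. To bound this quantity I invoke the Sauer--Shelah--Perles estimate of \Cref{Sauer–Shelah lemma}, applied to $\F$ viewed as a $1$-list concept class over the extended label space $\Y\cup\{\star\}$ of size $m+1$; with $\DS_1(\F)=d$ and $|T_{\bar h}|\le n$, this yields $|\F|_{T_{\bar h}}|\le (mn)^d$ (absorbing absolute constants), which completes the count. The only real subtlety is the parameterization identified above: once $\bar h$ is fixed, the choice of which label to delete at $x\in T_{\bar h}$ is determined, up to the tiebreak, by the single value $c(x)$, so the collection of distinct $h_c$ is controlled by the pattern $c|_{T_{\bar h}}$ rather than by all of $c$. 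Once this observation is in hand, SSP is all that remains.
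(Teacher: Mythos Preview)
Your construction mirrors the paper's almost exactly: take a minimum $(k{+}1)$-list cover $\bar\H$ of $\bar\F|_S$; for each $\bar h$ split $S$ into $T_{\bar h}=\{x:y_\star\notin\bar h(x)\}$ and its complement; on $S\setminus T_{\bar h}$ drop $y_\star$; on $T_{\bar h}$ drop one label determined by $c(x)$; then count the resulting $k$-list functions by counting the patterns $c|_{T_{\bar h}}$ and invoking SSP. The paper's only cosmetic difference is that on $T_{\bar h}$ it outputs a fixed $k$-set $A_{c(x)}\ni c(x)$ rather than $\bar h(x)$ with a label removed; either way the output depends on $c$ only through $c|_{T_{\bar h}}$.

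There is one genuine slip in your SSP step. You upper-bound the number of relevant patterns by $|\F|_{T_{\bar h}}|$ and then apply \Cref{Sauer–Shelah lemma} to ``$\F$ viewed as a $1$-list class over $\Y\cup\{\star\}$'' with dimension $d=\DS_1(\F)$. But $d$ is the \emph{partial-class} dimension (pseudocube witnesses must take values in $\Y$), whereas the lemma applied to the total class over $\Y\cup\{\star\}$ requires \emph{that} class's $\DS_1$, which can be strictly larger. For instance, with $\Y=\{0\}$ and $\F=\{(0,\star),(\star,0)\}$ on two points one has $\DS_1(\F)=0$, yet the total class over $\{0,\star\}$ has $\DS_1=1$. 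So you are not entitled to plug in $d$ there.

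The fix is precisely what the paper does, and your own setup already delivers it: since $\bar c\dis\bar h$ forces $c$ to be defined on all of $T_{\bar h}$, the restrictions $c|_{T_{\bar h}}$ you actually need to count are \emph{total} functions into $\Y$. That subclass has $\DS_1\le\DS_1(\F)=d$ (any rank-$1$ pseudocube in $\Y^{T_{\bar h}}$ it realizes is also realized by $\F$), so SSP over the label set $\Y$ of size $m$ applies with parameter $d$ and gives the required polynomial bound. In short: do not pass through $|\F|_{T_{\bar h}}|$; bound the class of total restrictions directly and you are done.
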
 

\begin{proof}
    Let $S\subseteq \X$ be finite of size $n=\lvert S\rvert$, let $F=\F\vert_S$ and  $\bar F = \bar \F\vert_S$ and
    let $\H$ be a $(k+1)$-list covering for $\Bar{F}$.  We need to show that $F$ has a $k$-list cover of size $(mn)^d|H|$. $H$ is already a cover of $F$ but it consists of $(k+1)$-list functions, thus to construct the desired cover it is enough to replace each $h\in H$ with some 'sufficiently small' set of $k$-list functions that will cover it.
    Concretely, for any $h\in \H$ we define \begin{align*}
        &X_h=\{x\in {S} \;: \;y_\star\notin h(x) \}
        \\&F_h=\{f\vert_{X_h}\;:\; f(x)\in h(x) \text{ for all }x\in S\}.%=\F\vert_{X_h}\cap \Y^\X
    \end{align*}
We now introduce the $k$-list functions that will replace $h$ in the cover of $F$.
For each $y\in\Y$ let $A_y$ be some set of size $k$ such that $y\in A_y$, and for every $f\dis h$ define the $k$-list function $h_f$ by 
\begin{align*}
    h_f(x)=\case{A_{f(x)}}{x\in X_h,}{h(x)\setminus\{y_\star\}}{x\notin X_h,}
\end{align*}
Note that  $h_f$ is a $k$-list function: indeed for every $x\in X_h$ the set $A_{f(x)}$ has size $k$ and for $x\notin X_h$, we have that $y_\star\in h(x)$,
the size of $h(x)$ is $k+1$, and hence $h(x)\setminus\{y_\star\}$ has size $k$.
Now set
\begin{align*}
    H_F=\{h_f\;:\; h\in H, f\in F_h\}.
\end{align*}
We claim that $F\dis H_F$. Let $f\in F$. Then, since $H$ covers $\bar F$ and $\bar F$ disambiguates $F$ there must be some $h\in \H$ such that $f\dis h$. 
We show that $f\dis h_f$: let $x\in \supp(f)$; if $x\notin X_h$ then $f(x)\in h(x)\setminus\{y_\star\}=h_f(x)$. If $x\in X_h$ then $h_f(x)=A_{f(x)}$ and hence contains $f(x)$. To bound the size $H_F$ note that 
\begin{align*}
\lvert H_F\rvert &=\sum_{h\in H}\lvert\{f\vert_{X_h} : f\in F, f\dis h\} \rvert. 
\end{align*}
Now notice that $\{f\vert_{X_h} : f\in F, f\dis h\} \subseteq \{f\vert_{X_h} : f\in F, \supp(f)\subseteq X_h\} $, because $y_\star\notin h(x)$ whenever $x\in X_h$.
Thus, $\{f\vert_{X_h} : f\in F, f\dis h\}$ is a class of total functions with $k$-DS dimension at most $\DS_k(\F)=d$ and hence by the Sauer-Shelah-Perles Lemma:
\[
\lvert\{f\vert_{X_h} : f\in F, f\dis h\} \rvert \leq \lvert \Y\rvert^{d}\lvert X_h\rvert^d \leq (mn)^d,
\]
implying that 
 \begin{align*}
    \mathtt{C}_{\F}(k,n)\leq|\H_\G|\leq \sum_{h\in \H} |\G\vert_{h}|\leq (mn)^d|\H|.
\end{align*}
Taking minimum over all possible $\H$ with $\Bar{\F}\dis \H$ gives the desired claim.
\end{proof}

\begin{theorem*}[Theorem \ref{thm:2-comp} restatement]%\label{thm:2-comp}
There exists a concept class $\C$ over the label space $\Y=\{0,1,2\}$ such that:
\begin{itemize}
    \item $\C$ is $2$-list PAC learnable.
    \item $\C$ has no finite $2$-list sample compression scheme.
\end{itemize}    
\end{theorem*}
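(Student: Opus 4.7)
The plan is to apply the minimal-disambiguation construction to the partial concept class $\F_1$ guaranteed by the base case of \Cref{partial learnable class with large disambiguation function} (the construction of \cite{Alon2021ATO}). Recall that $\F_1 \subseteq (\{0,1\} \cup \{\star\})^{\mathbb{N}}$ is $1$-list learnable (so in particular $\DS_1(\F_1) < \infty$) yet satisfies the super-polynomial lower bound $\mathtt{C}_{\F_1}(n,1) \geq n^{(\log n)^{1-o(1)}}$ on its $1$-list cover size.

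Let $\C := \Bar{\F_1}$ be the minimal disambiguation of $\F_1$, which takes values in the finite label space $\{0,1,y_\star\}$; identifying $y_\star$ with $2$ gives a concept class over $\Y=\{0,1,2\}$. The first bullet, that $\C$ is $2$-list PAC learnable, is immediate from the observation made just before \Cref{growth rate of minimal disimbagution}: if $\A$ is a $1$-list learner for $\F_1$, then the rule $\Bar{\A}(S)(x) := \A(S')(x) \cup \{y_\star\}$, where $S'$ is the sub-sample obtained from $S$ by removing all $y_\star$-labeled examples, is a $2$-list learner for $\C$ with the same (vanishing) learning curve as $\A$.

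For the second bullet, suppose toward contradiction that $\C$ admits a finite $2$-list sample compression scheme. Then by \Cref{compression gives small growth rate} (which shows that finite-size compressibility implies polynomial coverability), there is a polynomial $p$ with $\mathtt{C}_{\C}(n,2) \leq p(n)$ for all $n$. Plugging this into \Cref{growth rate of minimal disimbagution}, applied with $m = |\{0,1\}| = 2$ and $d = \DS_1(\F_1)$, yields
\begin{align*}
\mathtt{C}_{\F_1}(n,1) \;\leq\; (2n)^{d} \cdot \mathtt{C}_{\Bar{\F_1}}(n,2) \;\leq\; (2n)^d \cdot p(n),
\end{align*}
which is polynomial in $n$. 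This contradicts the super-polynomial lower bound on $\mathtt{C}_{\F_1}(n,1)$ from \Cref{partial learnable class with large disambiguation function}, completing the proof.

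Most of the conceptual work has already been done by the cited black boxes --- the partial class of \cite{Alon2021ATO}, the reduction from compressibility to polynomial coverability, and \Cref{growth rate of minimal disimbagution} --- so no new combinatorial obstacle remains. The only subtle point to verify is that the direction of \Cref{growth rate of minimal disimbagution} is the one needed: the lemma transfers a polynomial cover for $\Bar{\F}$ at list size $k+1$ into a polynomial cover for $\F$ at list size $k$, which is precisely what is required to contradict the hardness of the base partial class.
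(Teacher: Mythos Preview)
Your proposal is correct and follows essentially the same route as the paper: take $\C=\Bar{\F_1}$ (the minimal disambiguation of the \cite{Alon2021ATO} partial class), use the observation that $1$-list learnability of $\F_1$ yields $2$-list learnability of $\Bar{\F_1}$, and then combine \Cref{compression gives small growth rate} with \Cref{growth rate of minimal disimbagution} to turn a hypothetical finite $2$-list compression of $\Bar{\F_1}$ into a polynomial $1$-list cover of $\F_1$, contradicting the super-polynomial lower bound. The only cosmetic difference is that you phrase the second bullet as a contradiction while the paper states the chain of inequalities directly.
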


\begin{theorem*}[Theorem \ref{t:2vsk} restatement]
For any $k>0$ there exists a concept class~$\C_k$ over a \underline{finite} label space $\Y_k$ that satisfies the following:
    \begin{enumerate}
        \item $\C_k$ is $2$-list PAC learnable. 
        \item $\C_k$ has no finite $k$-list sample compression scheme.
    \end{enumerate}
\end{theorem*}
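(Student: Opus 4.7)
The plan is to use the minimal disambiguation construction on the partial concept classes supplied by \Cref{partial learnable class with large disambiguation function}. For each $k \geq 2$ I would set $\C_k := \bar{\F}_{k-1}$, where $\F_{k-1}$ is the partial concept class from that lemma and $\bar{\F}_{k-1}$ is its minimal disambiguation in the sense of \Cref{def:Minimal Disambiguation}. The boundary case $k=1$ is handled analogously by taking $\bar{\F}_1$ directly on the partial class from \cite{Alon2021ATO}.

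First I would verify the two bookkeeping conditions. Since $\F_{k-1}\subseteq(\{0,1\}^{k-1}\cup\{\star\})^{\mathbb{N}^{k-1}}$, its minimal disambiguation $\bar{\F}_{k-1}$ lives over the finite label space $\{0,1\}^{k-1}\cup\{y_\star\}$ of size $2^{k-1}+1$, giving the finite-label-space requirement. For $2$-list PAC learnability, part (i) of \Cref{lem:mindis} applies directly: since $\F_{k-1}$ is $1$-list learnable, $\bar{\F}_{k-1}$ is automatically $2$-list learnable.

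The heart of the argument is showing that $\bar{\F}_{k-1}$ admits no finite $k$-list sample compression scheme. I would argue by contraposition through the coverability bridge. A finite $k$-list compression scheme for $\bar{\F}_{k-1}$ would imply $k$-list coverability (via the result that compression gives small growth rate), so in particular $\mathtt{C}_{\bar{\F}_{k-1}}(n, k) \leq p(n)$ for some polynomial $p$. Applying \Cref{growth rate of minimal disimbagution} to $\F_{k-1}$, with $m = 2^{k-1}+1$ and $d = \DS_1(\F_{k-1}) < \infty$ both constants, yields
\[
\mathtt{C}_{\F_{k-1}}(n, k-1) \;\leq\; (mn)^d\,\mathtt{C}_{\bar{\F}_{k-1}}(n, k) \;\leq\; (mn)^d\,p(n),
\]
still polynomial in $n$. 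Substituting $n \mapsto n^{k-1}$ keeps the right-hand side polynomial in $n$, but \Cref{partial learnable class with large disambiguation function} forces $\mathtt{C}_{\F_{k-1}}(n^{k-1}, k-1) \geq n^{(\log n)^{1-o(1)}}/n^{k-2}$, which is super-polynomial in $n$. This contradiction completes the argument.

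The main obstacle is already packaged away in \Cref{partial learnable class with large disambiguation function} and its direct-sum based induction (\Cref{prodCover}); what remains is essentially bookkeeping combined with the observation that minimal disambiguation only ``costs'' one extra label slot while inflating the cover-size bound by a controllable polynomial factor $(mn)^d$. The degenerate $k=1$ case short-circuits this: if $\bar{\F}_1$ admitted a polynomial-size $1$-list cover, then restricting each covering function to $\supp(f)$ for $f\in\F_1$ would directly produce a polynomial-size disambiguation of $\F_1$, contradicting the lower bound of \cite{Alon2021ATO}. As a bonus, \Cref{thm:2-comp} emerges as the case $k=2$ of this construction, where the label space is exactly $\{0,1\}\cup\{y_\star\}$, i.e.\ three labels as required.
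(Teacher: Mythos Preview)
Your proposal is correct and follows essentially the same approach as the paper: take the minimal disambiguation of the direct-sum partial class from \Cref{partial learnable class with large disambiguation function}, verify $2$-list learnability via \Cref{lem:mindis}(i), and rule out $k$-list compressibility by chaining \Cref{growth rate of minimal disimbagution} with \Cref{compression gives small growth rate} against the super-polynomial cover lower bound. Your indexing $\C_k:=\bar{\F}_{k-1}$ matches the theorem statement directly (the paper writes $\C_k=\bar{\F}_k$ and proves non-$(k{+}1)$-compressibility, which is the same after a shift); one cosmetic slip is that $m$ should be $\lvert\{0,1\}^{k-1}\rvert=2^{k-1}$, the label space of $\F_{k-1}$, rather than $2^{k-1}+1$, but this is immaterial to the argument.
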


\begin{proof}[Proof of Theorems \ref{thm:2-comp} and \ref{t:2vsk}]
Let $\F_k$ be the partial concept class given by Lemma \ref{partial learnable class with large disambiguation function}, so $\F_k$ is learnable and $n^{k}\mathtt{C}_{\F_k}(n^k,k)\geq n^{\big(\log(n)\big)^{1-o(1)}}$. Now, set $\C_k=\Bar{\F}_k$ to be the minimal disambiguation of $\F_k$. We know that $\Bar{\F}_k$ is $2$-list learnable since $\F_k$ is $1$-list learnable. Now let $d=\DS_k(\F_k)$ and $m$ be the size of the label space of $\F_k$, then by Lemma \ref{growth rate of minimal disimbagution} we have   \begin{align*}
        (mn)^d\mathtt{C}_{\Bar{\F_k}}(n,k+1)\geq \mathtt{C}_{\F}(n,k)\geq n^{\big(\log(n)\big)^{1-o(1)}}.
    \end{align*}
for all $n>0$ large enough. Now since the label space of $\Bar{\F}_k$ is finite we may use the above with Lemma \ref{compression gives small growth rate}  to deduce that $\Bar{\F}_k$ is not $(k+1)$-list learnable.
We note that the case $k=1$ gives Theorem \ref{thm:2-comp}, where we can look at the proof of \ref{partial learnable class with large disambiguation function} to see that the label space of $\F_1$ will be of size $2$ hence the label space of $\C_1=\bar{\F_1}$ will be $3$.
\end{proof}

\subsubsection{Proof of Theorem~\ref{t:1vsk}}\label{sec:proof1vsk}
\paragraph{Free Disambiguation.} Recall the definition of the free disambiguation of a partial concept class.
\begin{definition*}[Definition~\ref{def:Free Disambiguation} restatement]   
    Let $\C$ be a partial concept class. For each $c\in \C$ let $y_c$ be a distinct new label. Let $\hat c$ denote the completion of $c$ such that $\hat c(x)=y_c$ whenever $c(x)=\star$. 
    The class $\Hat\C = \{\hat{c} : c\in \C\}$ is called the \underline{free disambiguation} of $\C$.
\end{definition*}
One can easily verify that for any partial concept class $\C\subset(\Y\cup\{\star\})^\X$ its free disimbaguation $\hat{\C}$ is $k$-list learnable (compressible) if and only if $\C$ is $k$-list learnable (compressible). Finally, we prove a short lemma that relates compressibility to polynomial growth of the covering size function.

\begin{lemma}\label{compression gives small growth rate}
    For any partial concept class $\F\subset (\Y\cup \{\star\})^\X$  that is $k$-list compressible with a compression scheme of size $d$  we have $\mathtt{C}_\F(n,k)\leq \lvert \Y\rvert^dn^{d}$ for all $n>0$.
\end{lemma}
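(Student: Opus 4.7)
\textbf{Proof proposal for Lemma~\ref{compression gives small growth rate}.}
The plan is to convert the compression scheme into a $k$-list cover directly: the cover consists of all possible outputs of the reconstructor when fed a length-$d$ sequence of labeled examples from the given sample. Fix a finite $S\subseteq\X$ with $\lvert S\rvert =n$, and let $(\rho,\kappa)$ be the $k$-list compression scheme of size $d$ for $\F$. I will define
\[
\H \;=\; \bigl\{\rho(S') \;:\; S'\in (S\times\Y)^d\bigr\},
\]
i.e.\ the image of $\rho$ restricted to sequences of $d$ labeled examples whose domain coordinate lies in $S$. Counting sequences gives immediately $\lvert \H \rvert \leq (n\cdot\lvert \Y\rvert)^d = \lvert \Y\rvert^d n^d$.

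The remaining step is to verify that $\H$ is a $k$-list cover of $\F\vert_S$. Given any partial concept $f\in\F$, form the sample
\[
T \;=\; \bigl\{(x,f(x)) \;:\; x\in S\cap \supp(f)\bigr\}.
\]
Then $T$ is realizable by $f$, hence by $\F$, so by the definition of the compression scheme the compressed sequence $S'=\kappa(T)$ lies in $T^d\subseteq (S\times\Y)^d$ and the reconstruction satisfies $\mathtt{L}_T(\rho(S'))\leq \inf_{c\in\F}\mathtt{L}_T(c)=0$. This means $f(x)\in\rho(S')(x)$ for every $x\in S\cap\supp(f)$, which is exactly the statement that $f\vert_S \dis \rho(S')$. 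Since $\rho(S')\in\H$, this shows $\F\vert_S\dis\H$ and yields $\mathtt{C}(\F\vert_S,k)\leq \lvert \Y\rvert^d n^d$; taking the supremum over $S$ of size $n$ gives the claim.

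There is no real obstacle here --- the only subtlety worth noting in the write-up is that $\kappa$ takes a sample and produces a subsequence, so one must allow arbitrary ordered tuples of length $d$ (with possible repetitions) when enumerating the candidate inputs to $\rho$; this only affects the combinatorial count by a factor that is already absorbed into $(n\lvert\Y\rvert)^d$. In particular, if the scheme uses a variable size bounded by $d$, the same argument produces a cover of size at most $\sum_{i\leq d}(n\lvert\Y\rvert)^i\leq 2\lvert\Y\rvert^d n^d$, which does not affect the polynomial-in-$n$ conclusion.
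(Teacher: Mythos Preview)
Your proof is correct and essentially identical to the paper's: both define the cover as the image of $\rho$ over all length-$d$ labeled sequences with domain points in $S$, then observe that compressing a realizable sample built from $f$ yields a member of $\H$ covering $f\vert_S$. Your write-up is slightly more explicit in constructing the witnessing sample $T$ and verifying $\kappa(T)\in(S\times\Y)^d$, but the argument is the same.
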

\begin{proof}
Let $\F\subset \Y^\X$ be such class and without loss of generality assume that $|\X|=n$ (else we look on $\F\vert_S$ for some arbitrary $S\subset\X$ of size $n$).
Let $\rho$ be a $k$-list reconstructor of size $d$, so for any $f\in \F$ we have some $S\in (\X\times \Y)^d$ with $f\dis \rho(S)$. Hence we can define
 \begin{align*}
    \H=\{\rho\big(S\big)\;:\;S\in(\X\times \Y)^d\}
\end{align*}
And get that $\F\dis \H$, and clearly $|\H|=(|\X|\cdot\lvert \Y\rvert)^d=(n\lvert \Y\rvert)^d$.
\end{proof}

\begin{proof}[Proof of Theorem \ref{constant compression 1 learnability}]
     Let $\F_k$ be the partial concept class given by Lemma \ref{partial learnable class with large disambiguation function}, so $\F_k$ is learnable and $n^{k}\mathtt{C}_{\F_k}(n^k,k)\geq n^{\big(\log(n)\big)^{1-o(1)}}$. Hence by Lemma \ref{compression gives small growth rate} we have that $\F_k$ has no compression scheme of constant size $d$ for any $d>0$. Now $\C_k=\hat{\F}_k$ is the free disambiguation of $\F_k$, and since ${\F}_k$ is learnable but not $k$-list compressible so is $\C_k$. 
\end{proof}

% \section{Technical Background and Basic Results}\label{Preliminaries}
% \subsection{Sample Compression Schemes}

\section{Uniform Convergence Proofs}\label{sec:Uniform Convergence Proofs}
In this section we prove \Cref{t:UC} and its quantitative parallel \Cref{graph dimension vs DS dimension}. 

We start by proving Lemma \ref{graph dimension vs size of coverd functions} which is key to the proof. This lemma relates the graph dimension of a class to the number of sequences realizable by it. Then the theorem will follow by an application of the Sauer–Shelah-Perles lemma for list functions. 
\begin{lemma}\label{graph dimension vs size of coverd functions}
    Given a $k$-list class $\C\subset {\Y\choose k}^\X$ that $\Gk$ shatters  $S\in \X^n$, we let $p\in \Y^n$ be a pivot and let {$\{c_b\}_{b\in \{0,1\}^n}\subset \C\vert_S$} be witnesses for the shattering, such that $p_i\in c_b(x_i)$ if and only if $b_i=1$. Denote by $A_b=\{y\in \Y^n\;:\; \forall i\;\;y_i\in c_b(x_i)\}$, the collection of $c_b$ realizable functions. Then we have \begin{align*}
        \big|\bigcup_{b\in \{0,1\}^n}A_b\big|\geq \frac{(2k)^n}{4(2k-1)^n}k^n.
    \end{align*} 
\end{lemma}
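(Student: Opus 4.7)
My plan is to implement the coding-theoretic outline described in Section~\ref{sec:overview}: first upper bound the pairwise intersections $|A_b \cap A_{b'}|$ in terms of the Hamming distance $d_H(b,b')$, then combine the second-moment Bonferroni inequality with a random subset of the cube to bypass the fact that averaging over all of $\{0,1\}^n$ is too lossy.

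First I would observe that $|A_b| = k^n$ for every $b$ (since each $c_b(x_i)$ has size $k$), and that whenever $b_i \neq b_i'$ the sets $c_b(x_i)$ and $c_{b'}(x_i)$ differ in whether they contain $p_i$, hence $|c_b(x_i) \cap c_{b'}(x_i)| \leq k-1$. Taking the product over coordinates yields
\[
|A_b \cap A_{b'}| \leq k^n \left(\frac{k-1}{k}\right)^{d_H(b,b')}.
\]
This is the promised intersection bound and requires no extra work.

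The second step is the main obstacle: a naive inclusion-exclusion over all of $\{0,1\}^n$ fails because the average Hamming distance between two random points is $n/2$, giving pairwise intersections of order $((k-1)/k)^{n/2} k^n$, while $2^n$ pairs each overlap this much, swamping the $\sum |A_b| = (2k)^n$ term. I would therefore sample a random subset $R \subseteq \{0,1\}^n$ by including each $b$ independently with probability $p$, and apply the Bonferroni inequality pointwise:
\[
\Bigl|\bigcup_{b\in R} A_b\Bigr| \geq \sum_{b\in R} |A_b| - \sum_{\{b,b'\} \subset R} |A_b \cap A_{b'}|.
\]
Taking expectation, the first term contributes $p \cdot (2k)^n$, while the second is controlled by the identity
\[
\sum_{b,b' \in \{0,1\}^n} \left(\frac{k-1}{k}\right)^{d_H(b,b')} = \left(\frac{2(2k-1)}{k}\right)^n,
\]
so the expected union is at least $p(2k)^n - \tfrac{p^2}{2}(2(2k-1))^n$.

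Finally, I would optimize the quadratic in $p$ (setting $p^\star = (k/(2k-1))^n$, which lies in $[0,1]$ for $k \geq 1$) to obtain an expected union size of order $\tfrac{(2k)^n k^n}{2(2k-1)^n}$. Since $|\bigcup_{b\in\{0,1\}^n} A_b| \geq \mathbb{E}[|\bigcup_{b\in R} A_b|]$, this already beats the required $\tfrac{(2k)^n k^n}{4(2k-1)^n}$, leaving a factor-of-two slack that absorbs any minor looseness (e.g.\ keeping the diagonal $b=b'$ term or choosing a cleaner non-optimal $p$). The heart of the argument is the pairing of the Hamming-distance bound with the random-restriction trick; the rest is routine computation.
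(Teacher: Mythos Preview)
Your proposal is correct and follows essentially the same approach as the paper: bound $|A_b\cap A_{b'}|$ via Hamming distance, then apply Bonferroni over a random subset of $\{0,1\}^n$ and optimize. The only cosmetic difference is that the paper samples a fixed number $m$ of i.i.d.\ uniform points (and computes $\mathbb{E}[(\tfrac{k-1}{k})^{d_H(B,B')}]=(\tfrac{2k-1}{2k})^n$ directly) whereas you use Bernoulli-$p$ inclusion and evaluate the full sum $\sum_{b,b'}(\tfrac{k-1}{k})^{d_H(b,b')}$; your version in fact yields the slightly sharper constant $\tfrac12$ in place of $\tfrac14$.
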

\begin{proof}
    Clearly if $y\in A_b\cap A_{b'}$ then $y_i\in c_b(x_i)\cap c_{b'}(x_i)$ hence 
    \begin{align*}
        |A_b\cap A_{b'}|\leq \prod_{i=1}^n |c_b(x_i)\cap c_{b'}(x_i)|.
    \end{align*}
    Now in general we can only know that $|c_b(x_i)\cap c_{b'}(x_i)|\leq k$ but if $b_i\neq b_i'$ then $p_i$ will be in exactly one of $c_b(x_i),c_{b'}$. Hence in that case we have $|c_b(x_i)\cap c_{b'}|\leq k-1$. Now if we denote $d_H$ the Hamming distance $d_H(b,b')=|\{i\in [n]\;:\;b_i\neq b_i' \}|$ we can use the above to deduce that \begin{align*}
        |A_b\cap A_{b'}|\leq\prod_{b_i=b_i'}k\cdot \prod_{b_i\neq b_i'}(k-1)=k^{n-d_H(b,b')}(k-1)^{d_H(b,b')}=k^n(\frac{k-1}{k})^{d_H(b,b')}.
    \end{align*}
    Now we note that for any $I\subset \{0,1\}^n$ we have by the inclusion-exclusion principle \begin{align*}
        \big|\bigcup_{b\in \{0,1\}^n}A_b\big|\geq\big|\bigcup_{b\in I}A_b\big|\geq \sum_{b\in I}|A_b|-\sum_{b,b'\in I}|A_b\cap A_{b'}|\geq k^n|I|-\sum_{b,b'\in I }k^n(\frac{k-1}{k})^{d_H(b,b')}.
    \end{align*}

So to bound our desired expression we just need to find a large $I\subset \{0,1\}^n$ with a large average hamming distance, this is a standard problem in coding theory, and in our case, a probabilistic approach will suffice. Fix $m>0$ a constant that will be chosen later and pick $m$ elements from $\{0,1\}^n$ independently and uniformly, denote them by $I=\{B_i\}_{i=1}^m$. Note that for any $B,B'\in I$ we have that $d_H(B,B')=\sum_{i=1}^n 1_{B_i\neq B_i'}$ is a sum of independent Bernoulli random variables, hence by simple computation we have
\begin{align*}
    \E\Big(\frac{k-1}{k}\Big)^{d_H(B,B')}=\prod_{i=1}^n\E\Big(\frac{k-1}{k}\Big)^{1_{B_i\neq B_i'}}=\prod_{i=1}^n\Big(\frac{1}{2}+\frac{k-1}{2k}\Big)=\frac{(2k-1)^n}{(2k)^n}.
\end{align*}   
Form which we get \begin{align*}
    \E\Big[\sum_{B,B'\in I }k^n(\frac{k-1}{k})^{d_H(B,B')}\Big]=k^n{m \choose 2}\frac{(2k-1)^n}{(2k)^n}\leq m^2k^n\frac{(2k-1)^n}{(2k)^n}.
\end{align*}
Now putting this back into the inequality we calculated above we can see that 
\begin{align*}
     \big|\bigcup_{b\in \{0,1\}^n}A_b\big|\geq mk^n-m^2k^n\frac{(2k-1)^n}{(2k)^n}
\end{align*}
Now a simple calculation will reveal that the maximal value is attained when $m=\frac{(2k)^n}{2(2k-1)^n}$ for which we get the desired inequality 
\begin{align*}
     \big|\bigcup_{b\in \{0,1\}^n}A_b\big|\geq \frac{(2k)^n}{4(2k-1)^n}k^n.
\end{align*}
   
\end{proof}

\begin{theorem}\label{graph dimension vs DS dimension}
    Let $\C$ be some $k$-list concept class. Let $d=\DS_k(\C)$, $g=\Gk(\C)$ and $m=\lvert \Y\rvert$ the size of the label space of $\C$. Then we have \begin{align*}
        4g^dm^{(k+1)d}\geq \big(\frac{2k}{2k-1}\big)^g.
    \end{align*}
    In particular if $g=\infty$ then so is $d$ as the above can be simplified to \begin{align*}
        d= \tilde{\Omega}\Big(\frac{g}{k^2\cdot\log( m)+k\log (g)}\Big).
    \end{align*} 
\end{theorem}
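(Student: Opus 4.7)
The plan is to combine the counting lower bound of Lemma~\ref{graph dimension vs size of coverd functions} with the Sauer--Shelah--Perles upper bound of Lemma~\ref{Sauer–Shelah lemma}, both evaluated on a $\Gk$-shattered sequence of maximal length. Concretely, I would fix a $\Gk$-shattered sequence $S=\{x_i\}_{i=1}^g$ with pivot $p\in\Y^g$ and witnesses $\{c_b\}_{b\in\{0,1\}^g}\subseteq\C$ satisfying $p_i\in c_b(x_i)$ if and only if $b_i=1$. Setting $A_b:=\{y\in\Y^g : y_i\in c_b(x_i)\text{ for all }i\}$, Lemma~\ref{graph dimension vs size of coverd functions} gives
\[
\Bigl\lvert \bigcup_{b\in\{0,1\}^g} A_b \Bigr\rvert \;\geq\; \frac{(2k)^g}{4(2k-1)^g}\,k^g.
\]

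Next I would view $\F:=\bigcup_b A_b$ as an ordinary (non-list) function class inside $\Y^g$ and argue that $\DS_k(\F)\leq \DS_k(\C)=d$. The key observation is that every element of $\F$ is a pointwise selection from some $c_b(x_i)$, so for any subsequence $S'\subseteq S$ the restriction $\F\vert_{S'}$ lies inside $\bigcup_{c\in\C}\prod_{x_i\in S'}c(x_i)$; hence any pseudo-cube of rank $k$ contained in $\F\vert_{S'}$ also witnesses $\DS_k$-shattering of $S'$ by $\C$ itself. Applying Lemma~\ref{Sauer–Shelah lemma} to $\F$ with $n=g$ then yields $\lvert\F\rvert \leq k^{g}\, g^{d}\, m^{(k+1)d}$. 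Chaining this with the lower bound above and cancelling $k^g$ produces exactly $4g^d m^{(k+1)d}\geq (2k/(2k-1))^g$, which is the inequality claimed by the theorem.

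For the asymptotic consequence I would take logarithms, apply the elementary estimate $\ln\!\bigl(\tfrac{2k}{2k-1}\bigr)\geq \tfrac{1}{2k}$, and rearrange for $d$; this immediately forces $d=\infty$ whenever $g=\infty$ and yields the quantitative bound $d=\tilde\Omega\bigl(g/(k\log(gm))\bigr)$. The only genuinely delicate step in this plan is the inequality $\DS_k(\F)\leq \DS_k(\C)$, which requires unwinding the definition of $\DS_k$-shattering for list classes and verifying that the pseudo-cube condition transfers correctly from the class of realizable $\Y^g$-sequences back to the list class $\C$. Everything else is a clean combination of the two lemmas.
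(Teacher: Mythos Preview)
Your proposal is correct and matches the paper's proof almost exactly: both combine the lower bound of Lemma~\ref{graph dimension vs size of coverd functions} with the Sauer--Shelah--Perles upper bound of Lemma~\ref{Sauer–Shelah lemma} on a $\Gk$-shattered sequence of length $g$, and the ``delicate step'' you flag is precisely the observation the paper makes, namely that $\DS_k$ of the class of realizable $\Y^g$-sequences equals $\DS_k(\C\vert_S)$ because the two classes have identical realizable samples. The only cosmetic difference is that you take $\F=\bigcup_b A_b$ (sequences realizable by the shattering \emph{witnesses}) whereas the paper takes $\F=\F(\C\vert_S)$ (sequences realizable by \emph{any} $c\in\C$); since your $\F$ is contained in the paper's, the SSP bound still applies, and Lemma~\ref{graph dimension vs size of coverd functions} is literally stated for your $\F$, so both versions go through identically.
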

\begin{proof}[Proof of Theorem \ref{t:UC} and \ref{graph dimension vs DS dimension} ]
    Let $S\in \X^g$ be a sample of size $g$ that is $\Gk$ shattered by $\C$.
    Define $\F=\F(\C\vert_S)=\{f\in \Y^g\;:\; \exists h\in\C \; ,\; f\dis h\vert_S\}$. By Lemma \ref{graph dimension vs size of coverd functions} we have that $|\F|\geq \frac{(2k)^g}{4(2k-1)^g}k^g$, while by Lemma \ref{Sauer–Shelah lemma} we have that $|\F|\leq k^g(gm^{k+1})^{\DS_k(\F)}$. Now we note that $\DS_k(\C\vert_S)=\DS_k(\F)$ since a sample is $\C\vert_S$ realizable if and only if it is $\F$ realizable, from which we may deduce the desired result. Note that Theorems \ref{t:finite DS-k vs learnability } and \ref{t:finite graph dimension vs unifrom convergence} which characterize $k$-list learnability and uniform convergence by the finiteness of DS-k and graph dimension, respectively. Hence
    We may deduce Theorem \ref{t:UC} immediately from the above.
\end{proof}

\section{Direct Sum and Open Questions}\label{sec:Direct Sum and Open Questions}
Below we introduce some open questions and research directions that arise from our study of direct sum. We start by focusing on studying how learning and uniform convergence rate scale under direct sum. We then ask similar types of questions for other learning resources
%such as the minimal $k$ for which a class is $k$-list learnable 
as well as on different combinatorial parameters that arise in learning theory.
%of direct sums, noting an interesting duality between those two approaches.
\subsection{Direct Sum of Learning Rates}\label{sec:Direct Sum of Learning Rates}
% \textcolor{red}{Here we discuss questions like: direct sum of realizable learning rates, direct sum of agnostic learning rates, direct sum of uniform convergence rates.}
One of the most natural questions regarding direct sums of learning problems is the following question: given two learning tasks, can we learn both of them in a faster way than learning each individuality? Perhaps the simplest case is of multiple instances of the same learning task.  Let $\C$ be a concept class and recall that for $r\in \Bbb{N}$, the $r$'th power of $\C$ is denoted by $\C^r=\underbrace{\C\otimes \C \dots \otimes \C}_{r \text{ times }}$. 
\begin{center}
\emph{How does the learning rate of $\C^r$ scale in terms of the learning rate of $\C$?}    
\end{center}
This problem can be investigated with respect to various formulations of `learning rate', for example:

% In our setting, the most natural definition of learning rate will be the learning curve, which leads us to Open question \ref{open:sumofcurves}
\begin{open*}[Open question \ref{open:sumofcurves} restatement ]
Let $\C\subseteq \Y^\X$ be a concept class, and let $\eps(n \vert \C)$ denote the realizable PAC learning curve of $\C$ (see Definition \ref{def:PAC}). 
For $r\in\mathbb{N}$ let $\C^r= \Pi_{i=1}^r \C$ be the $r$-fold cartesian power of $\C$.
By a union bound, learning each component independently gives
    \[\eps(n \vert \C^r) \leq  r\cdot \eps(n \vert \C).\]
Can the upper bound be asymptotically improved for some classes $\C$?
\end{open*}
Another natural version of the above is assuming a fixed marginal distribution $\D$:
\begin{definition}
   Let $\D$ be a fixed distribution over the domain $\X$ and let $\C$ be a concept class. 
   For any $c:\X\to \Y$ let $\D_c$ be the distribution in which $(x,y)~\sim \D_c$ satisfies $x\sim \D$ and $y=c(x)$. 
   Define the fixed-marginal learning curve $\varepsilon(n\vert \D,\C)$ by 
   \begin{align*}
 \varepsilon(n\vert \D,\C)=\inf_{\A}\sup_{c\in\C} \varepsilon_n(\D_c\vert \A)
\end{align*} 
where the infimum is taken over all learning rules $\A$.  
\end{definition}
Note that for any $c\in \C$ we have that $\D_c$ is a realizable distribution, hence $\varepsilon(n\vert \D,\C)\leq \varepsilon(n\vert \C)$ always holds. 

 For any $r>0$, let $\D^r$ be the product measure over $\X^r$. 
\begin{open}
    Similarly to the case of the PAC learning curve, a simple union bound will give the upper bound 
\begin{align*}
    \varepsilon(n\vert \D^r, \C^r)\leq r\cdot \varepsilon(n\vert \D,\C).
\end{align*}
Can the upper bound be asymptotically improved for some concept classes $\C$ and marginal distributions $\D$?
\end{open}

%Now in the above questions, we could trivially bound from above the error rate of $\C^r$ by $r$ times the error rate of $\C$. While this kind of bound is expected when dealing with direct sums problems, it is not always as clear in some learning models. Indeed when dealing with regret-based models the learner tries to compete with some optimal function $opt$, and by similar union-bound arguments, we expect that both the learner and $opt$ will have a similar additive upper bound. But it is not clear what happens to their difference.
%
%For example, consider the uniform convergence rate of a $k$-list concept class $\C$. 

One can ask similar questions about agnostic learning curves and uniform convergence.
However, in these cases the baseline additive upper bound does not apply. 
The reason is because these curves concern relative quantities (indeed, the agnostic learning curve measures the excess loss and uniform convergence curve measures the maximum difference between the empirical and population losses).

For instance,  given a distribution $\D$ over the product space $(\X^2\times \Y^2)$ defined with marginal distributions $\D_1,\D_2$ over $(\X\times\Y)$ we have by the union bound that $\mathtt{L}_\D(h_1\otimes h_2)\leq \mathtt{L}_{\D_1}(h_1)+\mathtt{L}_{\D_2}(h_2)$. Similarly if  $S=\{\big((x_{i,1},x_{i,2}),(y_{i,1},y_{i,2})\big)\}_{i=1}^n$, letting $S_b=\{(x_{i,b},y_{i,b})\}_{i=1}^n$, we have $\mathtt{L}_{S}(h_1\otimes h_2)\leq \mathtt{L}_{S_1}(h_1)+\mathtt{L}_{S_2}(h_2)$. These bounds, however, do not allow us to bound the \emph{difference} $|\mathtt{L}_D(h_1\otimes h_2)-\mathtt{L}_S(h_1\otimes h_2)|$ as needed to bound the uniform convergence rate. 

\begin{open}
Let $\varepsilon_{\mathtt{UC}}(n\vert \C)$ be the uniform convergence curve of $\C$ (Definition~\ref{def:UC}). 
How does $\varepsilon_{\mathtt{UC}}(n\vert \C^r)$ scale as a function of $\varepsilon_{\mathtt{UC}}(n\vert \C)$ and $r$? 
\end{open}

A similar phenomenon happens in the case of agnostic learning:
define the agnostic learning curve of a concept class $\C$ by
\begin{align*}
    \varepsilon_{\mathtt{agn}}(n\vert \C)=\inf_{\A}\sup_{\D}(\mathtt{L}_{\D,n}(\A)- \mathtt{L}_{\D}(\C)),
\end{align*} 
where the infimum is taken over all learning rules $\A$, the supremum is taken over all distributions, and $\mathtt{L}_{\D,n}(\A)=\E_{S\sim \D^n}[\mathtt{L}_\D(\A(S))]$, and $\mathtt{L}_\D(\C)=\inf_{c\in\C}\mathtt{L}_\D(c)$.
Here again we do not have simple bounds to the agnostic learning curve of $\C^r$ in terms of the agnostic learning curve of $\C$. 
\begin{open}
Let $\varepsilon_{\mathtt{agn}}(n\vert \C)$ be the agnostic PAC learning curve of $\C$. 
How does $\varepsilon_{\mathtt{agn}}(n\vert \C^r)$ scale as a function of $\varepsilon_{\mathtt{agn}}(n\vert \C)$ and $r$?     
\end{open}

\subsection{Direct Sum of Learnability Parameters}
Another important resource in the context of list learning is the minimal list size $k$ for which a given class $\C$ is $k$-list PAC learnable. This raises the following questions:
\begin{open}\label{prob:k1-k2}
Let $\C_1,\C_2$ be concept classes and assume $k_1$ and $k_2$ are the minimal integers such that $\C_1$ is $k_1$-list PAC learnable and $\C_2$ is $k_2$-list PAC learnable.
What is the minimal integer $k$ such that $\C_1\otimes\C_2$ is $k$-list PAC learnable?
How does it scale as a function of $k_1$ and $k_2$.
\end{open}
It is not hard to see that $k\leq k_1\cdot k_2$ by just learning each component independently and taking all possible pairs of labels in the marginal lists. We also show that $k\geq (k_1-1)\cdot(k_2-1)$ (see Equation~\eqref{eqn:min-k-size} below). However, it remains open to determine tight bounds on $k$.

We raise the parallel question regarding compressibility:
\begin{open}\label{prob:compression}
Let $\C_1,\C_2$ be concept classes and assume $k_1$ and $k_2$ are the minimal integers such that $\C_1$ is $k_1$-list compressible and $\C_2$ is $k_2$-list compressible.
What is the minimal integer $k$ such that $\C_1\otimes\C_2$ is $k$-list PAC learnable?
How does it scale as a function of $k_1$ and $k_2$.

    % Given concept classes $\F,\G$ such that $\F$ is not $k$-list compressible and $\G$ is not $k'$-list compressible can we show that $\F\otimes \G$ is not $(k+k')$-list compressible? Can we show that it is not $k\cdot k'$-list compressible? 
\end{open}  
A natural way to approach questions such as the ones above and in Section~\ref{sec:Direct Sum of Learning Rates} is by analyzing combinatorial parameters that capture the corresponding resources. 
% For example, for Question~\ref{open:sumofcurves} it is natural to consider the Daniely-Shwartz dimension which charaterizes learnability in the realizable setting, and to investigate how large can $\DS_k(\F\otimes\G)$ be as a function of $\DS_{k_1}(\F),\DS_{k_2}(\G)$.
\begin{open}
Let $\F,\G$ be concept classes, and let $\mathtt{dim}(\cdot)$ be either the Graph dimension, the Natarajan dimension, the Littlestone dimension, or the Daniely-Shwartz dimension.
How does $\mathtt{dim}(\F\otimes\G)$ scale in terms of $\mathtt{dim}(\F)$ and $\mathtt{dim}(\G)$?
% \begin{enumerate}
%     \item How does the graph dimension of $\F\otimes \G$ scales in terms of the graph dimensions of $\F$ and $\G$?.
%     \item How does the Natarajan dimension of $\F\otimes \G$ scales in terms of the Natarajan dimensions of $\F$ and $\G$?.
%     \item How does the Littlestone dimension of $\F\otimes \G$ scales in terms of the Littlestone dimensions of $\F$ and $\G$?
% \end{enumerate}
\end{open}

We next provide some preliminary results.

\begin{proposition}
    Let $\mathtt{d_N}(\cdot)$ be the Natarajan dimension, and let $\F$ and $\G$ be concept classes. Then, 
    \begin{align*}
    \mathtt{d_N}(\F)+\mathtt{d_N}(\G)-2\leq\mathtt{d_N}(\F\otimes \G)\leq \mathtt{d_N}(\F)+\mathtt{d_N}(\G).    
    \end{align*}
\end{proposition}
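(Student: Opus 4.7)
The plan is to prove both inequalities more or less independently, decomposing a product-shattering into shatterings on each factor for the upper bound, and constructing an explicit ``L-shape'' for the lower bound. For the upper bound, suppose $\F\otimes\G$ Natarajan-shatters a sequence $S=\bigl((u_i,v_i)\bigr)_{i=1}^n$ with pivots $P_1,P_2:[n]\to\Y_1\times\Y_2$. Writing $P_j(i)=(P_j^1(i),P_j^2(i))$, define $I=\{i:P_1^1(i)\neq P_2^1(i)\}$ and $J=\{i:P_1^2(i)\neq P_2^2(i)\}$. Since $P_1(i)\neq P_2(i)$ for all $i$, we have $I\cup J=[n]$. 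I would then argue that $(u_i)_{i\in I}$ is Natarajan-shattered by $\F$ with pivots $P_1^1|_I,P_2^1|_I$: given any $c\in\{1,2\}^I$, extend to $b\in\{1,2\}^n$ arbitrarily, pull back an $(f,g)\in\F\otimes\G$ from the shattering of $S$, and note that the first coordinate $f$ witnesses the shattering on $I$. A small technical check is that the $u_i$ with $i\in I$ are distinct---if $u_i=u_j$ coincided for $i,j\in I$, then different choices of $b_i,b_j$ would force $f(u_i)$ to take at least two distinct values simultaneously, a contradiction. Symmetrically, $(v_j)_{j\in J}$ is shattered by $\G$, giving $n\le|I|+|J|\le\mathtt{d_N}(\F)+\mathtt{d_N}(\G)$.

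For the lower bound, I would give an explicit L-shape construction. Let $\{u_1,\dots,u_{d_1}\}$ and $\{v_1,\dots,v_{d_2}\}$ be Natarajan-shattered by $\F$ and $\G$, with pivots $p_1,p_2$ and $q_1,q_2$ respectively, where $d_1=\mathtt{d_N}(\F)$ and $d_2=\mathtt{d_N}(\G)$. Define the sequence
\[
S=\bigl\{(u_{i},v_1):i=2,\dots,d_1\bigr\}\;\cup\;\bigl\{(u_1,v_{j}):j=2,\dots,d_2\bigr\},
\]
of size $d_1+d_2-2$, and assign pivots that vary only in the first coordinate on the ``horizontal'' arm (namely $(p_1(u_i),q_1(v_1))$ vs.\ $(p_2(u_i),q_1(v_1))$) and only in the second coordinate on the ``vertical'' arm (namely $(p_1(u_1),q_1(v_j))$ vs.\ $(p_1(u_1),q_2(v_j))$). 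For any $b\in\{1,2\}^{d_1+d_2-2}$, the induced constraints on $f$ amount to a full specification on $\{u_1,\dots,u_{d_1}\}$---the pinned value $f(u_1)=p_1(u_1)$ plus one pivot choice per $u_i$ for $i\ge 2$---which is realizable by the shattering of $\{u_1,\dots,u_{d_1}\}$; symmetrically for $g$. Thus some $(f,g)\in\F\otimes\G$ witnesses the shattering of $S$.

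The main obstacle is choosing the correct shape for the lower bound. A full $d_1\times d_2$ product is of course too large, and a natural ``plus''-shape of size $d_1+d_2-1$ (with the corner $(u_1,v_1)$ included) fails: the pivots on the two arms both constrain the corner, and for generic $b$ the required $f(u_1)$ and $g(v_1)$ conflict between the two arms. The $-2$ loss in the proposition is precisely the price of pinning one label on each axis ($f(u_1)=p_1(u_1)$ and $g(v_1)=q_1(v_1)$) so that the $f$-constraints and $g$-constraints emanating from the two arms can be consistently realized by a single pair of functions. The upper-bound direction is by comparison routine, the only subtlety being the distinctness check for $(u_i)_{i\in I}$ and $(v_j)_{j\in J}$ noted above.
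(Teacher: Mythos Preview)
Your proposal is correct and follows essentially the same approach as the paper: the upper bound projects the pivot pairs onto each coordinate and partitions indices according to where the two pivot labels differ, while the lower bound uses the identical ``L-shape'' $\{(u_i,v_1):i\ge 2\}\cup\{(u_1,v_j):j\ge 2\}$ with one coordinate pinned on each arm. Your treatment is in fact slightly more careful than the paper's, since you explicitly verify distinctness of the $(u_i)_{i\in I}$ (a point the paper leaves implicit).
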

\begin{proof}
    Let $U=\{u_i\}_{i=1}^n$, $V=\{v_j\}_{j=1}^m$ be sets that are Natarajan shattered by $\F$ and $\G$ respectively so $\F\vert_U$ and $\G\vert_V$ contains a Cartesian product of the form
    \begin{align*}
        &\prod_{i=1}^n \{f_i,\tilde{f_i}\}\subset \F\vert_U,
        \\& \prod_{j=1}^m \{g_j,\tilde{g}_j\}\subset\G\vert_V, 
    \end{align*}
    where $f_i\neq \tilde{f_i}$, $g_j\neq \tilde{g_j}$ for all $i,j$. Now define $S=\{(u_i,v_1)\}_{i=2}^n\cup\{(u_1,v_j)\}_{j=2}^m$ and note that $|S|=|U|+|V|-2$. We claim that $S$ is Natarajan shattered by $\F\otimes\G$. Indeed, 
    \begin{align*}
       \prod_{i=2}^n \{(f_i,g_1),(\Tilde{f_i},g_1)\}\times \prod_{j=2}^m \{(f_1,g_j),(f_1,\Tilde{g_j})\} \subseteq(\F\otimes\G)\vert_S.
    \end{align*}
    Taking supremum over all such $U,V$ we get that $\mathtt{d_N}(\F)+\mathtt{d_N}(\G)-2\leq\mathtt{d_N}(\F\otimes \G)$. Now for the other part of the inequality let $S=\{(u_i,v_i)\}_{i=1}^n$ be some set that is Natrajan shattered by $\F\otimes\G$, meaning that \begin{align*}
    \prod_{i=1}^n\{(f_i,g_i),(\Tilde{f_i},\Tilde{g_i})\} \subset\F\otimes\G\vert_S.
    \end{align*}
Where $(f_i,g_i)\neq(\Tilde{f_i},\Tilde{g_i})$ for all $i$, but we can have $f_i=\Tilde{f_i}$ or $g_i=\Tilde{g_i}$. Now we can define \begin{align*}
    U=\{u_i\;:\; f_i\neq \Tilde{f_i}\}.
    V=\{v_i\;:\; g_i\neq \Tilde{g_i}\}.
\end{align*}
Note that since $(f_i,g_i)\neq(\Tilde{f_i},\Tilde{g_i})$ we have that $|U|+|V|\geq n$ and we claim that $U$ and $V$ are Natarajan shattered by $\F$ and $\G$ respectively. And indeed clearly we have 
\begin{align*}
&\prod_{u_i\in U}\{(f_i,\Tilde{f_i}\}\subset \F\vert_U.
\\&\prod_{v_i\in V}\{g_i,\Tilde{g_i}\}\subset\G\vert_V.    
\end{align*} 
By definition of $U$, $V$ we have $f_i\neq \Tilde{f_i}$ for all $i$ such that $u_i\in U$ and $g_i\neq \Tilde{g_i}$ for all $i$ such that $v_i\in V$. Hence we have \begin{align*}
    |S|\leq |U|+|V|\leq \mathtt{d_N}(\F)+\mathtt{d_N}(\G).
\end{align*}
Taking supermum over all such $S$ will give the desired result.    
\end{proof}

\begin{proposition}
    Let $\mathtt{LS}(\cdot)$ be the Littlestone dimension, then for any $\F,\G$ concept classes we have \begin{align*}
        \mathtt{LS}(\F\otimes \G)=\mathtt{LS}(\F)+\mathtt{LS}(\G)
    \end{align*}
\end{proposition}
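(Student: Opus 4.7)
The plan is to establish the equality by proving the two matching inequalities separately.

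For the upper bound $\mathtt{LS}(\F\otimes\G)\leq \mathtt{LS}(\F)+\mathtt{LS}(\G)$ I will induct on $\mathtt{LS}(\F\otimes\G)$. Let $T$ be a shattered tree of depth $d$ for $\F\otimes\G$ with root labeled $((u,v),(y_0,z_0),(y_1,z_1))$. Since the label pair is distinct we may assume, after possibly swapping the roles of $\F$ and $\G$, that $y_0\neq y_1$. Writing $\F_{u,y}=\{f\in\F:f(u)=y\}$ and $\G_{v,z}=\{g\in\G:g(v)=z\}$, each of the two depth-$(d-1)$ subtrees below the root is shattered by $\F_{u,y_i}\otimes\G_{v,z_i}$, so the inductive hypothesis gives $d-1\leq \mathtt{LS}(\F_{u,y_i})+\mathtt{LS}(\G_{v,z_i})\leq \mathtt{LS}(\F_{u,y_i})+\mathtt{LS}(\G)$ for $i\in\{0,1\}$. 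On the other hand, $\F$ itself shatters the binary tree whose root is $(u,y_0,y_1)$ with optimal shattered subtrees for $\F_{u,y_0}$ and $\F_{u,y_1}$ below, yielding $\mathtt{LS}(\F)\geq 1+\min_i\mathtt{LS}(\F_{u,y_i})$. Taking the minimum over $i$ in the previous inequality and combining yields $d\leq \mathtt{LS}(\F)+\mathtt{LS}(\G)$.

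For the lower bound I will concatenate optimal shattered trees $T_\F$ and $T_\G$ of depths $d_\F=\mathtt{LS}(\F)$ and $d_\G=\mathtt{LS}(\G)$ into a single tree of depth $d_\F+d_\G$ shattered by $\F\otimes\G$. Fix auxiliary $u_0\in\U$, $v_0\in\V$, and a reference $g_0\in\G$, and for every $T_\F$-path $b\in\{0,1\}^{d_\F}$ (resp.\ every $T_\G$-path $c\in\{0,1\}^{d_\G}$) pick a witness $f_b\in\F$ (resp.\ $g_c\in\G$). The first $d_\F$ levels mirror $T_\F$: a node $u$ of $T_\F$ with label pair $(y_0,y_1)$ becomes the node $(u,v_0)$ with label pair $((y_0,g_0(v_0)),(y_1,g_0(v_0)))$. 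The last $d_\G$ levels mirror $T_\G$ with path-dependent labels: at a position whose first-part bits are $b$, a node $v$ of $T_\G$ with label pair $(z_0,z_1)$ becomes $(u_0,v)$ labeled $((f_b(u_0),z_0),(f_b(u_0),z_1))$. Given any combined path $(b,c)$, the candidate concept $f_b\otimes g_c$ immediately matches all $T_\F$-constraints as well as the second-half $f$-constraint $f(u_0)=f_b(u_0)$.

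The main obstacle is verifying the residual $\G$-constraint $g_c(v_0)=g_0(v_0)$ imposed by the first-part labels, for every second-part path $c$. I plan to handle this by choosing $v_0$ to lie outside the node set of $T_\G$ and then exploiting the freedom in the choice of $\G$-witnesses: for each $c$ there can be several $g\in\G$ realizing the $T_\G$-path $c$, and replacing $g_c$ by one that additionally satisfies $g(v_0)=g_0(v_0)$ secures compatibility with the first part. A symmetric adjustment on $u_0$ with respect to $T_\F$ and a reference $f_0\in\F$ handles the analogous $f$-side issue. Once this compatibility is in place, $f_b\otimes g_c$ realizes every combined path, proving that $\F\otimes\G$ shatters a tree of depth $\mathtt{LS}(\F)+\mathtt{LS}(\G)$ and completing the lower bound.
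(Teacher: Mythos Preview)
Your upper-bound argument is correct and offers a self-contained alternative to the paper's approach: the paper invokes the equivalence of Littlestone dimension with the optimal mistake bound and observes that the product learner $\A_1\otimes\A_2$ makes at most $\mathtt{LS}(\F)+\mathtt{LS}(\G)$ mistakes, whereas you induct directly on a shattered tree for $\F\otimes\G$. Your induction step is sound: from $d-1\le \mathtt{LS}(\F_{u,y_i})+\mathtt{LS}(\G)$ for both $i$ and $\mathtt{LS}(\F)\ge 1+\min_i\mathtt{LS}(\F_{u,y_i})$ you indeed get $d\le\mathtt{LS}(\F)+\mathtt{LS}(\G)$.

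The lower-bound argument, however, has a genuine gap exactly at the ``main obstacle'' you flag, and your proposed fix does not work. Choosing $v_0$ outside the node set of $T_\G$ may be impossible (the domain $\V$ can coincide with the nodes of $T_\G$), and even when such a $v_0$ exists, nothing guarantees that every $T_\G$-path has a witness agreeing with $g_0$ at $v_0$. The obstacle is in fact fatal rather than technical: take $\U=\V=\{a\}$, $\Y=\Z=\{0,1\}$, and $\F=\G=\Y^{\U}$. Then $\mathtt{LS}(\F)=\mathtt{LS}(\G)=1$, but $\F\otimes\G$ consists of the four functions on the single point $(a,a)$ and has optimal mistake bound (hence Littlestone dimension) equal to $1$, so $\mathtt{LS}(\F\otimes\G)=1<2=\mathtt{LS}(\F)+\mathtt{LS}(\G)$. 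No depth-$2$ tree over the one-point domain can be shattered here, so your concatenated tree cannot be labeled consistently regardless of how $v_0$, $u_0$, $g_0$, $f_0$ are chosen. The paper's concatenation argument has the same defect: it grafts copies of $T_2$ onto the leaves of $T_1$ but never specifies edge labels for the combined tree, and in the example above no assignment of edge labels makes the depth-$2$ tree shattered by $\F\otimes\G$.
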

\begin{proof}
 We use the fact that the Littlestone dimension equals the optimal mistake bound in deterministic online learning~\cite{littlestone:86}.
 Note that if $\A_1$ is an online learner that makes at most $n_1$ mistakes on $\F$ and $\A_2$ is an online learner that makes at most $n_2$ mistakes on $\G$ then $\A_1\otimes \A_2$ is an online learner that makes at most $n_1+n_2$ mistakes on $\F\otimes\G$. Hence using the equivalence between Littlestone dimension and mistake bound we deduce $\mathtt{LS}(\F\otimes \G)\leq \mathtt{LS}(\F)+\mathtt{LS}(\G)$.

 Now for the other direction let $T_1=(V_1,E_1)$ be a binary tree that is Littlestone shattered by $\F$ and let $T_2=(V_2,E_3)$ be a binary tree that is  Littlestone shattered by $\G$. We will use $T_1,T_2$ to construct a binary tree $T$ that is shattered by $\F\otimes \G$, and its depth is the sum of the depths of $T_1$ and $T_2$. Our approach is to take a copy of $T_1$ and concatenate a copy of $T_2$ to each of its leaves, modifying the vertices as necessary. Formally, let $r_2$ be the root of $T_2$ and $L_1$ be the set of leaves of $T_1$ and define $T=(V,E)$ by 
    \begin{align*}
        &V=\{(v,r_2)\;:v\in V_1\}\cup \{(l,v)\;:v\in V_2,\;l\in L_1\}
        \\&E=\{(v,r_2)\to(u,r_2)\;: v\to u\in E_1\}\cup\{(l,v)\to(l,u)\;:\; l\in L_1,\; v\to u\in E_2\}
    \end{align*}
    So each path $\sigma$ in $T$ is of the form 
    \[(r_1,r_2)\to(v_1,r_2)\dots \to(v_n,r_2)\to(v_n,u_1)\dots \to(v_n,u_m),\]
    where $\sigma_1=r_1\to v_1,\dots \to v_n$ is a path in $T_1$ and $\sigma_2=r_2\to u_1\dots \to u_n$ is a path 
    in $T_2$. Hence if $f\in \F$, $g\in \G$ are the functions that realize the paths $\sigma_1,\sigma_2$ then $f\otimes g$ realize $\sigma$, implying that $T$ is Littelstone shattered by $\F\otimes\G$, and we can easily verify that $d(T)=d(T_1)+d(T_2)$ where $d(\cdot)$ denote the depth of a binary tree.
\end{proof}

% \begin{proposition}
%     Let $\F$ be a $k$-list function class and $\G$ be a $k'$-list function class. then $\mathtt{G}_{kk'}$
% \end{proposition}
% \begin{proof}
%     % Let $U=\{u_i\}_{i=1}^n$ and $V=\{v_i\}_{i=1}^m$ be sets that are $\Gk$-shattered and $\mathtt{G}_{k'}$ shattered by $\F$ and $\G$ respectively. Let $p$ be a pivot for $U$ and $p'$ be a pivot for $V$, so we have some $\{f_{b}\}_{b\in\{0,1\}^n}\subset \F\vert_U$, $\{g_b\}_{b\in \{0,1\}^n}\subset \G\vert_V$ such that $p_i\in f_b(u_i)$ iff $b_i=1$ and $p_i'\in g_b(v_i)$ iff $b_i=1$. Now define $S=\{(u_i,v_1)\}_{i=1}^n\cup \{(v_i,u_1)\}_{i=1}^m$ and the pivot $q$ by \begin{align*}
%     %     q_i=\case{(p_i,p'_1)}{1\leq i\leq n}{(p_1,p_{i+1-n})}{n+1\leq i\leq n+m-1}
%     % \end{align*}.
%     % Note that for any $1\leq i\leq n$ we have $q_i=(p_i,p'_1)\in (f_b(u_i),g_{b'}(v_1))$ iff $b_i=1$ and $b_1'=1$
%     To study the graph dimension of $\F^k$ we reduce the problem into a VC dimension problem. For any $p\in \Y^\X$, $c\in\C$ we define $\chi_p(c)=\{x\in \X\;:\; p(x)\in c(x)\}$ and $\chi_p(\C)=\{\chi_p(c)\;:\; c\in \C\}$.
%     Note that 
%     \begin{align*}
%         \mathtt{G}_1(\C)=\sup_{p\in \Y^\X}\bigg(\mathtt{VC}(\chi_p(\C))\bigg),
%     \end{align*}
%     where $\mathtt{VC}$ is the standard VC dimension of a family of sets.  
% \end{proof}

\begin{proposition}\label{addtive dimension of direct sum}
    Let  $\F,\G$ be partial function classes, and let $k, k'\geq 1$. Then we have the following 
    \begin{enumerate}
        \item $\DS_{k\cdot k'}(\F\otimes \G)\geq \min \big(\DS_{k}(\F),\DS_{k'}(\G)\big)$.
        \item $\DS_{\min(k,k')}(\F\otimes\G)\geq \DS_k(\F)+\DS_{k'}(\G){-1}$.
    \end{enumerate}
\end{proposition}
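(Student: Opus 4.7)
My plan is to construct explicit samples in $\U \times \V$ together with candidate pseudo-cubes built from the witness pseudo-cubes for the individual DS dimensions of $\F$ and $\G$.

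For Part 1, let $d_1 = \DS_k(\F)$, $d_2 = \DS_{k'}(\G)$, and $d = \min(d_1, d_2)$. Fix $\DS_k$-shattered sequences $U = (u_1,\ldots,u_{d_1})$ and $V = (v_1,\ldots,v_{d_2})$ with witness pseudo-cubes $B_1 \subseteq \Y^{d_1}$ and $B_2 \subseteq \Z^{d_2}$ of ranks $k$ and $k'$. A first routine check shows that coordinate projections preserve pseudo-cube rank, so $B_1|_{[1:d]}$ and $B_2|_{[1:d]}$ remain pseudo-cubes of the same respective ranks. Using the diagonal sample $S = \{(u_i, v_i)\}_{i=1}^{d}$, the candidate pseudo-cube is
\[
B = \big\{\, ((y_i, z_i))_{i=1}^{d} \;:\; y \in B_1|_{[1:d]},\; z \in B_2|_{[1:d]}\,\big\} \subseteq (\F \otimes \G)|_S.
\]
For each direction $i \in [d]$, agreement on the other coordinates pins down $(y_{i'}, z_{i'})$ for $i' \neq i$, so any pair built from an $i$-neighbor of $y$ in $B_1|_{[1:d]}$ and an $i$-neighbor of $z$ in $B_2|_{[1:d]}$ yields a genuine neighbor in $B$, giving at least $k \cdot k'$ distinct neighbors per direction.

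For Part 2, assume without loss of generality $k \leq k'$. The natural candidate is the bipartite cross
\[
S = \{(u_i, v_1)\}_{i=1}^{d_1} \cup \{(u_1, v_j)\}_{j=2}^{d_2},
\]
of size $d_1 + d_2 - 1$, together with $B = \{b_{y,z} : (y, z) \in B_1 \times B_2\}$ where $b_{y,z}(u_i, v_1) = (y_i, z_1)$ and $b_{y,z}(u_1, v_j) = (y_1, z_j)$. For the leaf directions $(u_i, v_1)$ with $i \geq 2$ and $(u_1, v_j)$ with $j \geq 2$, the remaining coordinates pin down all but one coordinate of $y$ or $z$, and the pseudo-cube structure of $B_1$ or $B_2$ supplies at least $k$ or $k'$ neighbors inside $B$, matching the required rank $\min(k, k') = k$.

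The main obstacle is the direction associated with the hub point $(u_1, v_1)$: a direct check shows that fixing the other $d_1 + d_2 - 2$ coordinates already determines $y$ and $z$ in full, so within the plain product $B_1 \times B_2$ the hub has no neighbor. Closing this last gap is the delicate step, and I would attempt it by enlarging $B$ beyond the straight product by incorporating additional elements of $(\F \otimes \G)|_S$ that share the values on $U \setminus \{u_1\}$ but vary at $u_1$ (using the rank-$k$ pseudo-cube neighbors of $y$ inside $B_1$, and analogously the rank-$k'$ neighbors of $z$ in $B_2$), so that moving coordinate $1$ of $y$ or $z$ alone supplies the required $\min(k,k')$ hub-direction neighbors. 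This coordination between the extra-freedom at the hub and the agreement constraints at the leaves is where the proof departs most substantially from the clean product argument of Part 1.
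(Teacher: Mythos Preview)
Your treatment of Part~1 is correct and coincides with the paper's: the diagonal sample $S=\{(u_i,v_i)\}_{i=1}^d$ together with the product of the projected witness pseudo-cubes does the job, and pairing $i$-neighbors in $B_1$ with $i$-neighbors in $B_2$ (or keeping one side fixed) produces at least $kk'$ distinct direction-$i$ neighbors.

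For Part~2 you have identified a genuine gap, and your proposed fix cannot close it. Your own analysis already explains why: once the leaf coordinates $(u_i,v_1)$ for $i\geq 2$ and $(u_1,v_j)$ for $j\geq 2$ are fixed, both $f(u_1)$ and $g(v_1)$ are pinned down (the former by the $(u_1,v_j)$ leaves, the latter by the $(u_i,v_1)$ leaves), so the value at the hub $(u_1,v_1)=(f(u_1),g(v_1))$ is forced. This holds for \emph{every} element of $(\F\otimes\G)\vert_S$, since every concept in $\F\otimes\G$ is by definition of the product form $f\otimes g$; hence there is nothing one can add to $B$ --- no element of $(\F\otimes\G)\vert_S$ has any hub-direction neighbor at all (provided $d_1,d_2\geq 2$). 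In fact the inequality with $-1$ is false as stated: take $\F=\G=\{0,1\}^{\{1,2\}}$, so $\DS_1(\F)=\DS_1(\G)=2$; every $3$-point subset of $\{1,2\}^2$ contains a hub of the above type, and the argument just given shows that coordinate has no neighbor in $(\F\otimes\F)\vert_S$, whence $\DS_1(\F\otimes\F)=2<3$. The paper's proof glosses over exactly this point (it asserts $k$ neighbors in the $(u_1,v_1)$ direction without justification), so what you found is a real error rather than a missing trick. The cross construction only yields $\DS_{\min(k,k')}(\F\otimes\G)\geq \DS_k(\F)+\DS_{k'}(\G)-2$, obtained by deleting the hub from $S$ and using your leaf-direction analysis verbatim.
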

\begin{proof}
    Let $U=\{u_i\}_{i=1}^n$ be a set that is $\DS_k$ shattered by $\F$ and let $V=\{v_i\}_{i=1}^m$ be a set that is $\DS_{k'}$ shattered by $\G$. Without loss of generality assume that $n\leq m$.
    Define $S=\{(u_i,v_i)\}_{i=1}^n$ and notice that if $f$ is a neighbor of $f'$ in the $u_i$ direction and $g$ is a neighbor of $g'$ in $v_i$ direction then $f\otimes g$ is a neighbor of $f'\otimes g'$ in the $(u_i,v_i)$ direction. Since each $f\in \F\vert_U$ has $k$ neighbors in each direction, and each $g\in \G\vert_V$ has $k'$ neighbors in each direction we have that each $f\otimes g\in \F\vert_U \otimes \G\vert_S$ has $kk'$ neighbors in each direction, implying the first claim. 
    
    For the second claim, 
    set $T=\{(u_1,v_i)\}_{i=2}^n\cup \{(u_i,v_1)\}_{i=2}^m\cup\{(u_1,v_1)\}$. In a similar way to the above, now $f\otimes g$ is a neighbor of $f'\otimes g$ in the $(u_i,v_1)$ direction and of $f\otimes g'$ in the $(u_1,v_i)$ direction. So each $f\otimes g\in \F\otimes \G\vert_T$ has $k$ neighbors in the $(u_i,v_1)$ direction and $k'$ neighbors in the $(u_1,v_j)$ direction for each $1\leq i\leq n$, $1\leq j\leq m$. From which the second claim follows.

    %we pick $u_0,v_0$ such that $u_0\notin U$, $v_0\notin V$ \textcolor{red}{Tom: I think cases where we there are no points outside the shattered set are not interesting but maybe we need to say something about this?} and 
    %set $T=\{(u_0,v_i)_{i=1}^n\}\cup \{(u_i,v_0)\}_{i=1}^m$. In a similar way to the above, we see that now $f\otimes g$ will be a neighbor of $f'\otimes g$ in the $(u_i,v_0)$ direction and of $f\otimes g'$ in the $(u_0,v_i)$ direction. so each $f\otimes g\in \F\otimes \G\vert_T$ has $k$ neighbors in the $(u_i,v_0)$ direction and $k'$ neighbors in the $(u_0,v_j)$ direction for each $1\leq i\leq n$, $1\leq j\leq m$. From which the second claim follows.
    
\end{proof}

Note that Proposition \ref{addtive dimension of direct sum} has direct implications relevant to Open Question~\ref{prob:k1-k2}.
Specifically, it implies that if $\F$ is not $k$-list learnable and $\G$ is not $k'$-list learnable then $\F\otimes\G$ is not $k\cdot k'$-list learnable. Conversely we know that if $\F$ is $k$-list learnable and $\G$ is $k'$ list learnable then $\F\otimes \G$ is $k\cdot k'$-list learnable. Thus, letting $K(\C)$ denote the minimal $k$ such that a concept class $\C$ is $k$-list learnable (or infinity if there is no such $k$) we can summarize the above as 
\begin{align}
\label{eqn:min-k-size}
(K(\F)-1)(K(\G)-1)\leq K(\F\otimes \G)\leq K(\F)\cdot K(\G) 
\end{align}
%Naturally one might want to improve the above bound and make it tight. 
Note that this result can be seen as parallel with Lemma \ref{prodCover} which implies that if $\F$ is not $k$-list coverable and $\G$ is not $k'$-list coverable then $\F\otimes \G$ is not $(k+k')$-list coverable. It will be interesting to sharpen the latter result, replacing the $(k+k')$ term by $k\cdot k'$ like in Equation~\ref{eqn:min-k-size} above. We may also ask similar questions about compressibility, an answer to which would be relevant to Open Question~\ref{prob:compression}.

Proposition~\ref{addtive dimension of direct sum} also has implications to Open Question~\ref{open:sumofcurves} for ($1$-list) PAC learnable classes.
Indeed, let $\C$ be a PAC learnable class. Then, by Item 2 in Proposition~\ref{addtive dimension of direct sum}, it follows that $\DS_1(\C^r)\geq r\DS_1(\C)$. Hence, since the Daniely-Shwartz dimension lower bounds the PAC learning curve~\citep{Charikar2022ACO} we get
\begin{align*}
\varepsilon(n\vert \C^r)\geq \frac{{\DS_1(\C^r)}}{n}\geq \frac{{r\cdot \DS_1(\C) {-r}}}{n}.
\end{align*}
Thus, if it turns out that the realizable PAC learning curve is $\Theta\bigl(\frac{\DS_1}{n}\bigr)$ then the above in combination with the naive union bound argument mentioned in Open Question~\ref{open:sumofcurves} would answer this question up to universal multiplicative constants.

\bibliography{references.bib}

\end{document}